\newcommand{\paragraphsmall}[1]{\textbf{{#1} $\;$}}
\DeclarePairedDelimiterX{\dotp}[2]{\langle}{\rangle}{#1, #2}
\newcommand{\kmax}{K}
\newcommand\BigO{\mathcal{O}}
\newcommand\Bigo{\BigO}
\DeclareMathOperator*{\argmax}{\arg\!\max}
\newcommand{\be}{\begin{equation}}
\newcommand{\ee}{\end{equation}}
\renewcommand{\AA}{{\mathcal{A}}}
\newcommand{\norm}[1]{\left\| #1\right\|}                               %
\newcommand{\abs}[1]        {\left| #1 \right|}
\renewcommand{\epsilon}{\varepsilon}
\DeclareMathOperator*{\E}{\mathbb{E} \,}
\DeclarePairedDelimiter{\ceil}{\lceil}{\rceil}
\declaretheorem[name=Theorem]{theorem}
\declaretheorem[name=Lemma, numberlike=theorem]{lemma}
\newcommand{\neuron}{r}
\newcommand{\edge}{j}
\newcommand{\epsilonLayer}[1][\ell]{\epsilon_{#1}}
\newcommand{\SampleComplexityDeltaLayers}[1][\epsilonLayer]{\ceil*{32  \, L^2 (\Delta_r^{\ell \rightarrow})^2 \, \kmax \log (8 \eta / \delta) \,  \epsilon^{-2} \, \sum_{\edge \in \Wpm} s_\edge }}
\newcommand{\Wpm}{\II}
\newcommand{\II}{\mathcal{I}}
\newcommand{\WWRowCon}[1][\neuron]{w}
\newcommand{\WWHatRowCon}[1][\neuron]{{\hat{w}}}
\newcommand{\alglinelabel}{%
  \addtocounter{ALC@line}{-1}
  \refstepcounter{ALC@line}
  \label
}
\newcommandx*{\pot}[2][1=x,2=x]{\phi_{a,b}(#1,#2)}
\newcommandx*{\pots}[2][1=x,2=x]{\phi(#1,#2)}
\newcommandx*{\w}[1][1=x]{w_{a,b}(#1)}
\newcommandx*{\ws}[1][1=x]{w(#1)}
\newcommandx*{\g}[1][1=x]{g_{a,b}(#1)}
\newcommandx*{\gPrime}[1][1=x]{g'_{a,b}(#1)}
\newcommandx*{\gDPrime}[1][1=x]{g''_{a,b}(#1)}
\newcommandx*{\h}[1][1=x]{h_{a,b}(#1)}
\newcommandx*{\hPrime}[1][1=x]{h'_{a,b}(#1)}
\newcommand{\CC}{\mathcal C}
\newcommand{\DD}{{\mathcal D}}
\newcommand{\XX}{\mathcal{X}}
\newcommand\Reals{\mathbb{R}}
\newcommand\PP{\mathcal{P}}
\renewcommand\SS{\mathcal{S}}
\newcommand\TT{\mathcal{T}}
\newcommand\VV{\mathcal{V}}
\setlist{leftmargin=12pt, labelwidth=1pt, topsep=0pt, partopsep=0pt}
\renewcommand{\paragraph}{\paragraphsmall}
\title{Low-Regret Active Learning}
\author{%
  Cenk Baykal \\
  MIT \\
  \texttt{baykal@mit.edu} \\
  \And
  Lucas Liebenwein \\
  MIT \\
  \texttt{lucasl@mit.edu} \\
  \And
  Oren Gal \\
  University of Haifa \\
  \texttt{orengal@alumni.technion.ac.il}
  \And
  Dan Feldman \\
  University of Haifa \\
  \texttt{dannyf.post@gmail.com}
  \And
  Daniela Rus \\
  MIT \\
  \texttt{rus@mit.edu} \\
}
\begin{document}

\dosecttoc
\faketableofcontents

\maketitle
\begin{abstract}
    We develop an online learning algorithm for identifying unlabeled data points that are most informative for training (i.e., active learning). By formulating the active learning problem as the prediction with sleeping experts problem, we provide a regret minimization framework for identifying relevant data with respect to any given definition of informativeness. Motivated by the successes of ensembles in active learning, we define regret with respect to an omnipotent algorithm that has access to an infinity large ensemble. At the core of our work is an efficient algorithm for sleeping experts that is tailored to achieve low regret on easy instances while remaining resilient to adversarial ones. Low regret implies that we can be provably competitive with an ensemble method \emph{without the computational burden of having to train an ensemble}. This stands in contrast to state-of-the-art active learning methods that are overwhelmingly based on greedy selection, and hence cannot ensure good performance across problem instances with high amounts of noise.
    We present empirical results demonstrating that our method
(i) instantiated with an informativeness measure consistently outperforms its greedy counterpart and (ii) reliably outperforms uniform sampling on real-world scenarios.
\end{abstract}


\section{Introduction}
\label{sec:introduction}

Modern neural networks have been highly successful in a wide variety of applications ranging from Computer Vision~\cite{feng2019computer} to Natural Language Processing~\cite{brown2020language}. However, these successes have come on the back of training large models on massive labeled data sets, which may be costly or even infeasible to obtain in other applications.   
For instance, applying deep networks to the task of cancer detection requires medical images that can only be labeled with the expertise of healthcare professionals, and a single accurate annotation may come at the cost of a biopsy on a patient~\cite{shen2019deep}.

\emph{Active learning} focuses on alleviating the high label-cost of  learning by only querying the labels of points that are deemed to be the most informative. The notion of informativeness is not concrete and may be defined in a task-specific way. Unsurprisingly, prior work in active learning has primarily focused on devising proxy metrics to appropriately quantify the informativeness of each data point in a tractable way. Examples include proxies based on model uncertainty~\cite{gal2017deep}, clustering~\cite{sener2017active,ash2019deep}, and margin proximity~\cite{ducoffe2018adversarial} (see~\cite{ren2020survey} for a detailed survey).

\begin{figure*}[ht!]
  \centering
  \begin{minipage}[t]{0.47\textwidth}
  \centering
 \includegraphics[width=1\textwidth]{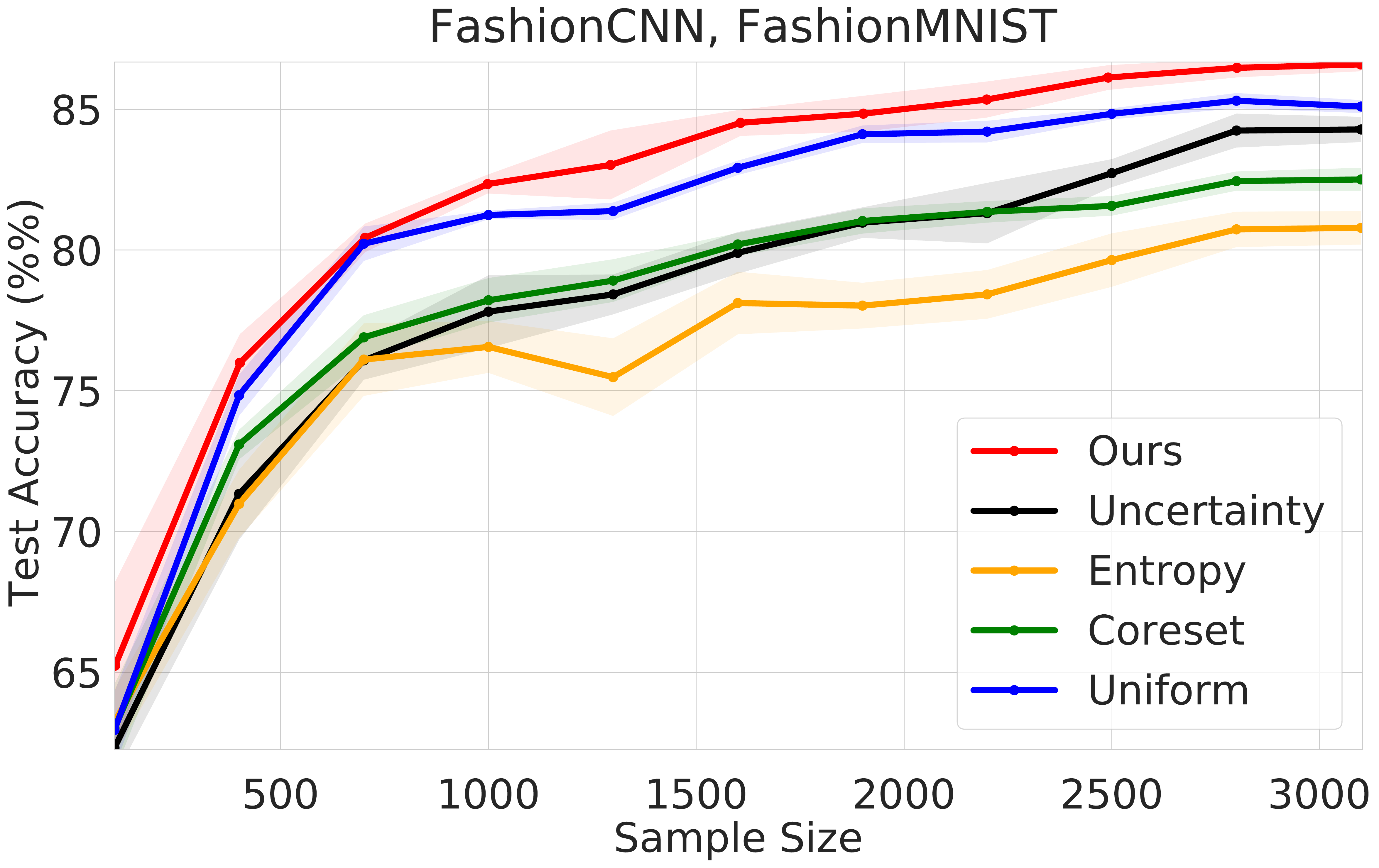}
  \end{minipage}%
  \hspace{3px}
  \begin{minipage}[t]{0.47\textwidth}
  \centering
  \includegraphics[width=1\textwidth]{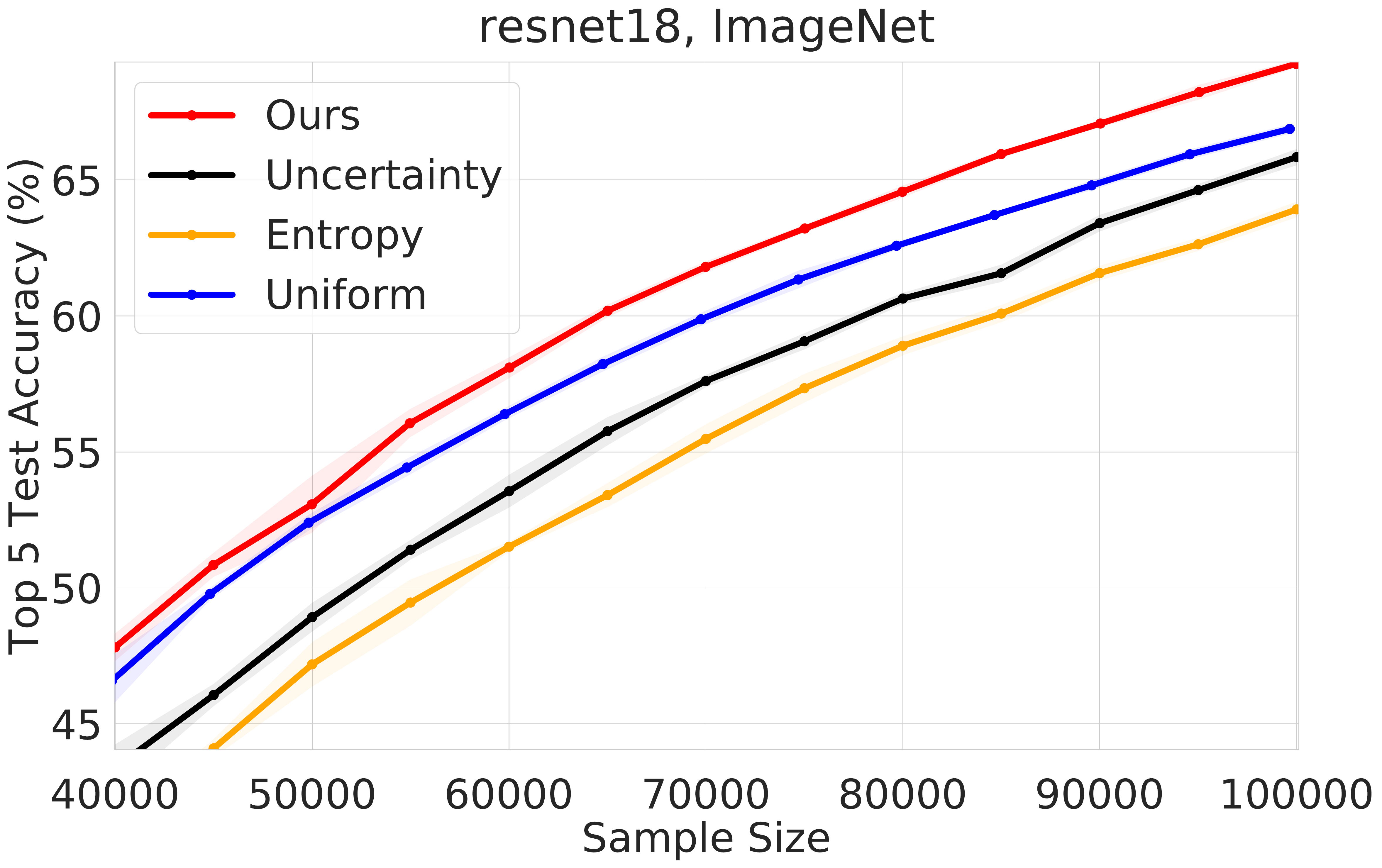}
  \end{minipage}%

\caption{Evaluations on FashionMNIST and ImageNet with benchmark active learning algorithms. Existing approaches based on greedy selection are not robust and may perform significantly worse than uniform sampling.}
\label{fig:greedy-fail}
\end{figure*}
An overwhelming majority of existing methods are based on greedy selection of the points that are ranked as most informative with respect to the proxy criterion. Despite the intuitiveness of this approach, it is known to be highly sensitive to outliers and to training noise, and observed to perform significantly worse than uniform sampling on certain tasks~\cite{ebrahimi2020minimax} -- as Fig.~\ref{fig:greedy-fail} also depicts. In fact, this shortcoming manifests itself even on reportedly redundant data sets, such as MNIST, where existing approaches can lead to models with up to 15\% (absolute terms) higher test error~\cite{muthakana2019uncertainty} than those obtained with uniform sampling. In sum, the general lack of robustness and reliability of prior (greedy) approaches impedes their widespread applicability to high-impact deep learning tasks.


In this paper, we propose a low-regret active learning framework and develop an algorithm that can be applied with any user-specified notion of informativeness. Our approach deviates from the standard greedy paradigm and instead formulates the active learning problem as that of learning with expert advice in an adversarial environment. Motivated by the widespread success of ensemble approaches in active learning~\cite{beluch2018power}, we define regret with respect to an oracle approach that has access to an infinitely large model ensemble. This oracle represents an omniscient algorithm that can completely smoothen out all training noise and compute the expected informativeness of data points over the randomness in the training. \emph{Low regret in this context (roughly) implies that we are robust to training noise and provably competitive with the performance of an ensemble, without the computational burden of having to train an ensemble of models.} 

Overall, our work aims to advance the development of efficient and robust active learning strategies that can be widely applied to modern deep learning tasks. In particular, we:
\begin{enumerate}
\itemsep0pt 
\topsep0pt
\parsep0pt
\partopsep=0pt
    \item Formulate active learning as a prediction with sleeping experts problem and develop an efficient, predictive algorithm for low-regret active learning,
    \item Establish an upper bound on the expected regret of our algorithm that scales with the difficulty of the problem instance,
    \item Compare and demonstrate the effectiveness of the presented method on a diverse set of benchmarks, and present its uniformly superior performance over competitors across varying scenarios.
\end{enumerate}








\section{Background \& Problem Formulation}
\label{sec:problem-definition}
We consider the setting where we are given a set of $n$ unlabeled data points $\PP \subset \XX^{n}$ from the input space $\XX \subset \Reals^d$. We assume that there is an oracle $\textsc{Oracle}$ that maps each point $x \in \PP$ to one of $k$ categories. Given a network architecture and sampling budget $b \in \mathbb{N}_+$, our goal is to generate a subset of points $\SS \subset \PP$ with $|\SS| = b$ such that training on $\{\left(x, \textsc{Oracle}(x)\right)_{x \in \SS} \}$ leads to the most accurate model $\theta$ among all other choices for a subset $\SS \subset \PP$ of size $b$. 

The iterative variant of acquisition procedure is shown as Alg.~\ref{alg:active-learning}, where \textsc{Acquire} is an active learning algorithm that identifies (by using $\theta_{t-1}$) $b_t$ unlabeled points to label at each iteration $t \in [T]$ and \textsc{Train} trains a model initialized with $\theta_{t-1}$ using the labeled set of points. We emphasize that prior work has overwhelmingly used the \textsc{Scratch} option (Line 6, Alg.~\ref{alg:active-learning}), which entails discarding the model information $\theta_{t-1}$ from the previous iteration and training a randomly initialized model from scratch on the set of labeled points acquired thus far, $\SS$.

\begin{algorithm}
\small
\caption{\textsc{ActiveLearning}}
\label{alg:active-learning}
\textbf{Input:} Set of points $\PP \subseteq \Reals^{d \times n}$, \textsc{Acquire:} an active learning algorithm for selecting labeled points
\begin{spacing}{1}
\begin{algorithmic}[1]
\small

\STATE $\SS \gets \emptyset$; \, $\theta_0 \gets \text{Randomly initialized network model}$;
\FOR{$t \in [T]=\{1, \ldots, T\}$} 
    \STATE $\SS_t \gets \textsc{Acquire}(\PP \setminus \SS, b_t, \theta_{t-1})$ \COMMENT{Get new batch of $b_t \in \mathbb{N}_+$ points to label using algorithm \textsc{Acquire}} \\
    \STATE $\SS \gets \SS \cup \SS_t$ \COMMENT{Add new  points} \\
    \STATE (if \textsc{Scratch} option) $\theta_{t-1} \gets \text{Randomly initialized network}$
    \STATE $\theta_t \gets \textsc{Train}(\theta_{t-1}, \{\left(x, \textsc{Oracle}(x)\right)_{x \in \SS} \})$ \COMMENT{Train network on the labeled samples thus far} \\ 

\ENDFOR 
\STATE \textbf{return} $\theta_T$

\end{algorithmic}
\end{spacing}
\end{algorithm}

\paragraph{Active Learning}
Consider an \emph{informativeness function} $g: \XX \times \Theta \to [0,1]$ that quantifies the informativeness of each point $x \in \XX$ with respect to the model $\theta \in \Theta$, where $\Theta$ is the set of all possible parameters for the given architecture. An example of the gain function is the maximum variation ratio (also called the \emph{uncertainty} metric) defined as $ g(x, \theta) = 1 - \max_{i \in [k]} f_\theta(x)_i$, where $f_{\theta}(x) \in \Reals^k$ is the softmax output of the model $\theta$ given input $x$. As examples, the \emph{gain} $g(x,\theta)$ of point $x$ is 0 if the network is absolutely certain about the label of $x$ and $1 - 1/k$ when the network's prediction is uniform. In  the context of Alg.~\ref{alg:active-learning},
prior work on active learning~\cite{muthakana2019uncertainty,geifman2017deep,gal2017deep,sener2017active} has generally focused on greedy acquisition strategies (\textsc{Acquire} in Alg.~\ref{alg:active-learning}) that rank the remaining unlabeled points by their informativeness $g(x,\theta_{t-1})$ \emph{as a function of the model $\theta_{t-1}$}, and pick the top $b_t$ points to label.


\subsection{\textsc{Greedy}'s Shortcoming \& Ensembles}
As observed in prior work~\cite{ebrahimi2020minimax,muthakana2019uncertainty} and seen in our evaluations, e.g., Fig.~\ref{fig:greedy-fail}, greedy approaches may perform significantly worse than naive uniform sampling.  To understand why this could be happening, note that at iteration $t \in [T]$ the greedy approach makes a judgment about the informativeness of each point using only the model $\theta_{t-1}$ (Line~4 of Alg.~\ref{alg:active-learning}). However, in the deep learning setting where stochastic elements such as random initialization, stochastic optimization, (randomized) data augmentation, and dropout are commonly present, $\theta_{t-1}$ is itself a random variable with non-negligible variance. This means that, for example, we could get unlucky with our training and obtain a deceptive model $\theta_{t-1}$ (e.g., training diverged) that assigns high gains (informativeness) to points that may not truly be helpful towards training a better model. Nevertheless, \textsc{Greedy} would still base the entirety of the decision making solely on $\theta_{t-1}$ and blindly pick the top-$b_t$ points ranked using $\theta_{t-1}$, leading to a misguided selection. This also applies to greedy clustering, e.g., \textsc{Coreset}~\cite{sener2017active}, \textsc{Badge}~\cite{ash2019deep}.

Relative to \textsc{Greedy}, the advantage of ensemble methods is that they are able to smoothen out the training noise and select points with \emph{high expected informativeness} over the randomness in the training. In other words, rather than greedily choosing the points with high informativeness $g(x, \theta_{t-1})$ with respect to a single model, ensembles can be viewed as selecting points with respect to a finite-sample approximation of $\E_{\theta_{t-1}}[ g(x, \theta_{t-1})]$ by considering the informativeness over multiple trained models. 

\subsection{Active Learning as Prediction with Expert Advice}

Roughly, we consider our active learning objective to be the selection of points with maximum \emph{expected} informativeness $\E_{\theta_{t-1}}[ g(x, \theta_{t-1})]$ over the course of $T$ active learning iterations. By doing so, we aim to smoothen out the training noise in evaluating the informativeness as ensembles do, but \emph{in an efficient way by training only a single model} at a time as in Alg.~\ref{alg:active-learning}. For sake of simplicity, we present the problem formulation for the case of sampling a single data point at each iteration rather than a batch. The generalization of this problem to batch sampling and the corresponding algorithm and analysis are in Sec.~\ref{sec:batch-sampling} and the Appendix (Sec.~\ref{sec:supp-method}).

\paragraph{Notation} To formalize this objective, we let $g_{t,i}(\theta_{t-1})$ denote the gain of the $i^\text{th}$ point in $\PP$ in round $t \in [T]$ with respect to $\theta_{t-1}$. Rather than seeking to optimize gains, we follow standard convention in online learning~\cite{orabona2019modern} and consider the minimization of losses $\ell_{t,i}(\theta_{t-1}) = 1 - g_{t,i}(\theta_{t-1}) \in [0,1]$ instead. We let $\xi_0, \ldots, \xi_T$ denote the training noise at each round $t$. For any given realization of a subset of points $\mathcal S \subset \mathcal P$ and noise $\xi$, define $\mathcal M(\mathcal S, \xi)$ to be the deterministic function that returns the trained model $\theta$ on the set $\mathcal S$. In the context of Alg.~\ref{alg:active-learning}, the model $\theta_{t}$ at each iteration is given by
$
\theta_{t} = \mathcal M \left(\cup_{\tau = 1}^{t} \mathcal S_\tau, \xi_t\right),
$
The loss at each time step can now be defined more rigorously as
$
\ell_t(\theta_{t-1}) = \ell_t \left(\mathcal M(\cup_{\tau = 1}^{t-1} \mathcal S_\tau, \xi_{t-1})\right).
$
We will frequently abbreviate the loss vector at round $t$ as $\ell_t(\xi_{t-1})$ to emphasize the randomness over the training noise or simply as $\ell_t$ with the understanding that it is a function of the random variables $\SS_1, \ldots, \SS_{t-1}$ and $\xi_{t-1}$.

Since the expectation cannot be computed exactly a priori knowledge about the problem, we turn to an online learning strategy and aim to minimize the \emph{regret} over $T$ active learning iterations. More specifically, we follow the learning with prediction advice formulation where each data point (regardless of whether it has already been labeled) is considered an expert. The main idea is to pick the most informative data points, or experts in this context. At each iteration $t$, rather than picking a points to label in a deterministic way as do most greedy strategies, we propose using a probability distribution $p_t \in \Delta_t$ to sample the points instead, where $\Delta = \{p \in [0,1]^n : \sum_{j=1}^n p_j = 1\}$. 
For the filtration $\mathcal F_t = \sigma(\xi_0, \mathcal S_1, \ldots, \xi_{t-1}, \mathcal S_t)$ with $|\SS_t| = 1$ for all $t$, note that the conditional expected loss at each iteration is $\mathbb E_{\mathcal S_t, \xi_{t-1}} [ \ell_{t, \mathcal S_t}(\xi_{t-1}) | \mathcal F_{t-1}] = \dotp{p_t}{\E \nolimits_{\xi_{t-1}}[\ell_{t}(\xi_{t-1}) ]}$ since $p_t$ is independent of $\xi_{t-1}$.

Under this setting a natural first attempt at a formulation of regret is to define it as in the problem of learning with expert advice. To this end, we define the instantaneous regret $r_{t,i}$ to measure the expected loss under the sampling distribution $p_t$ relative to that of picking the $i^\text{th}$ point for a given realization of $\ell_t$, i.e.,
$$
r_{t,i} = \cancel{\dotp{p_t}{\ell_{t}(\xi_{t-1})} - \ell_{t, i}(\xi_{t-1})}.
$$
However, the sampling method and the definition of regret above are not well-suited for the problem of active learning because (i) $p_t$ may sample points that have already been labeled and (ii) the instantaneous regret for those points that are already sampled should be 0 so that we can define appropriately define regret over the unlabeled data points.




\paragraph{Sleeping Experts and Dynamic Regret} To resolve these challenges and ensure that we only sample from the pool of unlabeled data points, we generalize the prior formulation to one with \emph{sleeping experts}~\cite{saha2020improved,luo2015achieving,gaillard2014second,kleinberg2010regret}. More concretely, let $\II_{t,i} \in \{0, 1\}$ denote whether expert $i \in [n]$ is sleeping in round $t$. The sleeping expert problem imposes the constraint that
$
\II_{t,i} = 0 \Rightarrow p_{t,i} = 0.
$
For the data acquisition setting, we define
$
\II_{t,i} = \mathbbm{1} \{\text{$x_i$  has not yet been labeled}\},
$ so that we do not sample already-labeled points. Then, the definition of instantenous regret becomes
$$
r_{t,i} = (\dotp{p_t}{\ell_{t}(\xi_{t-1})} - \ell_{t, i}(\xi_{t-1})) \II_{t,i},
$$
and the regret over $T$ iterations with respect to a competing sequence of samples $i_1^*, \ldots, i_T^* \in [n]$ is defined as
\begin{align}
\mathcal{R}(i_{1:T}^*) = \sum_{t=1}^T \E \nolimits_{\xi_{t-1}}[r_{t,i_t^*}] = \sum_{t=1}^T \left(\dotp{p_t}{\E \nolimits_{\xi_{t-1}} [\ell_{t}(\xi_{t-1})]} - \E \nolimits_{\xi_{t-1}}[\ell_{t, i_t^*}(\xi_{t-1})]\right) \II_{t,i_t^*}
\end{align}
 Our overarching goal is to minimize the maximum expected regret, which is at times referred to as the dynamic pseudoregret~\cite{wei2017tracking}, relative to any sequence of samples $i_1, \ldots, i_T \in [n]$
$$
\max_{(i_t)_{t \in [T]}} \E[\mathcal{R}(i_{1:T})].
$$


\section{Method}
\label{sec:method}
In this section we motivate and present Alg. ~\ref{alg:adaprod}, an efficient online learning algorithm with instance-dependent guarantees that performs well on predictable sequences while remaining resilient to adversarial ones. Additional implementation details are outlined in Sec.~\ref{sec:supp-add-eval} of the supplementary.

\subsection{Background}
Algorithms for the prediction with sleeping experts problem have been extensively studied in literature~\cite{gaillard2014second,luo2015achieving,saha2020improved,kleinberg2010regret,shayestehmanesh2019dying, koolen2015second}. These algorithms enjoy strong guarantees in the adversarial setting; however, they suffer from (i) sub-optimal regret bounds in predictable settings and/or (ii) high computational complexity. Our approach hinges on the observation that the active learning setting may not always be adversarial in practice, and if this is the case, we should be competitive with greedy approaches. For example, we may expect the informativeness of the points to resemble a predictable sequence plus random noise which models the random components of the training (see Sec.~\ref{sec:problem-definition}) at each time step. This (potential) predictability in the corresponding losses motivates an algorithm that can leverage predictions about the loss for the next time step to achieve lower regret by being more aggressive -- akin to \textsc{Greedy} -- when the losses do not vary significantly over time. 





\subsection{AdaProd$^+$}
\label{subsec:adaprod}
To this end, we extend the Optimistic Adapt-ML-Prod algorithm~\cite{wei2017tracking} (henceforth, OAMLProd) to the active learning setting with batched plays where the set of experts (unlabeled data points) is changing and/or unknown in advance. Optimistic online learning algorithms are capable of incorporating predictions $\hat \ell _{t+1}$ for the loss in the next round $\ell_{t+1}$ and guaranteeing regret as a function of the predictions' accuracy, i.e., as a function of $\sum_{t=1}^T ||\ell_t - \hat \ell_t||_\infty^2$. Although we could have attempted to extend other optimistic approaches~\cite{steinhardt2014adaptivity,orabona2019modern,mohri2015accelerating,rakhlin2013online}, the work of~\cite{wei2017tracking} ensures -- to the best of our knowledge -- the smallest regret in predictable environments when compared to related approaches.

\begin{algorithm}
\caption{\textsc{AdaProd$^+$}}
\label{alg:adaprod}
\begin{spacing}{1}
\begin{algorithmic}[1]
\STATE For all $i \in [n]$, initialize $R_{1,i} \gets 0$; \, $C_{1,i} \gets 0$; \, $\eta_{0,(1,i)} \gets \sqrt{\log n}$;\,  $w_{0,(1,i)} = 1$; \, $\hat r_{1, i} = 0$;

\FOR{each round $t \in [T]$} 
    \STATE $\AA_t \gets \{ i \in [n]: \II_{t,i} = 1 \}$ \COMMENT{Set of awake experts, i.e., set of unlabeled data points} 
    
    \STATE $p_{t,i} \gets \sum_{s \in [t]} \eta_{t-1, (s,i)} w_{t-1, (s,i)} \exp(\eta_{t-1, (s,i)} \, \hat r_{t,i} )$ \, \, for each $i \in \AA_t$
    \STATE $p_{t,i} \gets \nicefrac{p_{t,i}}{ \sum_{j \in \AA_t} p_{t,j}}$ for each $i \in \AA_t$ \COMMENT{Normalize} 
    \STATE \text{Adversary reveals $\ell_t$ and we suffer loss $\tilde \ell_t = \dotp{\ell_t}{p_t}$} \\
    \STATE For all $i \in \AA_t$, $r_{t,i} \gets \tilde \ell_t - \ell_{t,i}$ and $C_{t,i} \gets 0$
    \STATE For all $i \in \AA_t$ and $s \in [t]$, set $C_{s,i} \gets C_{s,i} + (\hat r_{t,i} - r_{t,i})^2$
    \STATE Get prediction $\hat r_{t+1} \in [-1,1]^{n}$ for next round (see Sec.~\ref{subsec:adaprod})
    \STATE For all $i \in \AA_t$, set $w_{t-1,(t,i)} \gets 1$, $\eta_{t-1, (t,i)} \gets  \sqrt{\log n}$, and for all $s \in [t]$, set
    \begin{align*}
    \eta_{t, (s, i)} &\gets \min \left \{\eta_{t-1, (s, i)}, \frac{2}{3 (1 + \hat r_{t+1, i})}, \sqrt{\frac{2 \log(n)}{C_{s,i}}} \right \} \quad \text{and} \\
    w_{t,(s,i)} &\gets \left (w_{t-1, (s,i)} \exp \left(\eta_{t-1, (s,i)} \, r_{t,i} - \eta^2_{t-1, (s,i)} (r_{t,i} - \hat r_{t,i})^2 \right) \right) ^{\eta_{t, (s,i)} / \eta_{t-1,(s,i)}} 
    \end{align*}
\ENDFOR 
\end{algorithmic}
\end{spacing}
\end{algorithm}

Our algorithm \textsc{AdaProd$^+$} is shown as Alg.~\ref{alg:adaprod}. Besides its improved computational efficiency relative to OAMLProd in the active learning setting, \textsc{AdaProd$^+$} is also the result of a tightened analysis that leads to significant practical improvements over OAMLProd as shown in Fig.~\ref{fig:comp-experts} of Sec.~\ref{sec:comparison}. Our insight is that our predictions can be leveraged to improve practical performance by allowing larger learning rates to be used without sacrificing theoretical guarantees (Line 10 of Alg.~\ref{alg:adaprod}). Empirical comparisons with Adapt-ML-Prod and other state-of-the-art algorithms can be found in Sec.~\ref{sec:comparison}.

\paragraph{Generating Predictions}
Our approach can be used with general predictors $\hat \ell_t$ for the true loss $\ell_t$ at round $t$, however, to obtain bounds in terms of the temporal variation in the losses, we use the most recently observed loss as our prediction for the next round, i.e., $\hat \ell_t = \ell_{t-1}$. A subtle issue is that our algorithm requires a prediction 
 $\hat r_{t} \in [-1,1]^n$ for the instantaneous regret at round $t$, i.e., $\hat r_{t} = \dotp{p_t}{\hat \ell_{t}} - \hat \ell_t$, which is not available since $p_t$ is a function of $r_t$. To achieve this, we follow~\cite{wei2017tracking} and define the mapping $\hat r_t:  \alpha \mapsto (\alpha - \hat \ell_t) \in [-1,1]^n$ and perform a binary search over the update rule in Lines 4-5 of Alg.~\ref{alg:adaprod} so that $\alpha \in [0,1]$ is such that $\alpha = \dotp{p_t(\hat r_t(\alpha))}{\hat \ell_t}$, where $p_t(\hat r_t(\alpha))$ is the distribution obtained when $\hat r_t(\alpha)$ is used as the optimistic prediction in Lines 4-5. The existence of such an $\alpha$ follows by applying the intermediate value theorem to the continuous update.

\subsection{Back to Active Learning}
\label{sec:back-to-al}
To unify \textsc{AdaProd$^+$} with Alg.~\ref{alg:active-learning}, observe that we can define the \textsc{Acquire} function to be a procedure that at time step $t$ first samples a point by sampling with respect to probabilities $p_t$, obtains the (user-specified) losses $\ell_t$ with respect to the model $\theta_{t-1}$, and passes them to Alg.~\ref{alg:adaprod} as if they were obtained from the adversary~(Line 6). This generates an updated probability distribution $p_{t+1}$ and we iterate.

To generalize this approach to sampling a batch of $b_t$ points, we build on ideas from~\cite{uchiya2010algorithms}. Here,  we provide an outline of this procedure; the full details along with the code are provided in the supplementary (Sec.~\ref{sec:supp-method}). At time $t$, we apply a capping algorithm~\cite{uchiya2010algorithms} to the probability $p_t$ generated by \textsc{AdaProd$^+$} -- which takes $\Bigo(n_t \log n_t)$ time, where $n_t \leq n$ is the number of remaining unlabeled points at iteration $t$ -- to obtain a modified distribution $\tilde p_t$ satisfying $\max_{i} \tilde p_{t,i} \leq \nicefrac{1}{b_t}$. This projection to the capped simplex ensures that the scaled version of $\tilde p_t$, $\hat p_{t,i} = b_t \tilde p_{t,i}$, satisfies $\hat p_{t,i} \in [0,1]$ and $\sum_{j} \hat p_{t,j} = b_t$. Now the challenge is to sample exactly $b_t$ distinct points according to probability $\hat p_t$. To achieve this, we use a dependent randomized rounding scheme~\cite{gandhi2006dependent} (Alg.~\ref{alg:dep-round} in supplementary) that runs in $\Bigo(n_t)$ time. The overall computational overhead of batch sampling is $\Bigo(n_t \log n_t)$.


\subsection{Flexibility via Proprietary Loss}
\label{sec:our-loss}
We end this section by underscoring the generality of our approach, which can be applied \emph{off-the-shelf with any definition of informativeness measure that defines the loss} $\ell \in [0,1]^n$, i.e., 1 - $\text{informativeness}$. For example, our framework can be applied with the \emph{uncertainty metric} as defined in Sec.~\ref{sec:problem-definition} by defining the losses to be $\ell_{t,i} = \max_{j \in [k]} f_{\theta_{t-1}}(x_i)_j$. As we show in Sec.~\ref{sec:boosting}, we can also use other popular notions of informativeness such as Entropy~\cite{ren2020survey} and the BALD metrics~\cite{gal2017deep} to obtain improved results \emph{relative to greedy selection}. This flexibility means that our approach can always be instantiated with any state-of-the-art notion of informativeness, and consequently, can scale with future advances in appropriate notions of informativeness widely studied in literature.
\section{Analysis}
\label{sec:analysis}

In this section, we present the theoretical guarantees of our algorithm in the learning with sleeping experts setting. Our main result is an instance-dependent bound on the dynamic regret of our approach in the active learning setting. We focus on the key ideas in this section and refer the reader to the Sec.~\ref{sec:supp-analysis} of the supplementary for the full proofs and generalization to the batch setting. 

The main idea of our analysis is to show that $\textsc{AdaProd$^+$}$ (Alg.~\ref{alg:adaprod}), which builds on Optimistic Adapt-ML-Prod~\cite{wei2017tracking}, retains the adaptive regret guarantees of the time-varying variant of their algorithm without having to know the number of experts a priori~\cite{wei2017tracking}. Inspired by AdaNormalHedge~\cite{luo2015achieving}, we show that our algorithm can efficiently ensure adaptive regret by keeping track of $\sum_{t \in 1}^T n_t \leq n t$ experts at time step $t$, where $n_t$ denotes the number of unlabeled points remaining, $n_t = \sum_{i=1}^n \II_{t,i}$, rather than $n t$ experts as in prior work. This leads to efficient updates and applicability to the active learning setting where the set of unlabeled points remaining (experts) significantly shrinks over time.

Our second contribution is an improved learning rate schedule (Line~10 of Alg.~\ref{alg:adaprod}) that arises from a tightened analysis that enables us to get away with strictly larger learning rates without sacrificing any of the theoretical guarantees. For comparison, the learning rate schedule of~\cite{wei2017tracking} would be $\eta_{t,(s,i)} = \min \{1/4, \sqrt{2 \log(n)/ (1 + C_{s,i})} \}$ in the context of Alg.~\ref{alg:adaprod}. It turns out that the dampening factor of $1$ from the denominator can be removed, and the upper bound of $1/4$ is overly-conservative and can instead be replaced by $\min\{\eta_{t-1,(s,i)}, 2/\left(3 (1 + \hat r_{t+1, i})\right)\}$. This means that we can leverage the predictions at round $t$ to set the threshold in a more informed way. Although this change does not improve (or change) the worst-case regret bound asymptotically, our results in Sec.~\ref{sec:results} (see Fig.~\ref{fig:comp-experts}) show that it translates to significant practical improvements in the active learning setting.

\subsection{Point Sampling}
We highlight the two main results here and refer to the supplementary for the full analysis and proofs. The lemma below bounds the \emph{adaptive regret} of \textsc{AdaProd$^+$}, which concerns the cumulative regret over a time interval $[t_1, t_2]$, with respect to $C_{t_2,(t_1, i)} = \sum_{t = {t_1}}^{t_2} (r_{t,i} - \hat r_{t,i})^2$.

\begin{restatable}[Adaptive Regret of \textsc{AdaProd$^+$}]{lemma}{adaprodadaptiveregret}
\label{lem:adaptive-regret}
For any $t_1 \leq t_2$ and $i \in [n]$, Alg.~\ref{alg:adaprod} ensures that
$$
\sum_{t=t_1}^{t_2} r_{t,i} \leq \Bigo \left( \log n + \log\log n + (\sqrt{\log n} + \log \log n) \sqrt{C_{t_2,(t_1,i)} } \right),
$$
where $C_{t_2,(t_1, i)} = \sum_{t = {t_1}}^{t_2} (r_{t,i} - \hat r_{t,i})^2$ and $r_{t,i} = (\dotp{\ell_t}{p_t} - \ell_{t,i}) \II_{t,i}$ is the instantaneous regret of $i\in [n]$ at time $t$ and $\hat r_{t,i} = (\dotp{\hat \ell_t}{p_t} - \hat \ell_{t,i}) \II_{t,i}$ is the predicted instantenous regret as a function of the optimistic loss vector $\hat \ell_{t}$ prediction.
\end{restatable}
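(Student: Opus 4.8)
The plan is to follow the potential-function template of Adapt-ML-Prod~\cite{gaillard2014second} and its optimistic refinement~\cite{wei2017tracking}, but carried out for the single \emph{instance} $(t_1,i)$ of expert $i$ that is instantiated at the start of the interval (Line~10 sets $w_{t_1-1,(t_1,i)}=1$ and $\eta_{t_1-1,(t_1,i)}=\sqrt{\log n}$ whenever $i\in\AA_{t_1}$). Concretely, I would track the potential $\Phi_t = \sum_{(s,j)} \eta_{t,(s,j)}\,w_{t,(s,j)}$, summed over all instances alive by round $t$, and show that round to round $\Phi_t$ never grows beyond the mass injected by freshly-initialized experts, i.e. $\Phi_t \le \Phi_{t-1} + \sqrt{\log n}\,|\AA_t| + (\text{learning-rate corrections})$. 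Summing over $t\in[t_1,t_2]$ and lower bounding $\Phi_{t_2} \ge \eta_{t_2,(t_1,i)}\,w_{t_2,(t_1,i)}$ isolates the cumulative performance of the target instance; taking logarithms and substituting the schedule of Line~10 then yields the claimed bound.

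The heart of the argument is the single-round inequality, which I would establish in three moves. First, since the rates are non-increasing, $\alpha \eqdef \eta_{t,(s,j)}/\eta_{t-1,(s,j)} \in (0,1]$, and the convexity bound $X^{\alpha} \le \alpha X + (1-\alpha)$ linearizes the exponentiated update of Line~10, reducing matters to the inner factor $\exp\!\big(\eta\, r_{t,j} - \eta^2 (r_{t,j}-\hat r_{t,j})^2\big)$ with $\eta=\eta_{t-1,(s,j)}$ (at the cost of additive correction terms $\eta_{t,(s,j)}\big(1-\eta_{t,(s,j)}/\eta_{t-1,(s,j)}\big)$, which I expect to be the origin of the $\log\log n$ term). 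Second, I would write this factor as $\exp(\eta\,\hat r_{t,j})\cdot \exp\!\big(\eta(r_{t,j}-\hat r_{t,j}) - \eta^2(r_{t,j}-\hat r_{t,j})^2\big)$ and apply the ``prod'' inequality $\exp(x-x^2)\le 1+x$, valid for $x\ge -1/2$; the cap $\eta_{t-1,(s,j)} \le 2/\big(3(1+\hat r_{t,j})\big)$ from Line~10 is exactly what forces $x=\eta(r_{t,j}-\hat r_{t,j}) \ge -1/2$ for every admissible $r_{t,j}\in[-1,1]$, which is where the tightened schedule earns its keep. Third, and most importantly, the factored-out weight $\exp(\eta\,\hat r_{t,j})$ is precisely the quantity defining the sampling distribution $p_t$ in Lines~4--5, so after normalizing by $\sum_{j\in\AA_t}(\cdot)$ the first-order terms cancel by $\dotp{p_t}{r_t}=0$ and $\dotp{p_t}{\hat r_t}=0$ (both following from $r_{t,j}=\tilde\ell_t-\ell_{t,j}$, $\hat r_{t,j}=\dotp{p_t}{\hat\ell_t}-\hat\ell_{t,j}$ for $j\in\AA_t$, and $\tilde\ell_t=\dotp{p_t}{\ell_t}$). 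What survives is non-positive, delivering the per-round contraction up to the additive born mass.

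With the telescoping in hand, the final step extracts $\sum_{t=t_1}^{t_2} r_{t,i}$ from $\Phi_{t_2} \ge \eta_{t_2,(t_1,i)} w_{t_2,(t_1,i)}$. Unrolling the recursion for the single instance $(t_1,i)$ (the learning-rate-ratio exponents telescope through the product) gives, after taking logs,
\[
\eta_{t_2,(t_1,i)} \sum_{t=t_1}^{t_2} r_{t,i} - \eta_{t_2,(t_1,i)}^2\, C_{t_2,(t_1,i)} \;\lesssim\; \log \Phi_{t_2} \;=\; \Bigo(\log n + \log\log n),
\]
where the right-hand side is $\Bigo(\log n + \log\log n)$ because $\Phi_{t_2}$ is polynomial in $n$ and the number of instantiated experts. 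Substituting $\eta_{t_2,(t_1,i)} = \min\{\sqrt{\log n},\, 2/(3(1+\hat r_{t_2+1,i})),\, \sqrt{2\log n / C_{t_2,(t_1,i)}}\}$ and treating the three regimes of the minimum separately (the first two contributing the additive $\log n$ and $\log\log n$ terms, the last contributing the dominant $\sqrt{\log n\, C_{t_2,(t_1,i)}}$ and $\log\log n\,\sqrt{C_{t_2,(t_1,i)}}$ terms) produces the stated bound.

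The main obstacle I anticipate is the bookkeeping in the single-round potential bound under the sleeping/time-varying structure: one must verify that the normalizer of $p_t$ ranges only over the awake set $\AA_t$ while $\Phi_t$ also carries sleeping instances, and that freshly-born instances and the learning-rate corrections are charged so that their total stays polynomial (hence only logarithmic after taking logs). Equally delicate is confirming that the relaxed cap $2/\big(3(1+\hat r_{t+1,i})\big)$ still upholds the $x\ge -1/2$ precondition of the prod inequality in the \emph{subsequent} round (where $\eta_{t,(s,i)}$ is reused as $\eta_{(t+1)-1,(s,i)}$), since this is exactly the point at which the tightened schedule must be shown not to break the guarantee.
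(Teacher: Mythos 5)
Your overall template is the same as the paper's: the paper proves a static bound for a base version of the algorithm with $K$ experts (potential function, prod inequality with the optimistic weight factored out, cancellation of the first-order term because $\sum_i p_{t,i} r_{t,i} = 0$, Young's-inequality power trick for the changing learning rates) and then invokes the sleeping-experts reduction with $K = nT$ instances $(s,i)$ and the observation $\log(nT) \le 2\log n$; your direct potential over dynamically born instances is this two-step argument folded into one. However, there is a genuine gap at the heart of your second move. You invoke the prod inequality $\exp(x - x^2) \le 1 + x$ with the classical validity range $x \ge -1/2$ and assert that the cap $\eta_{t-1,(s,i)} \le 2/\bigl(3(1+\hat r_{t,i})\bigr)$ is ``exactly what forces'' $x = \eta\,(r_{t,i} - \hat r_{t,i}) \ge -1/2$. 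It does not: since $r_{t,i} \ge -1$, the cap yields only $x \ge -\eta\,(1 + \hat r_{t,i}) \ge -2/3$, and $x \in [-2/3, -1/2)$ is attainable (e.g.\ $\hat r_{t,i} = 0$, $\eta = 2/3$, $r_{t,i}$ near $-1$), so the precondition fails precisely in the regime that the relaxed cap admits. The resolution --- and this is the very content of the paper's ``tightened analysis'' enabling the larger learning rates --- is that the prod inequality in fact holds on the extended range $x \ge -2/3$, verified by locating the root of $g(x) = \log(1+x) - x + x^2$ nearest $-1/2$. With the $-1/2$ range your argument cannot be repaired except by shrinking the cap to $1/\bigl(2(1+\hat r)\bigr)$, i.e.\ by proving the lemma for a different algorithm than Alg.~2. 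You flag this delicacy in your closing paragraph but leave it unresolved; as written the step fails.

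A secondary problem is your final display. Unrolling the single-instance recursion gives $\log w_{t_2,(t_1,i)} \ge \eta_{t_2,(t_1,i)} \sum_{t=t_1}^{t_2} \bigl(r_{t,i} - \eta_{t-1,(t_1,i)} (r_{t,i} - \hat r_{t,i})^2\bigr)$, with the \emph{time-varying} rate inside the quadratic sum; replacing $\sum_t \eta_{t-1,(t_1,i)} (r_{t,i} - \hat r_{t,i})^2$ by $\eta_{t_2,(t_1,i)}^2\, C_{t_2,(t_1,i)}$ goes the wrong way, since $\eta_{t-1,(t_1,i)} \ge \eta_{t_2,(t_1,i)}$. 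What is needed, and what the paper proves separately, is the upper bound $\sum_t \eta_{t-1,(t_1,i)} (r_{t,i} - \hat r_{t,i})^2 = \Bigo\bigl(\sqrt{C_{t_2,(t_1,i)} \log n}\bigr)$, obtained from $\eta_{t-1} \le \sqrt{\log n / C_{t-1}}$ via a harmonic-mean bound and Lemma~14 of Gaillard et al.; this step is nontrivial here exactly because the improved schedule drops the $+1$ dampening from the denominator. Finally, a bookkeeping item your per-round contraction must absorb: each newborn instance's first active round uses the uncapped initialization $\eta = \sqrt{\log n}$, where no prod inequality applies; the paper charges this with the crude bound $\exp(x - x^2) \le \exp(1/4)$ per instance, which keeps the total potential polynomial and hence only perturbs the $\Bigo(\log n)$ term.
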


It turns out that there is a deep connection between dynamic and adaptive regret, and that an adaptive regret bound implies a dynamic regret bound~\cite{luo2015achieving,wei2017tracking}. The next theorem follows by an invocation of Lemma~\ref{lem:adaptive-regret} to multiple ($\Bigo(T / B)$) time blocks of length $B$ and additional machinery to bound the regret per time block. The bound on the expected dynamic regret is a function of the sum of the prediction error of $\hat r_t$ which is a function of our loss prediction $\ell_t$, and $\DD_T$, which is the drift in the \emph{expected regret}
$$
\VV_{T} = \sum_{t \in [T]}  \norm{r_{t} - \hat r_{t}}_\infty^2 \quad \text{and} \quad \DD_{T} = \sum_{t \in [T]} \norm{ \E[r_{t} ] - \E[ r_{t-1}]}_\infty,
$$
where the expectation $\E [ r_t] = \E[(\dotp{p_t}{\ell_{t}(\xi_{t-1})} - \ell_{t}(\xi_{t-1})) \odot \II_{t}]$ is taken over all the relevant random variables, i.e., the training noise $\xi_{0:t-1}$ and the algorithm's actions up to point $t$, $\SS_{1:t}$, with $\odot$ denoting the Hadamard (entry-wise) product. We note that by H\"{o}lder's inequality, $\VV_{T} \leq 4 \sum_{t \in [T]}  ||\ell_{t} - \hat \ell_{t}||_\infty^2$, which makes it easier to view the quantity as the error between the loss predictions $\hat \ell_t$ and the realized ones $\ell_t$.


\begin{restatable}[Dynamic Regret]{theorem}{mainthm}
\label{thm:main}
    \textsc{AdaProd}$^+$ takes at most $\tilde \Bigo(t n_t)$ \footnote{We use $\tilde \Bigo(\cdot)$ to suppress $\log T$ and $\log n$ factors.} time for the $t^\text{th}$ update and for batch size $b = 1$ for all $t \in [T]$, guarantees that over $T$ steps,
    \begin{align*}
        \max_{(i_t)_{t \in [T]}} \E[\mathcal{R}(i_{1:T})] \leq \hat\Bigo \left(  \sqrt[3]{\E \left [\VV_T \right] \DD_T T \log n} + \sqrt{\DD_T T \log n} \right) ,
    \end{align*} 
    where $\hat \Bigo (\cdot)$ suppresses $\log T$ factors.
\end{restatable}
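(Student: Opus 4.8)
The plan is to first dispatch the runtime claim and then reduce the dynamic pseudoregret to the adaptive-regret guarantee of Lemma~\ref{lem:adaptive-regret} by a blocking argument, finishing with a one-dimensional optimization over the block length. For the runtime, observe that at round $t$ Alg.~\ref{alg:adaprod} maintains weights $w_{t,(s,i)}$ and rates $\eta_{t,(s,i)}$ indexed by a start time $s\in[t]$ and an awake expert $i\in\AA_t$; since $|\AA_t|=n_t$, Lines 4, 8, and 10 touch $\Bigo(t\,n_t)$ entries, and the only additional cost is the binary search for the optimistic prediction $\hat r_t$ of Sec.~\ref{subsec:adaprod}, which contributes the suppressed logarithmic factors, giving $\tilde\Bigo(t\,n_t)$.

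For the regret, write $\bar r_t=\E[r_t]$, which is deterministic, so that the pseudoregret against an unconstrained dynamic comparator satisfies $\max_{(i_t)}\E[\mathcal R(i_{1:T})]=\sum_t\max_i\bar r_{t,i}$ (the per-round maximizer is optimal). Partition $[T]$ into $m=\lceil T/B\rceil$ consecutive blocks of length $B$. The crux is the reduction: in a block $B_j=[t_1,t_2]$ let $i_j^\star=\argmax_i\sum_{t\in B_j}\bar r_{t,i}$ and $D_j=\sum_{t\in B_j,\,t>t_1}\|\bar r_t-\bar r_{t-1}\|_\infty$, so $\sum_j D_j\le\DD_T$. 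A short averaging argument — using $|\bar r_{t,i}-\bar r_{s,i}|\le D_j$ for all $s,t\in B_j$ together with the fact that $i_j^\star$ maximizes the block average — shows $\max_i\bar r_{t,i}-\bar r_{t,i_j^\star}\le 2D_j$ for each $t\in B_j$, so the ``dynamic minus fixed'' approximation error is at most $2B\,D_j$ per block and $2B\,\DD_T$ in total. It then remains to control $\sum_{t\in B_j}\bar r_{t,i_j^\star}$: applying Lemma~\ref{lem:adaptive-regret} pathwise on $B_j$ to the deterministic expert $i_j^\star$, taking expectations, and using Jensen ($\E[\sqrt C]\le\sqrt{\E C}$) with $C_{t_2,(t_1,i)}\le\VV_{B_j}:=\sum_{t\in B_j}\|r_t-\hat r_t\|_\infty^2$ yields a per-block bound of order $\log n+\sqrt{\log n}\,\sqrt{\E[\VV_{B_j}]}$ (absorbing $\log\log n$ into $\sqrt{\log n}$ for large $n$).

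Summing over the $m$ blocks and applying Cauchy--Schwarz, $\sum_j\sqrt{\E[\VV_{B_j}]}\le\sqrt{m\,\E[\VV_T]}=\sqrt{(T/B)\,\E[\VV_T]}$, gives
\begin{align*}
\max_{(i_t)}\E[\mathcal R(i_{1:T})]\le\Bigo\!\left(\tfrac{T}{B}\log n+\sqrt{\tfrac{T}{B}\,\E[\VV_T]\log n}+B\,\DD_T\right).
\end{align*}
This has the form $c_1/B+c_2/\sqrt B+c_3 B$ with $c_1=T\log n$, $c_2=\sqrt{T\,\E[\VV_T]\log n}$, $c_3=\DD_T$; choosing $B=\max\{(c_2/c_3)^{2/3},(c_1/c_3)^{1/2}\}$ (the two points balancing the last term against each of the first two) bounds each term and produces $\Bigo(\sqrt[3]{c_2^2c_3}+\sqrt{c_1c_3})=\Bigo(\sqrt[3]{\E[\VV_T]\DD_T T\log n}+\sqrt{\DD_T T\log n})$, as claimed, with the residual $\log T$ factors (from the ceiling in the blocking and the geometric range of admissible $B$) hidden in $\hat\Bigo$.

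I expect the main obstacle to be the reduction step rather than the closing calculus: one must verify that replacing the fully dynamic comparator by a per-block fixed comparator costs only $\Bigo(B\,\DD_T)$ — i.e., that the relevant drift is that of the \emph{expected} regret vector $\bar r_t$, not the realized $r_t$ — and that Lemma~\ref{lem:adaptive-regret}, stated pathwise for the random quantities, can be fused with this deterministic reduction through the Jensen and $C\le\VV$ steps without mismatching the randomness. Care is also needed to carry the sleeping indicators $\II_{t,i}$ consistently inside $\bar r_t$, so that asleep experts contribute zero to both the drift and the per-block adaptive bound.
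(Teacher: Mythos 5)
Your proposal is correct and takes essentially the same route as the paper's proof: partition $[T]$ into $\Bigo(T/B)$ blocks, bound the dynamic-versus-block-optimal comparator gap by $2B\,\DD_T$ using the drift of the \emph{expected} regret vector, control each block's regret against its optimal fixed expert via the pathwise adaptive-regret bound of Lemma~\ref{lem:adaptive-regret}, aggregate with Cauchy--Schwarz and Jensen, and balance over the block length. Your minor deviations (reducing to the per-round maximizer, the averaging argument in place of the paper's existence-of-$t_0$ contradiction step, and applying Jensen before optimizing $B$ rather than after) are cosmetic and, if anything, slightly cleaner than the paper's interchange of $\min_N$ with the expectation.
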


Note that in scenarios where \textsc{Greedy} fails, i.e., instances where there is a high amount of training noise, the expected variance of the losses with respect to the training noise $\E[\VV_T]$ may be on the order of $T$. However, even in high noise scenarios, the drift in the \emph{expectation} of the regret may be sublinear, e.g., a distribution of losses that is fixed across all iterations $\E[\ell_1] = \cdots = \E[\ell_T]$, i.e., a stationary distribution, but with high variance $\E[ \norm{ \ell_t - \E[\ell_t]}]$. This means that sublinear dynamic regret is possible with $\textsc{AdaProd}^+$ even in noisy environments, since then $\DD_t = o(T)$, $\E[\VV_T] = \Theta(T)$ and $\sqrt[3]{\E \left [\VV_T \right] \DD_T T \log n} + \sqrt{\DD_T T \log n} = o(T)$.





\subsection{Batch Sampling} 
\label{sec:batch-sampling}

In the previous subsection, we established bounds on the regret in the case where we sampled a single point in each round $t$. Here, we establish bounds on the performance of Alg.~\ref{alg:adaprod} with respect to sampling a batch of $b \ge 1$ points $\SS_t$ in each round $t$ without any significant modifications to the presented method. To do so, we make the mild assumption that the time horizon $T$, the size of the data set $n$, and the batch size $b$ are so that $\max_{i \in [n]} p_{t,i} \leq  1 / b$ for all $t \in [T]$. We can then define the scaled probabilities $\rho_{t,i} = b p_{t,i}$ and sample according to $\rho_{t,i}$. As detailed in the Appendix (Sec.~\ref{sec:supp-method}), there is a linear-time randomized rounding scheme for picking exactly $b$ samples so that each sample is picked with probability $\rho_{t,i}$. Note that $\rho_{t,i} \in [0,1]$ for all $t$ and $i \in [n]$ by the assumption.

For batch sampling, we override the definition of $r_{t,i}$ and define it with respect to the sampling distribution $\rho$ so that
$$
r_{t,i} =  \left(\frac{\dotp{\rho_{t}}{\ell_{t}(\xi_{t-1})}}{b} - \ell_{t, i}(\xi_{t-1}) \right) \II_{t,i}.
$$
Now let $\SS_1^*, \ldots, \SS_T^*$ be a competing sequence of samples such that $|\SS_t^*| = b$. Then, the expected regret with respect to our samples $\SS_t \sim \rho_t$ over $T$ iterations is expressed by.
$$
\mathcal{R}(\SS_{1:T}^*) = \sum_{t=1}^T \sum_{j=1}^b \E \nolimits_{\xi_{t-1}}[r_{t,\SS_{t j}^*}],
$$
where $\SS_{t j}^* \in [n]$ denotes the $j^\text{th}$ sample of the competitor at step $t$. The next theorem generalizes Theorem~\ref{thm:main} to batch sampling, which comes at the cost of a factor of $b$ in the regret.
\begin{restatable}[Dynamic Regret]{theorem}{batchsamplingthm}
\label{thm:main-batch}
    \textsc{AdaProd}$^+$ with batch sampling of $b$ points guarantees that over $T$ steps,
    \begin{align*}
        \max_{(\SS_t)_{t \in [T]}: |\SS_t| = b \, \forall{t}} \E[\mathcal{R}(\SS_{1:T})] \leq \hat \Bigo \left( b\sqrt[3]{\E \left [\VV_T \right] \DD_T T \log n} + b\sqrt{\DD_T T \log n} \right).
    \end{align*} 
\end{restatable}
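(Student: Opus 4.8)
The plan is to reduce the batch bound to $b$ invocations of the single-point result in Theorem~\ref{thm:main}. The first and crucial observation is that the \emph{instantaneous} regret used by the algorithm is unchanged by batching: since $\rho_t = b\,p_t$, we have $\dotp{\rho_t}{\ell_t(\xi_{t-1})}/b = \dotp{p_t}{\ell_t(\xi_{t-1})}$, so
$$
r_{t,i} = \left(\frac{\dotp{\rho_t}{\ell_t(\xi_{t-1})}}{b} - \ell_{t,i}(\xi_{t-1})\right)\II_{t,i} = \left(\dotp{p_t}{\ell_t(\xi_{t-1})} - \ell_{t,i}(\xi_{t-1})\right)\II_{t,i}.
$$
Because the trajectory of \textsc{AdaProd}$^+$ (the weights $w$, learning rates $\eta$, and the cumulative terms $C_{s,i}$) is driven \emph{solely} by these $r_{t,i}$ and the predictions $\hat r_{t,i}$, and not by how many points are drawn, the pathwise adaptive-regret guarantee of Lemma~\ref{lem:adaptive-regret} transfers verbatim to the batch instance, and the quantities $\VV_T$ and $\DD_T$ retain their definitions.

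Given this, I would decompose the batch competitor. For any competing sequence of size-$b$ sets $(\SS_t^*)_{t\in[T]}$, fix an arbitrary ordering $\SS_t^* = \{\SS_{t1}^*,\ldots,\SS_{tb}^*\}$ and use linearity of expectation to write
$$
\E[\mathcal{R}(\SS_{1:T}^*)] = \E\left[\sum_{t=1}^T\sum_{j=1}^b \E_{\xi_{t-1}}[r_{t,\SS_{tj}^*}]\right] = \sum_{j=1}^b \E[\mathcal{R}((\SS_{tj}^*)_{t\in[T]})].
$$
Each inner term is exactly a single-expert dynamic regret against the one-point-per-round competitor $(\SS_{tj}^*)_t$. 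Relaxing the cross-$j$ distinctness constraint can only enlarge the right-hand side, so
$$
\max_{(\SS_t)_t:\,|\SS_t|=b}\E[\mathcal{R}(\SS_{1:T})] \le \sum_{j=1}^b \max_{(i_t)_t}\E[\mathcal{R}((i_t)_t)] = b\,\max_{(i_t)_t}\E[\mathcal{R}(i_{1:T})],
$$
and applying Theorem~\ref{thm:main} to the right-hand side yields the claimed bound with its extra factor of $b$. Note that the entire adaptive-to-dynamic machinery behind Theorem~\ref{thm:main} — blocking $[T]$ into intervals of length $B$, invoking Lemma~\ref{lem:adaptive-regret} per block, and optimizing $B$ — operates on a single loss sequence and a single competing expert, so it applies to each index $j$ on the batch instance's realized losses with no modification.

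The main thing to get right, and the only place where the batch structure genuinely intervenes, is the first step: confirming that neither the algorithm's trajectory nor $\VV_T,\DD_T$ depend on $b$ once the regret is written through $r_{t,i}$. The assumption $\max_i p_{t,i}\le 1/b$ plays \emph{no} role in the regret algebra; it serves only to guarantee $\rho_{t,i}\in[0,1]$ so that the linear-time dependent-rounding scheme can draw exactly $b$ distinct points with marginals $\rho_{t,i}$. Once the identity above is established, the remainder is a routine linearity-and-monotonicity argument, and the per-round sum over the $b$ competitors is precisely what produces the multiplicative $b$; there is no additional loss in the $\sqrt{\log n}$, $T$, $\E[\VV_T]$, or $\DD_T$ dependence.
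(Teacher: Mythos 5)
Your proposal is correct and takes essentially the same route as the paper's proof: both write the batch regret as a double sum over time and batch index $j$, exchange the order of summation, use the identity $\rho_t = b\,p_t$ (so that $\dotp{\rho_t}{\ell_t}/b = \dotp{p_t}{\ell_t}$ and each per-$j$ term is exactly a single-point dynamic regret), and then invoke Theorem~\ref{thm:main} once for each of the $b$ indices to obtain the multiplicative factor of $b$. Your additional observations — that the algorithm's trajectory and $\VV_T, \DD_T$ are unaffected by batching, and that the assumption $\max_i p_{t,i} \le 1/b$ is needed only for the validity of the rounding scheme rather than the regret algebra — are accurate elaborations of the same argument.
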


We note that the assumption imposed in this subsection is not restrictive in the context of active learning where we assume the pool of unlabeled samples is large and that we only label a small batch at a time, i.e., $n \gg b$. In our experimental evaluations, we verified that it held for all the scenarios and configurations presented in this paper (Sec.~\ref{sec:results} and Sec.~\ref{sec:supp-add-eval} of the appendix). Additionally, by our sleeping expert formulation, as soon as the probability of picking a point $p_{t,i}$ starts to become concentrated, we will sample point $i$ and set $p_{t,i} = 0$ with very high probability. Hence, our algorithm inherently discourages the concentration of the probabilities at any one point. Relaxing this assumption rigorously is an avenue for future work.

\section{Results}
\label{sec:results}

In this section, we present evaluations of our algorithm and compare the performance of its variants on common vision tasks. The full set of results and our codebase be found in the supplementary material (Sec.~\ref{sec:supp-add-eval}). Our evaluations across a diverse set of configurations and benchmarks demonstrate the practical effectiveness and reliability of our method. In particular, they show that our approach (i) is the only one to significantly improve on the performance of uniform sampling across all scenarios, (ii) reliably outperforms competing approaches even with the intuitive \textsc{Uncertainty} metric (Fig.~\ref{fig:vision},\ref{fig:FashionMNIST}), (iii) when instantiated with other metrics, leads to strict improvements over greedy selection (Fig.~\ref{fig:boosting}), and (iv) outperforms modern algorithms for learning with expert advice (Fig.~\ref{fig:comp-experts}).

\subsection{Setup} 

We compare our active learning algorithm  Alg.~\ref{alg:adaprod} (labeled \textsc{Ours}) with the \emph{uncertainty} loss described in Sec.~\ref{sec:problem-definition}; \textsc{Uncertainty}: greedy variant of our algorithm with the same measure of informativeness; \textsc{Entropy}: greedy approach that defines informativeness by the entropy of the network's softmax output; \textsc{Coreset}: clustering-based active learning algorithm of~\cite{sener2017active,geifman2017deep}; \textsc{BatchBALD}: approach based on the mutual information of points and model parameters~\cite{kirsch2019batchbald}; and \textsc{Uniform} sampling. We implemented the algorithms in Python and used the PyTorch~\cite{paszke2017automatic} library for deep learning. 

We consider the following popular vision data sets trained on modern convolutional networks:
\begin{enumerate}
\itemsep0pt
    \item \textbf{FashionMNIST}\cite{xiao2017fashion}: $60,000$ grayscale images of size $28\times 28$
    \item \textbf{CIFAR10}~\cite{krizhevsky2009learning}: $50,000$ color images ($32\times 32$) each belonging to one of 10 classes
    \item \textbf{SVHN}~\cite{netzer2011reading}: $73,257$ real-world images ($32 \times 32$) of digits taken from Google Street View
    \item \textbf{ImageNet}~\cite{deng2009imagenet}: more than $1.2$ million images spanning $1000$ classes
\end{enumerate}
We used standard convolutional networks for training FashionMNIST~\cite{xiao2017fashion} and SVHN~\cite{svhnmodel}, and the CNN5 architecture~\cite{nakkiran2019deep} and residual networks (resnets)~\cite{he2016deep} for our evaluations on CIFAR10 and ImageNet. The networks were trained with optimized hyper-parameters from the corresponding reference. All results were averaged over 10 trials unless otherwise stated. The full set of hyper-parameters and details of each experimental setup are provided in the supplementary material (Sec.~\ref{sec:supp-add-eval}).

\paragraph{Computation Time}  Across all data sets, our algorithm took at most 3 minutes per update step. This was comparable (within a factor of $2$) to that required by \textsc{Uncertainty} and \textsc{Entropy}. However, relative to more sophisticated approaches, \textsc{Ours} was up to $\approx 12.3$x faster than \textsc{Coreset}, due to expensive pairwise distance computations involved in clustering, and up to $\approx 11$x faster than \textsc{BatchBALD}, due to multiple ($\ge 10$) forward passes over the entire data on a network with dropout required for its Bayesian approximation~\cite{kirsch2019batchbald}; detailed timings are provided in the supplementary.

  

\subsection{Evaluations on Vision Tasks}

\begin{figure*}[htb!]
  \centering
  \begin{minipage}[t]{0.249\textwidth}
  \centering
 \includegraphics[width=1\textwidth]{figures/resnet18_ImageNet_e90_re80_scratch_int19_ImageNet_acc5_param.pdf}
  \end{minipage}%
  \hfill
  \begin{minipage}[t]{0.249\textwidth}
  \centering
\includegraphics[width=1\textwidth]{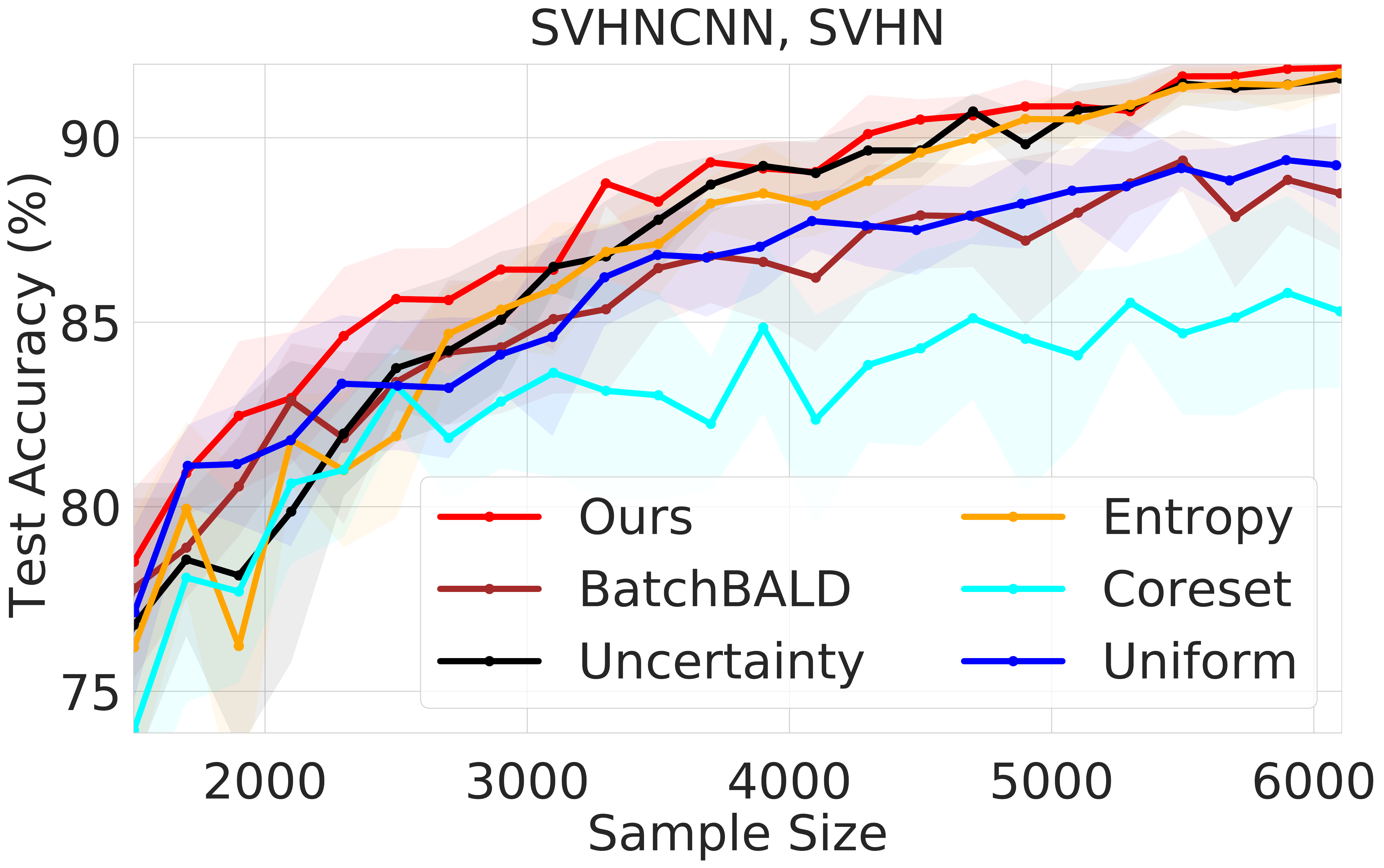}
  \end{minipage}%
  \hfill
  \begin{minipage}[t]{0.249\textwidth}
  \centering
 \includegraphics[width=1\textwidth]{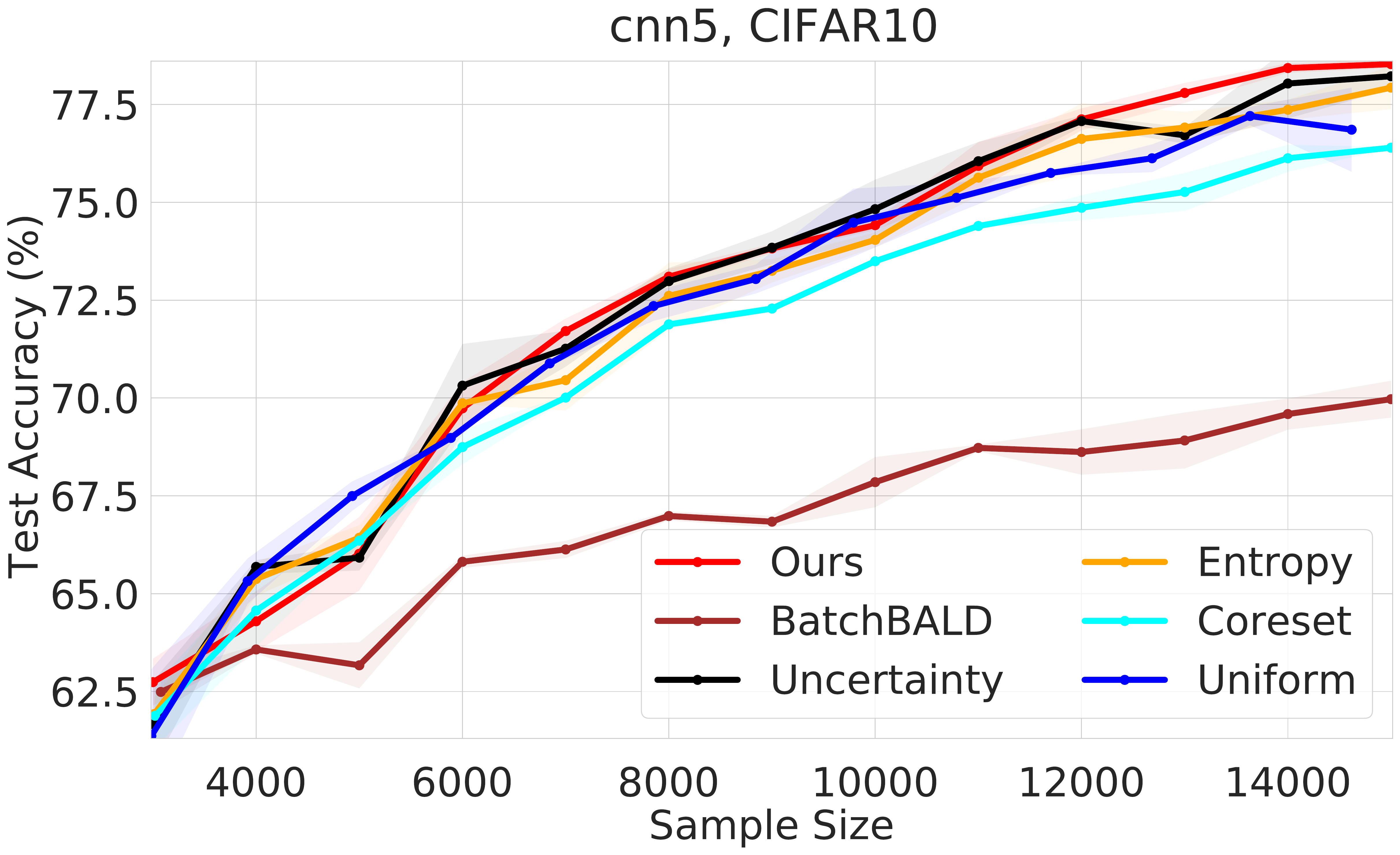}
  \end{minipage}%
  \hfill 
    \begin{minipage}[t]{0.249\textwidth}
  \centering
\includegraphics[width=1\textwidth]{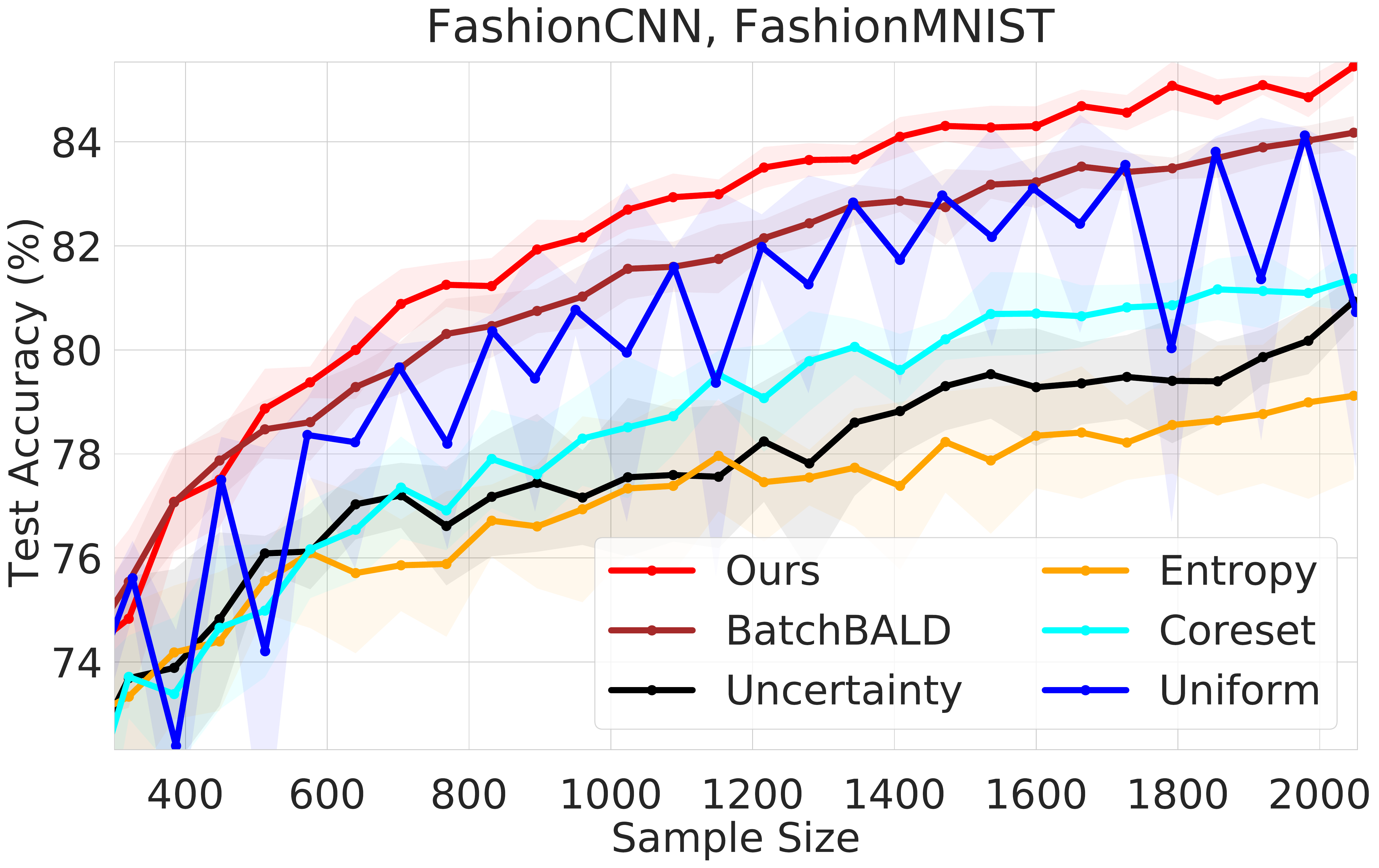}
  \end{minipage}
  
        \begin{minipage}[t]{0.249\textwidth}
  \centering
            \includegraphics[width=1\textwidth]{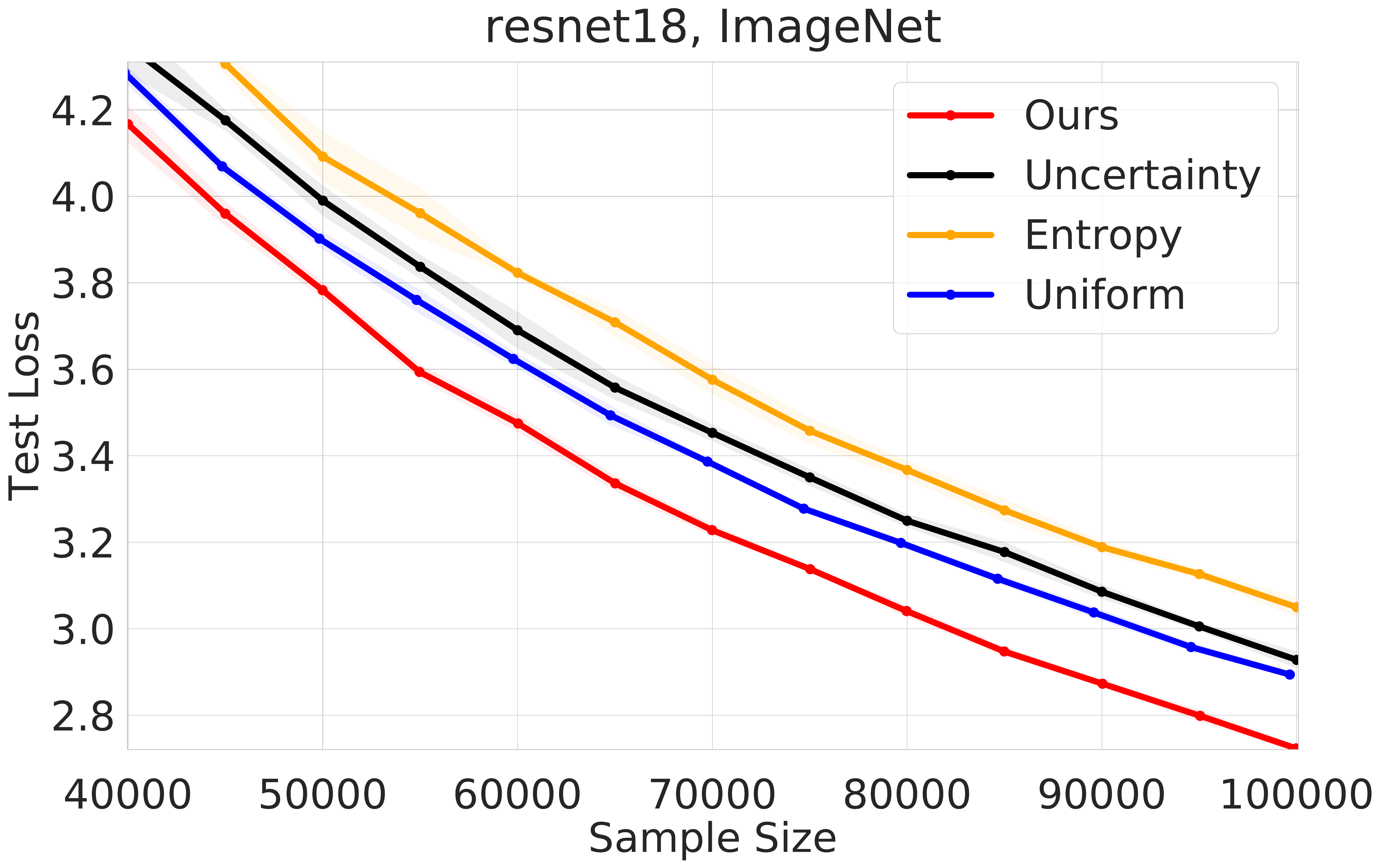}
            \subcaption{ImageNet}
  \end{minipage}%
  \hfill
    \begin{minipage}[t]{0.249\textwidth}
  \centering
 \includegraphics[width=1\textwidth]{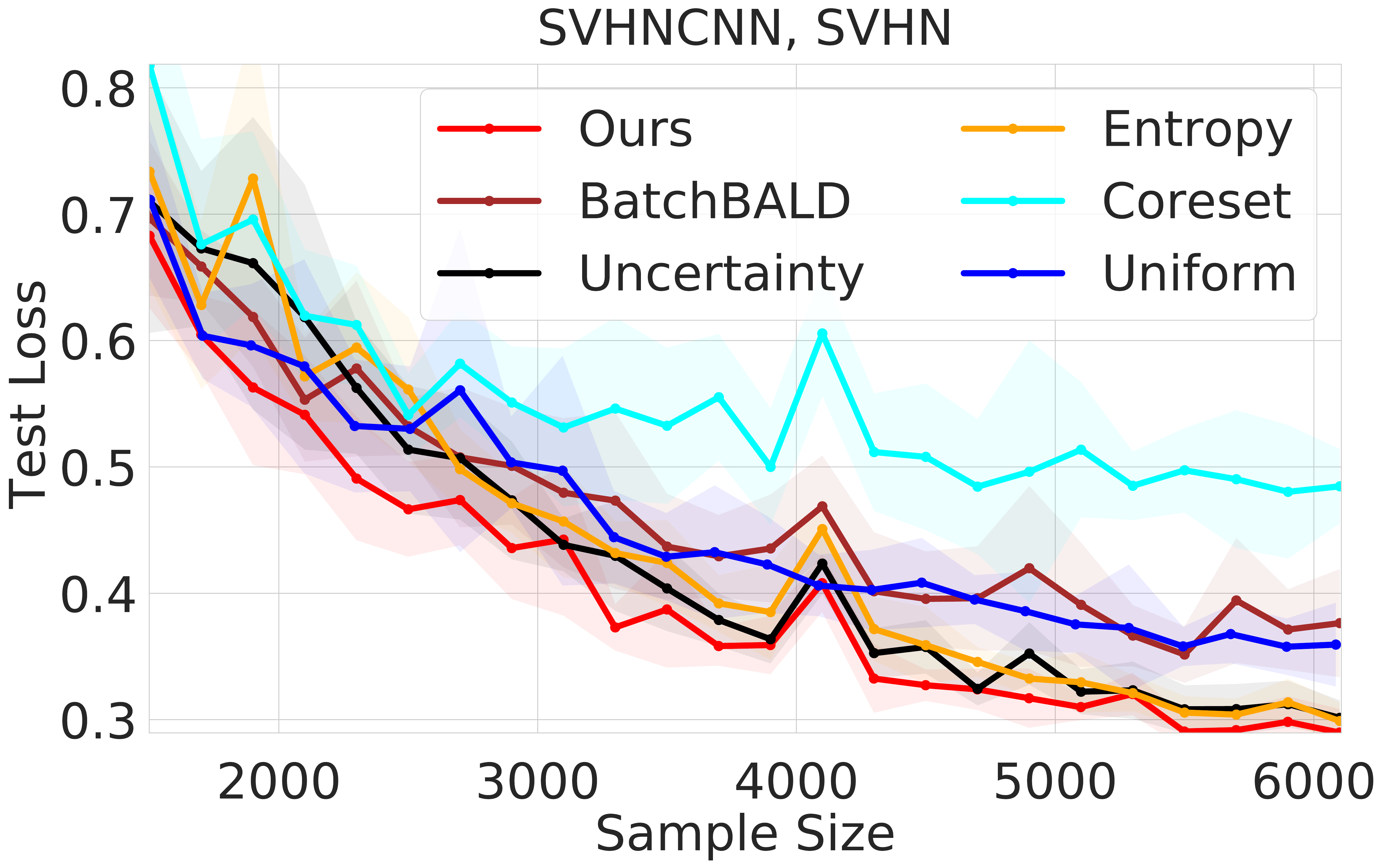}
 \subcaption{SVHN}  
  \end{minipage}%
  \hfill
  \begin{minipage}[t]{0.249\textwidth}
  \centering
 \includegraphics[width=1\textwidth]{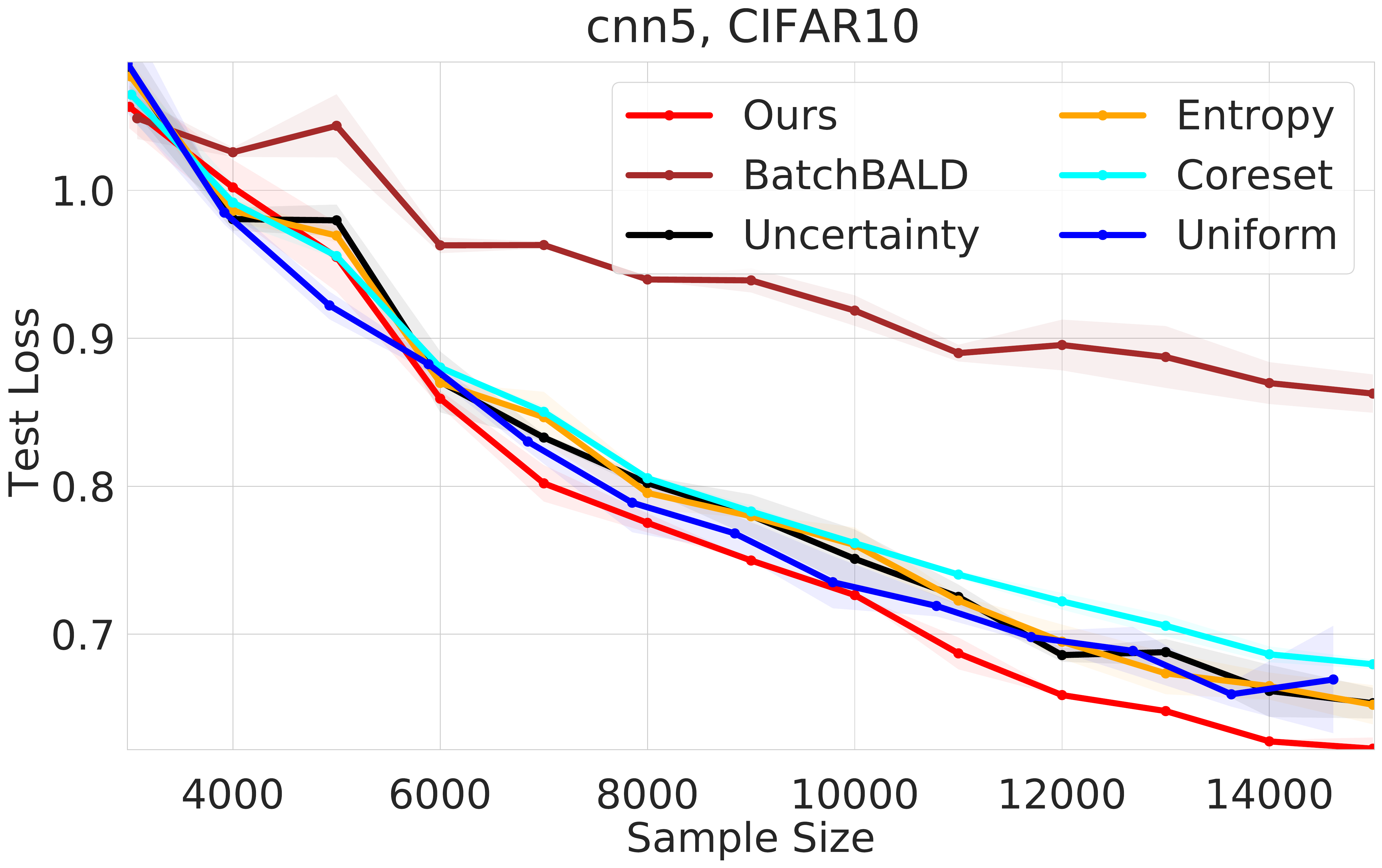}
 \subcaption{CIFAR10}
  \end{minipage}%
  \hfill
        \begin{minipage}[t]{0.249\textwidth}
  \centering
\includegraphics[width=1\textwidth]{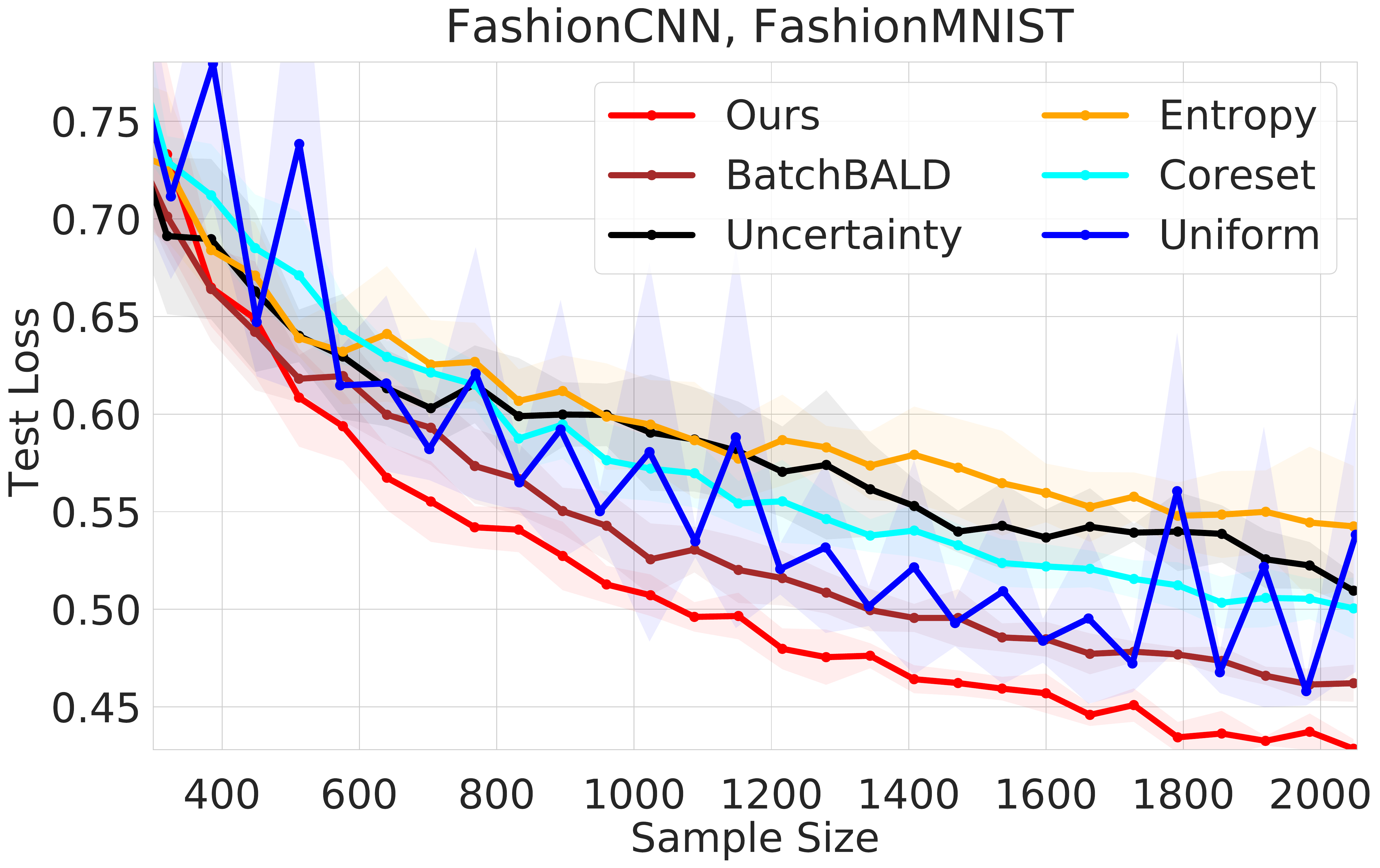}
 \subcaption{FashionMNIST}
  \end{minipage}%
		\caption{Evaluations on popular computer vision benchmarks trained on convolutional neural networks. Our algorithm consistently achieves higher performance than uniform sampling and outperforms or matches competitors on all scenarios. This is in contrast to the highly varying performance of competing methods. \emph{Shaded regions correspond to values within one standard deviation of the mean.}}
	\label{fig:vision}
	\vspace{-2ex}
\end{figure*}

As our initial experiment, we evaluate and compare the performance of our approach on benchmark computer vision applications. Fig.~\ref{fig:vision} depicts the results of our experiments on the data sets evaluated with respect to test accuracy and test loss of the obtained network. For these experiments, we used the standard methodology~\cite{ren2020survey,gal2017deep} of retraining the network from scratch as the option in Alg.~\ref{alg:active-learning}.


Note that for all data sets, our algorithm (shown in red) consistently outperforms uniform sampling, and in fact, also leads to reliable and strict improvements over existing approaches for all data sets. On ImageNet, we consistently perform better than competitors when it comes to test accuracy and loss. This difference is especially notable when we compare to greedy approaches that are outpaced by \textsc{Uniform} by up to $\approx 5\%$ test accuracy. Our results support the widespread reliability and scalability of \textsc{AdaProd}$^+$, and show promise for its effectiveness on even larger models and data sets.

\subsection{Robustness Evaluations}

Next, we investigate the robustness of the considered approaches across varying data acquisition configurations evaluated on \emph{a fixed data set}. To this end, we define a data acquisition configuration as the tuple $(\textsc{option}, n_\text{start}, b, n_\text{end})$ where $\textsc{option}$ is either $\textsc{Scratch}$ or $\textsc{Incr}$ in the context of Alg.~\ref{alg:active-learning}, $n_\text{start}$ is the number of initial points at the first step of the active learning iteration, $b$ is the fixed label budget per iteration, and $n_\text{end}$ is the number of points at which the active learning process stops. Intuitively, we expect robust active learning algorithms to be resilient to changes in the data acquisition configuration and to outperform uniform sampling in a configuration-agnostic way.

\begin{figure*}[htb!]
  \centering
  \begin{minipage}[t]{0.31\textwidth}
  \centering
 \includegraphics[width=1\textwidth]{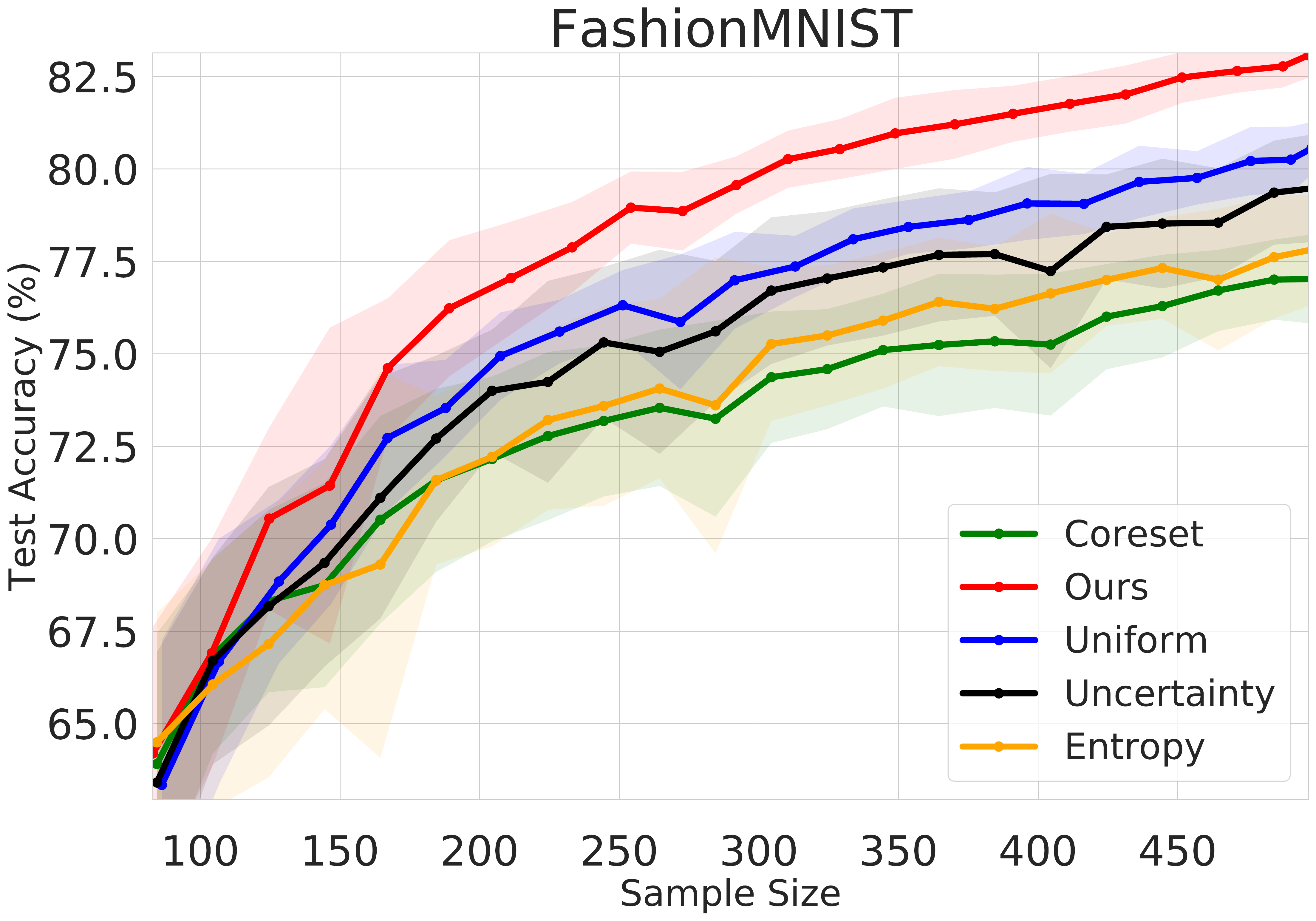}
 \subcaption{$(\textsc{Scratch}, 50, 15, 500)$}
  \end{minipage}%
  \hfill
  \begin{minipage}[t]{0.30\textwidth}
  \centering
 \includegraphics[width=1\textwidth]{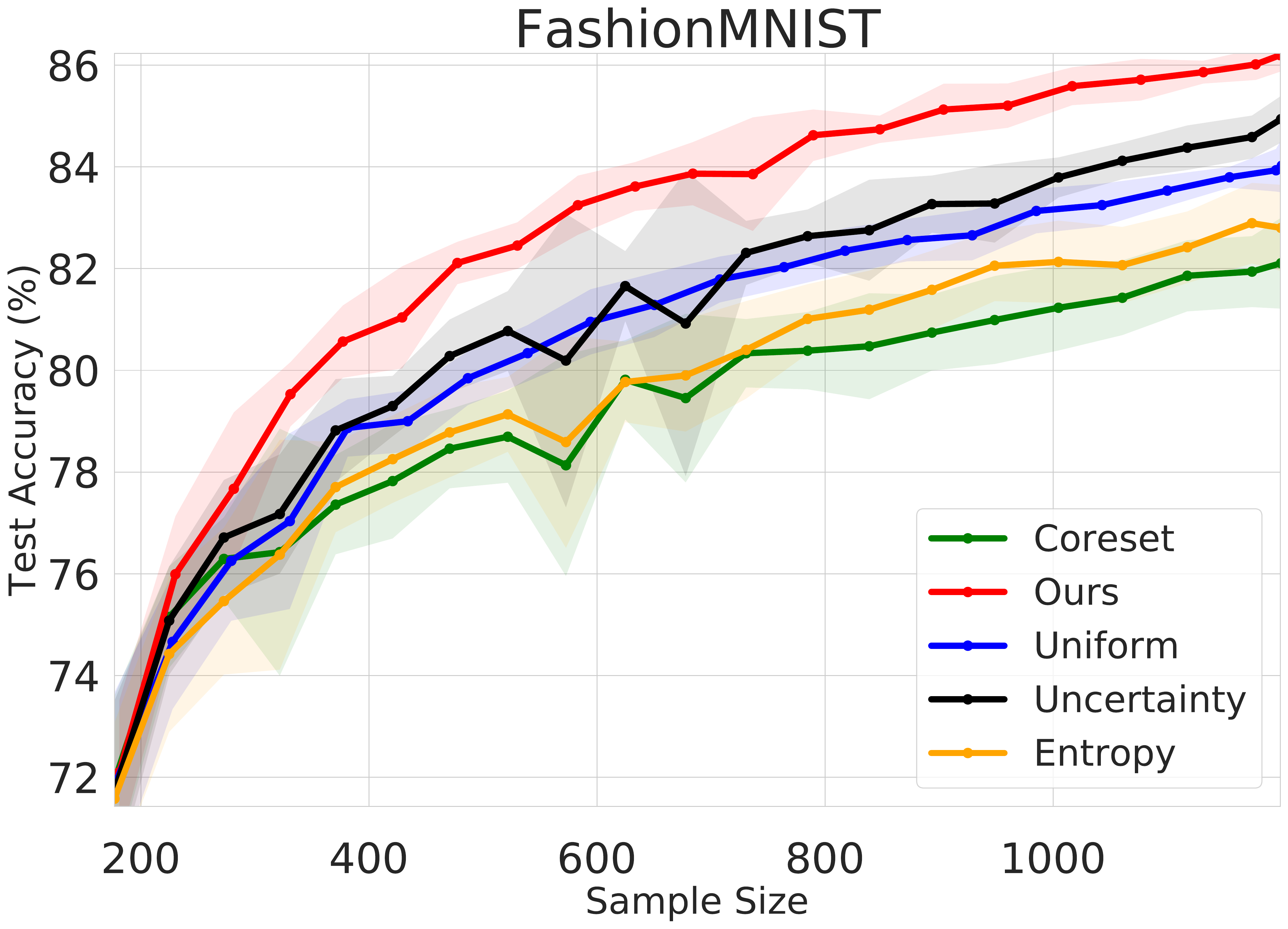}
 \subcaption{$(\textsc{Scratch}, 150, 50, 1200)$}
  \end{minipage}%
  \hfill
    \begin{minipage}[t]{0.353\textwidth}
  \centering
 \includegraphics[width=1\textwidth]{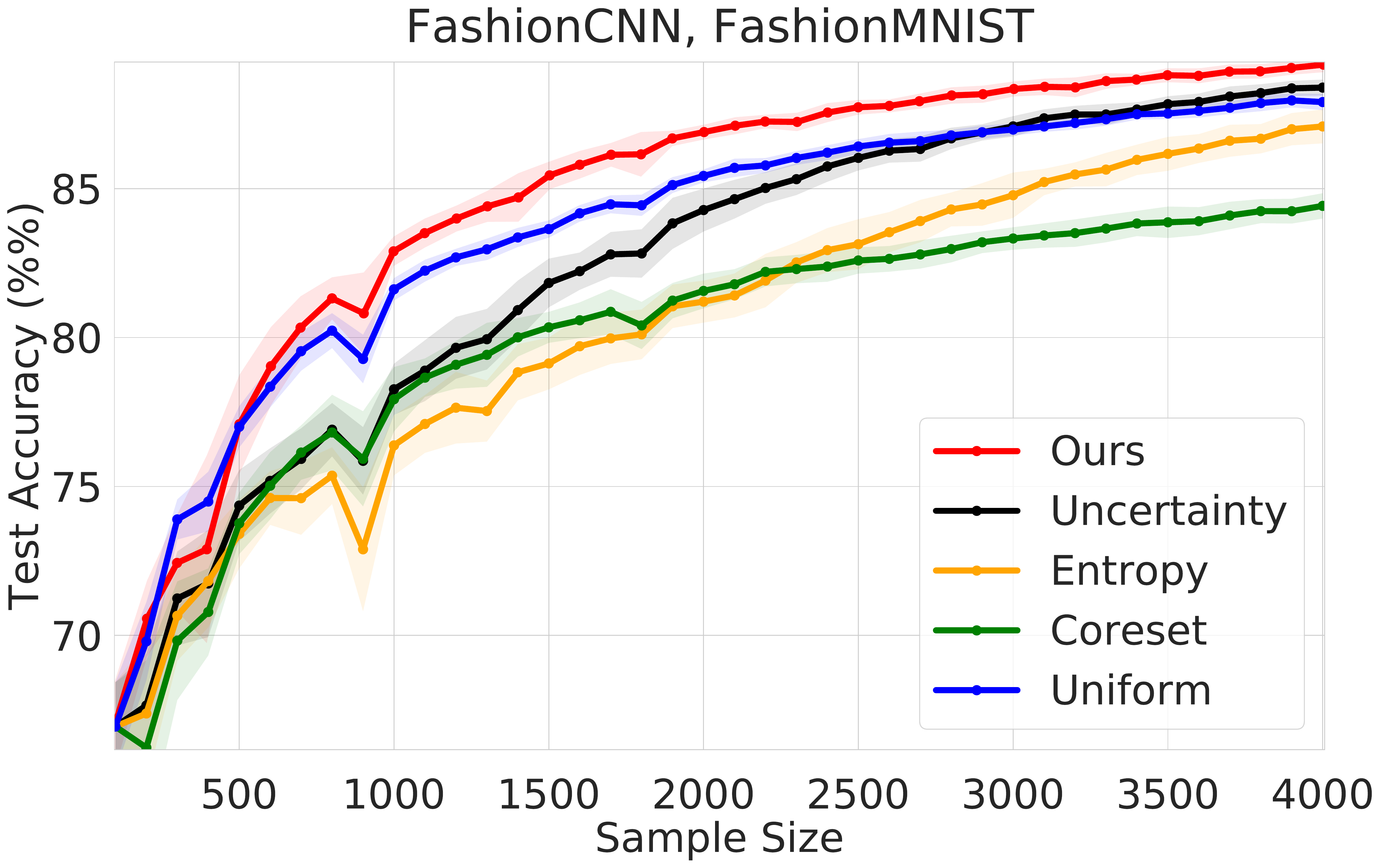}
 \subcaption{$(\textsc{Scratch}, 50, 100, 4000)$}
  \end{minipage}
  
    \begin{minipage}[b]{0.31\textwidth}
  \centering
 \includegraphics[width=1\textwidth]{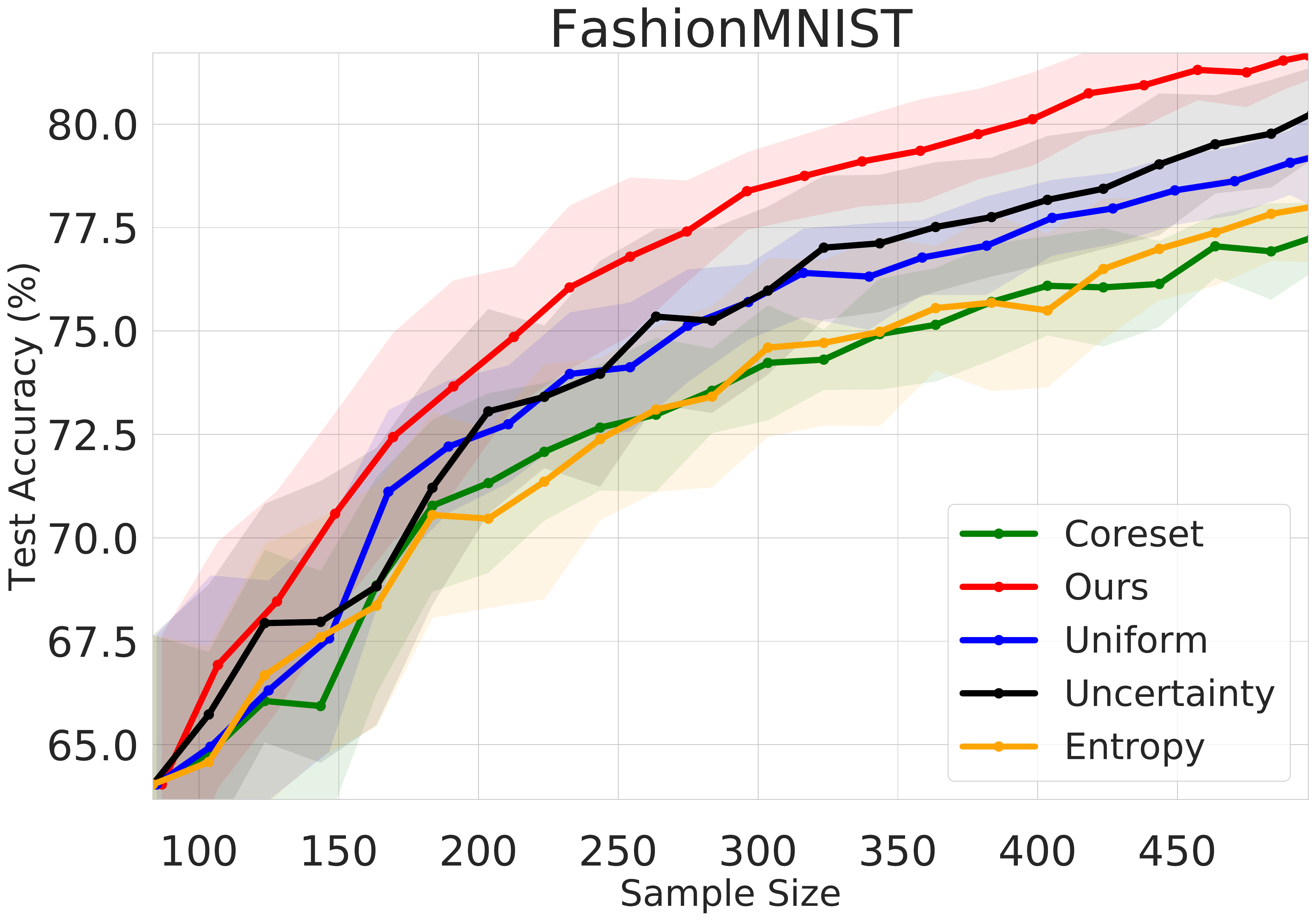}
 \subcaption{$(\textsc{Incr}, 50, 15, 500)$}
  \end{minipage}%
  \hfill
  \begin{minipage}[b]{0.32\textwidth}
  \centering
 \includegraphics[width=1\textwidth]{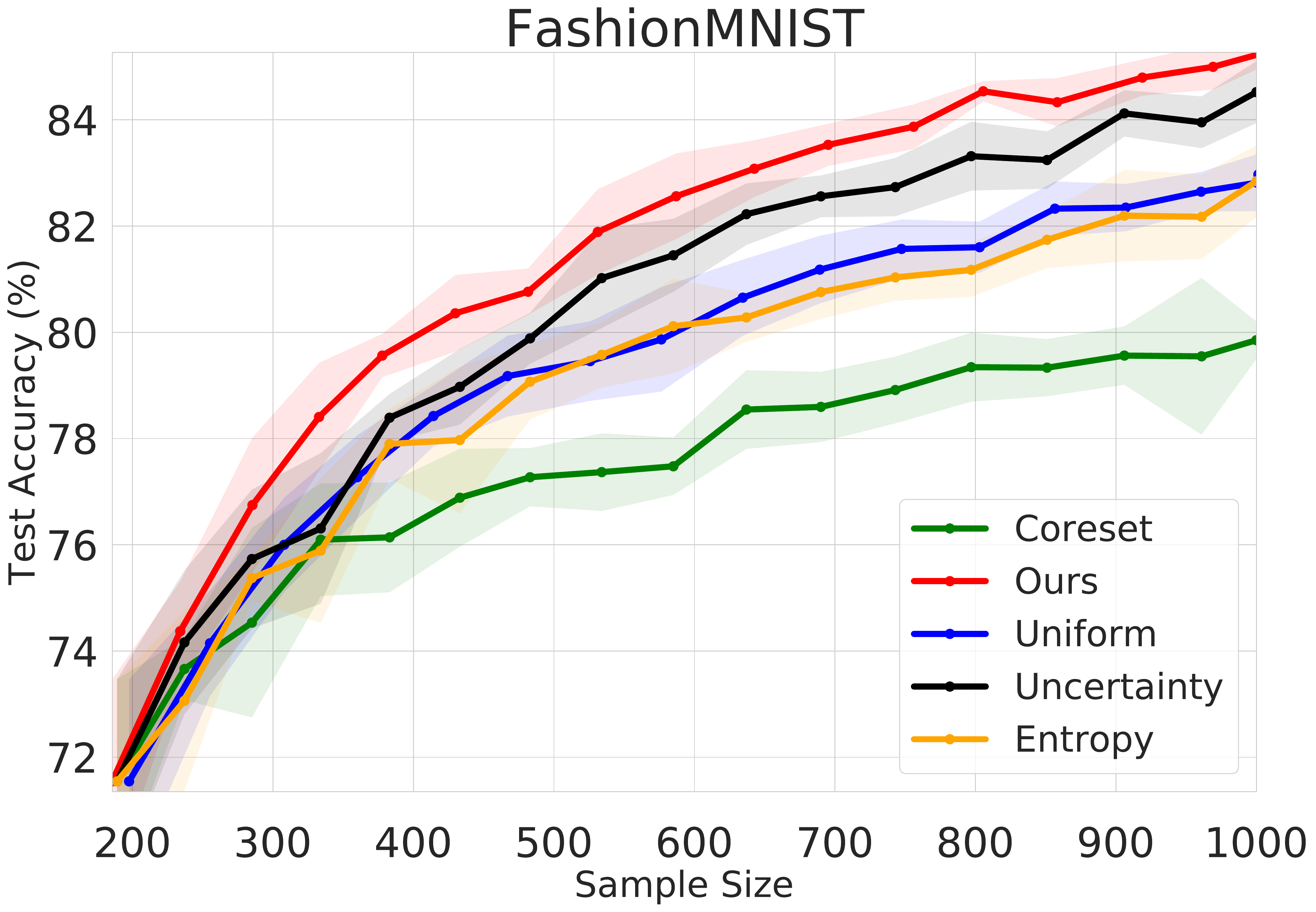}
 \subcaption{$(\textsc{Incr}, 150, 64, 1000)$}
  \end{minipage}%
  \hfill
      \begin{minipage}[b]{0.36\textwidth}
  \centering
 \includegraphics[width=1\textwidth]{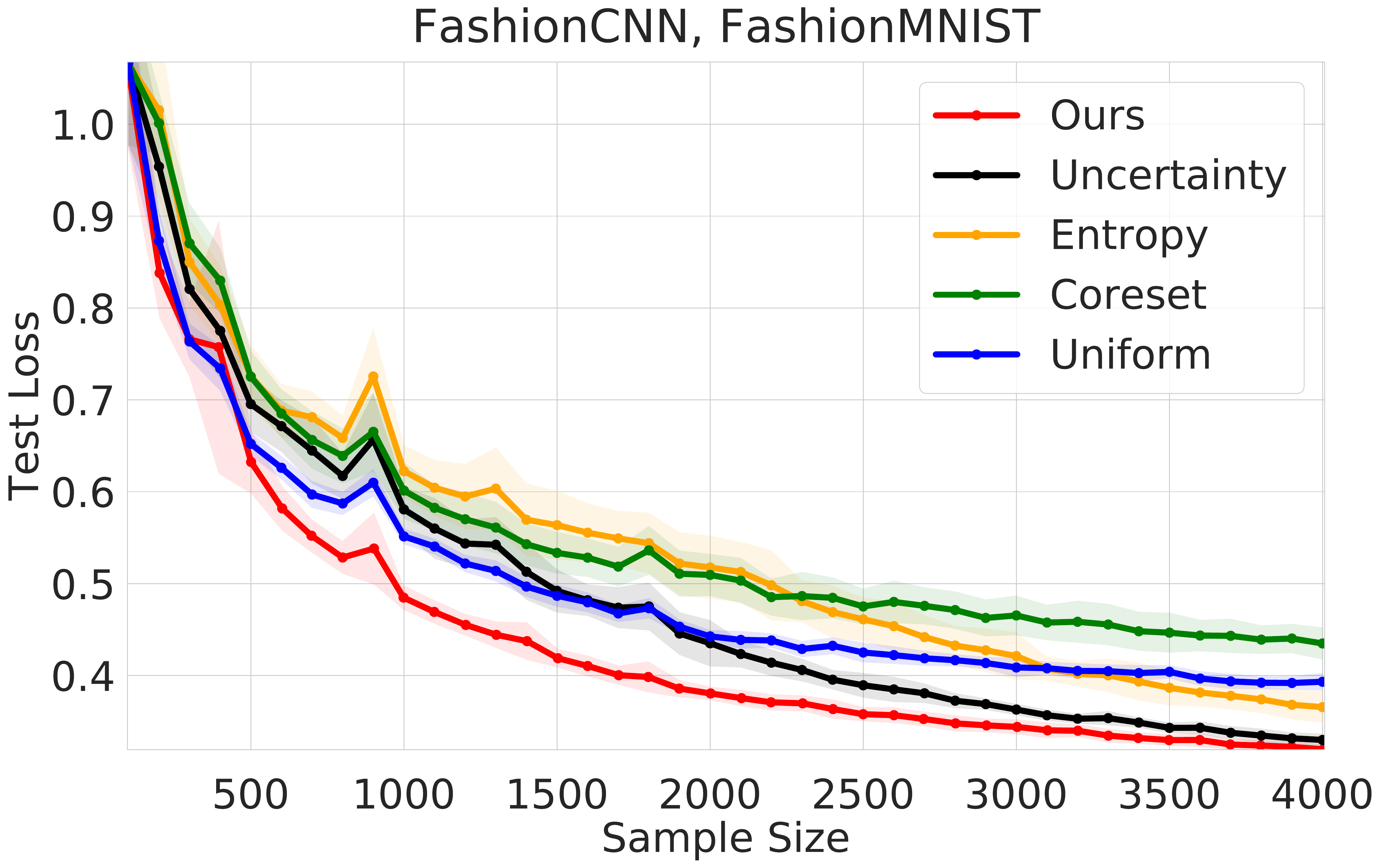}
 \subcaption{$(\textsc{Incr}, 50, 100, 4000)$}
  \end{minipage}%
		\caption{Our evaluations on the FashionMNIST data set with varying data acquisition configurations and \textsc{Incr} and \textsc{Scratch} -- $(\textsc{Option}, n_\text{start}, b, n_\text{end})$. All figures except for (f) depict the test accuracy. The performance of competing methods varies greatly across configurations even when the data set is fixed.}
	\label{fig:FashionMNIST}
	\vspace{-3ex}
\end{figure*}

Fig.~\ref{fig:FashionMNIST} shows the results of our experiments on FashionMNIST. From the figures, we can see that our approach performs significantly better than the compared approaches in terms of both test accuracy and loss in all evaluated configurations. In fact, the compared methods' performance fluctuates wildly, supporting our premise about greedy acquisition. For instance, we can see that the uncertainty metric in Fig.~\ref{fig:FashionMNIST} fares worse than naive uniform sampling in (a), but outperforms \textsc{Uniform} in settings (d) and (e); curiously, in (c), it is only better after an interesting cross-over point towards the end. 

This inconsistency and sub-uniform performance is even more pronounced for the \textsc{Entropy} and \textsc{Coreset} algorithms that tend to perform significantly worse -- up to -$7\%$ and -$4\%$ (see (a) and (e) in Fig.~\ref{fig:FashionMNIST}) absolute test accuracy when compared to that of our method and uniform sampling, respectively. We postulate that the poor performance of these competing approaches predominantly stems from their inherently greedy acquisition of data points in a setting with significant randomness as a result of stochastic training and data augmentation, among other elements. In contrast, our approach has provably low-regret with respect to the data acquisition objective, and we conjecture that this property translates to consistent performance across varying configurations and data sets.


\begin{figure*}[ht!]
 \centering
  \begin{minipage}[t]{0.38\textwidth}
  \centering
 \includegraphics[width=1\textwidth]{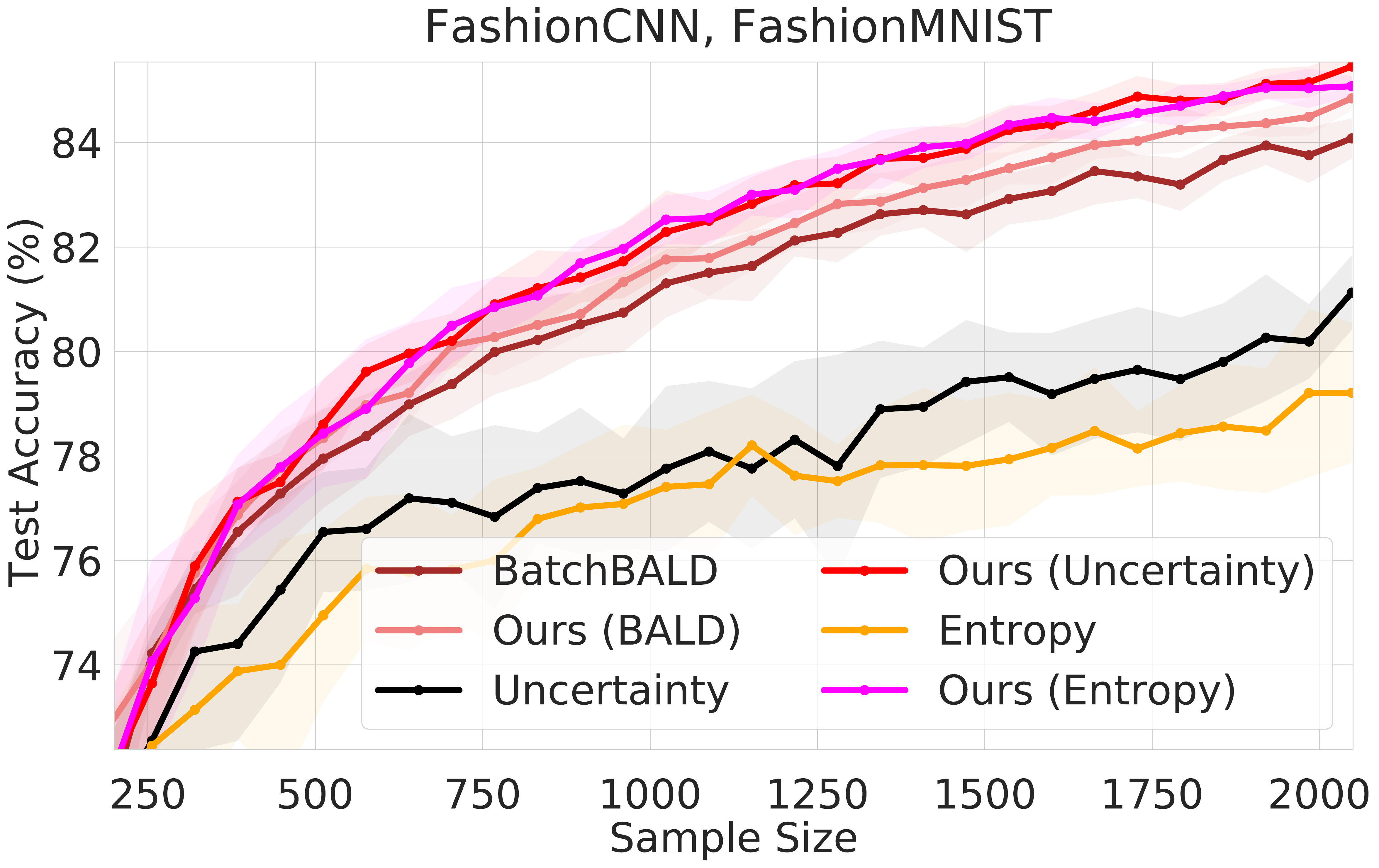}
  \end{minipage}%
  \hspace{3ex}
  \begin{minipage}[t]{0.38\textwidth}
  \centering
 \includegraphics[width=1\textwidth]{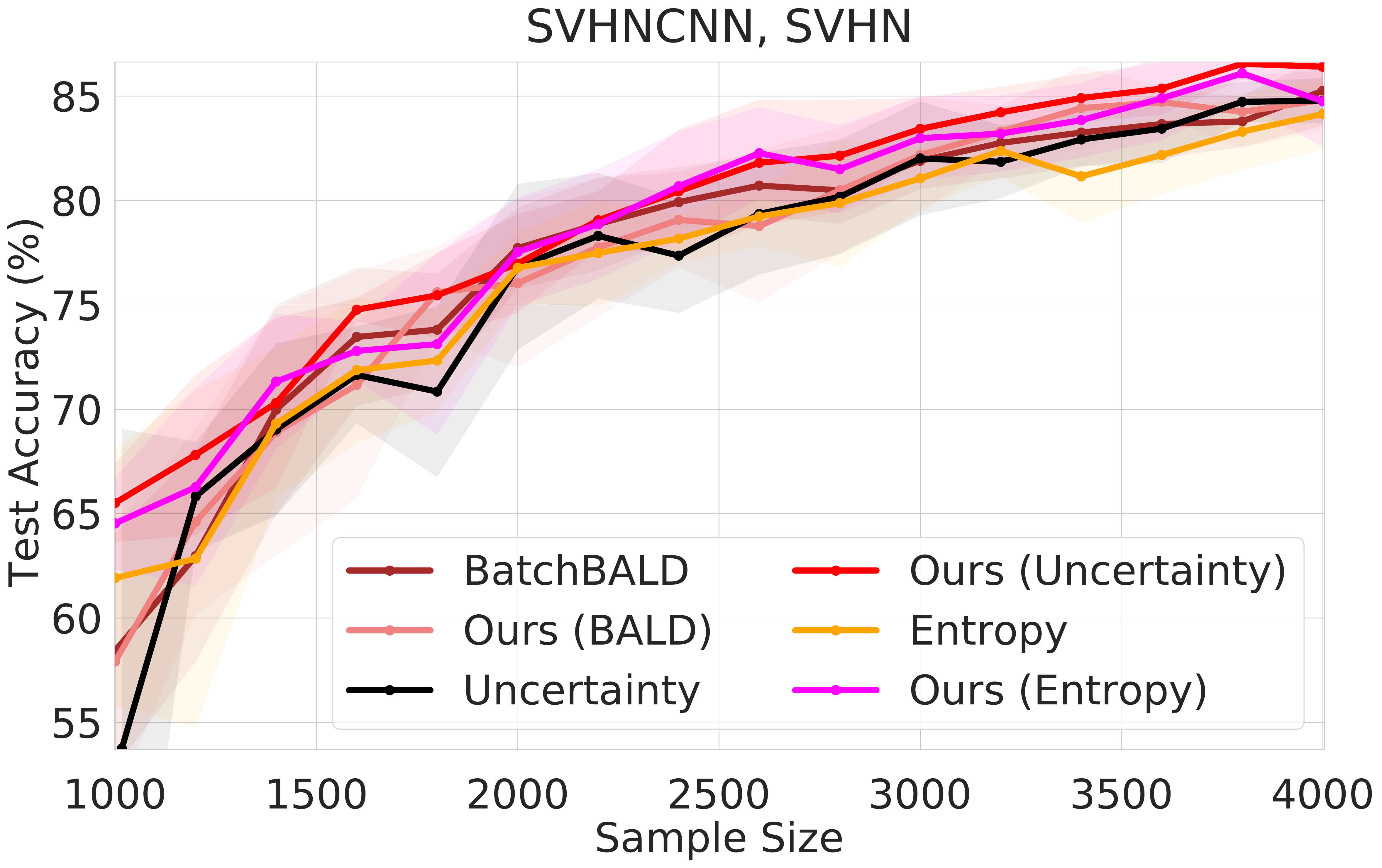}
  \end{minipage}%
 
  \begin{minipage}[t]{0.38\textwidth}
  \centering
 \includegraphics[width=1\textwidth]{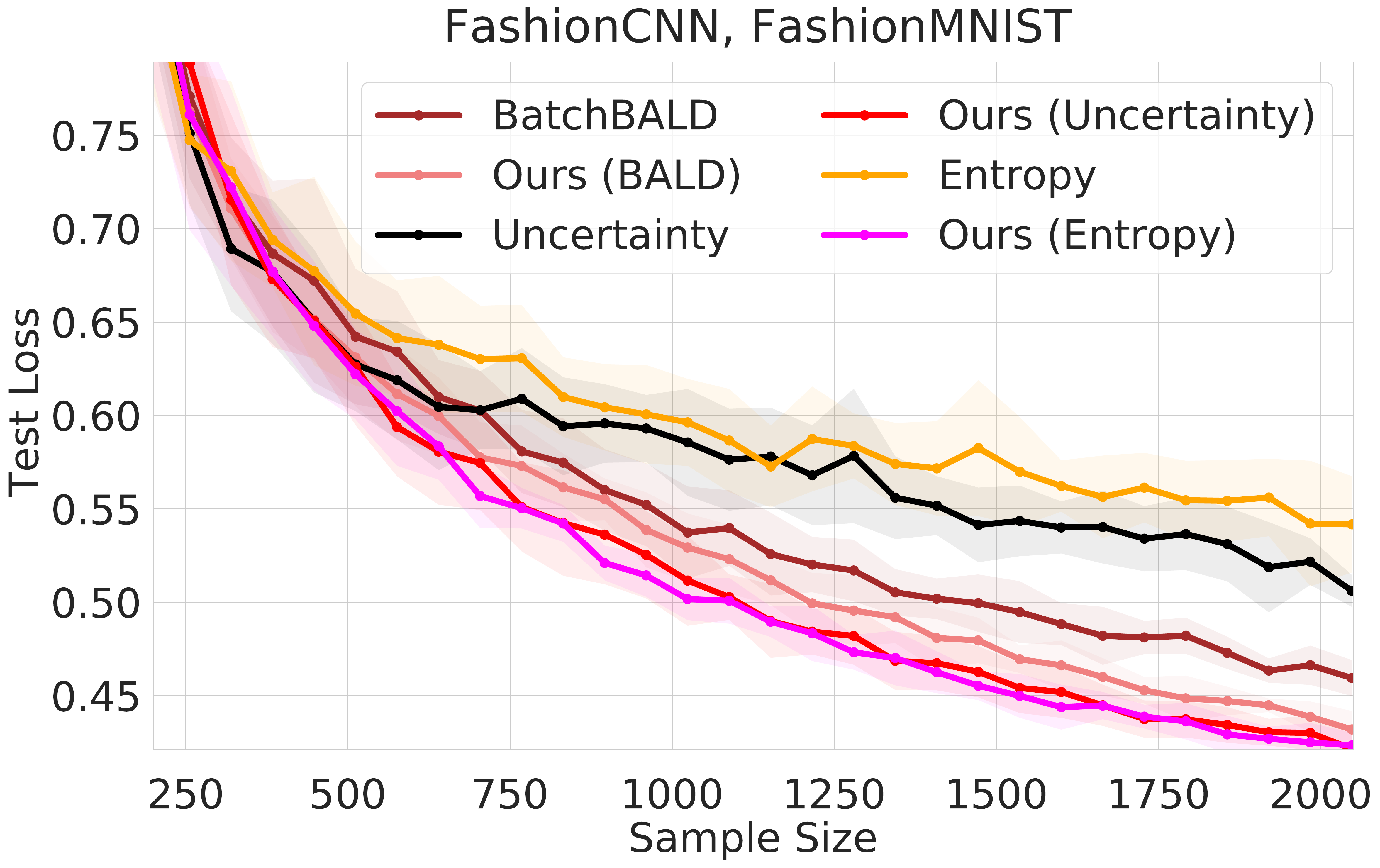}
 \subcaption{FashionCNN}
  \end{minipage}
  \hspace{3ex}
  \begin{minipage}[t]{0.38\textwidth}
  \centering
 \includegraphics[width=1\textwidth]{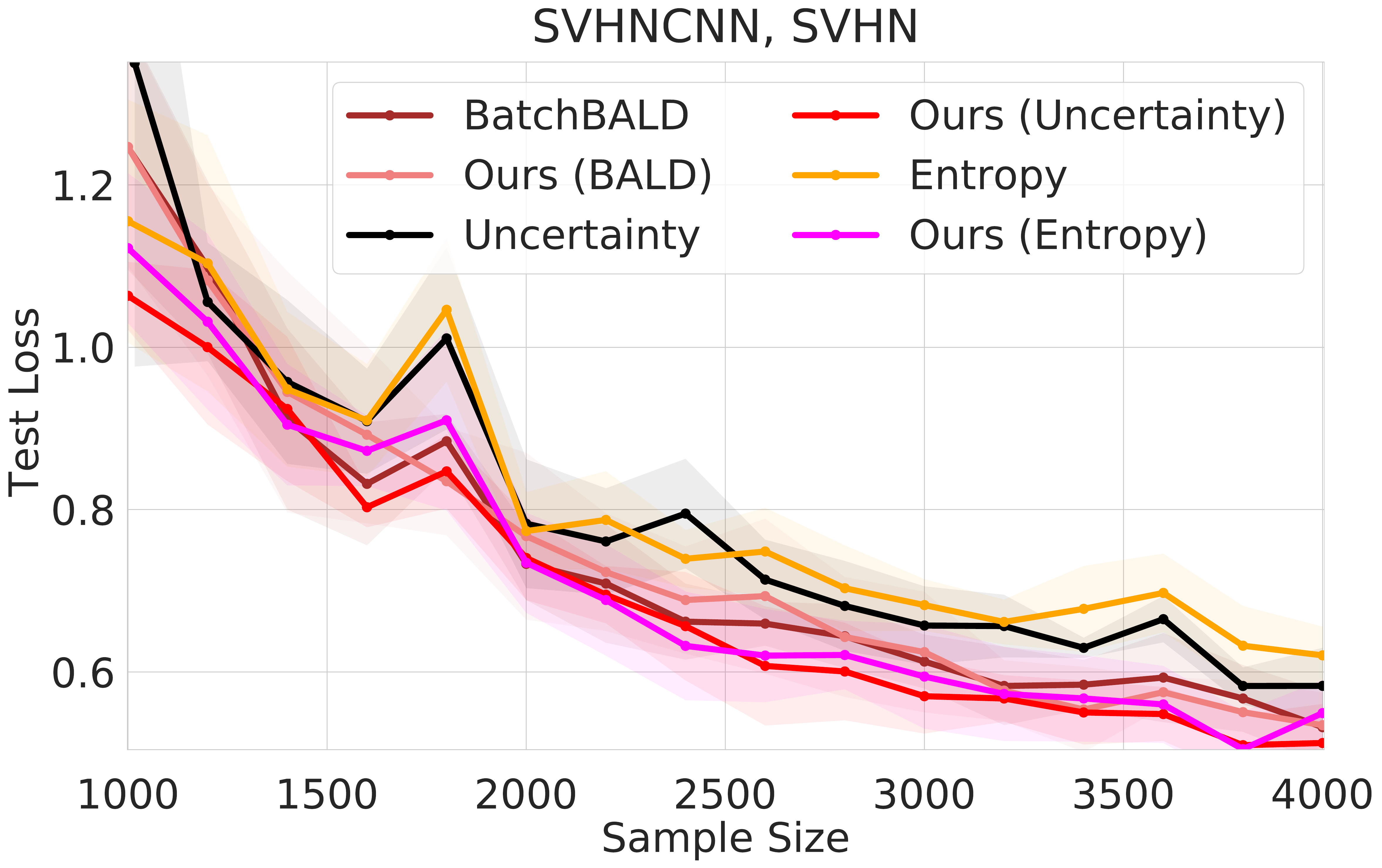}
 \subcaption{SVHN}
  \end{minipage}%
  
		\caption{The performance of our algorithm when instantiated with informativeness metrics from prior work compared to that of existing greedy approaches. Using \textsc{AdaProd$^+$} off-the-shelf with the corresponding metrics took only a few lines of code and lead to strict gains in performance on all evaluated benchmark data sets. }
	\label{fig:boosting}
	\vspace{-5px}
\end{figure*}

\subsection{Boosting Prior Approaches}
\label{sec:boosting}

Despite the favorable results presented in the previous subsections, a lingering question still remains: to what extent is our choice of the loss as the uncertainty metric responsible for the effectiveness of our approach? More generally, can we expect our algorithm to perform well off-the-shelf -- and even lead to improvements over greedy acquisition -- with other choices for the loss?  To investigate, we implement three variants of our approach, \textsc{Ours (Uncertainty)}, \textsc{Ours (Entropy)}, and \textsc{Ours (BALD)} that are instantiated with losses defined in terms of uncertainty, entropy, BALD metrics respectively, and compare to their corresponding greedy variants on SVHN and FashionMNIST. We note that the uncertainty loss corresponds to $\ell_{t,i} = \max_{j} f_t(x_i)_j \in [0,1]$ and readily fits in our framework. For the \textsc{Entropy} and \textsc{BALD} loss, the application is only slightly more nuanced in that we have to be careful that losses are bounded in the interval $[0,1]$. This can be done by scaling the losses appropriately, e.g., by normalizing the losses for each round to be in $[0,1]$ or scaling using a priori knowledge, e.g., the maximum entropy is $\log(k)$ for a classification task with $k$ classes.

The performance of the compared algorithms are shown in
Fig.~\ref{fig:boosting}. Note that for all evaluated data sets and metrics, our approach fares significantly better than its greedy counterpart. In other words, applying \textsc{AdaProd$^+$} off-the-shelf with existing informativeness measures leads to strict improvements compared to their greedy variants. As seen from Fig.~\ref{fig:boosting}, our approach has potential to yield up to a $5\%$ increase in test accuracy, and in all cases, achieves significantly lower test loss. 


\begin{figure*}[htb!]
  \centering
  \hfill
  \begin{minipage}[b]{0.33\textwidth}
  \centering
 \includegraphics[width=1\textwidth]{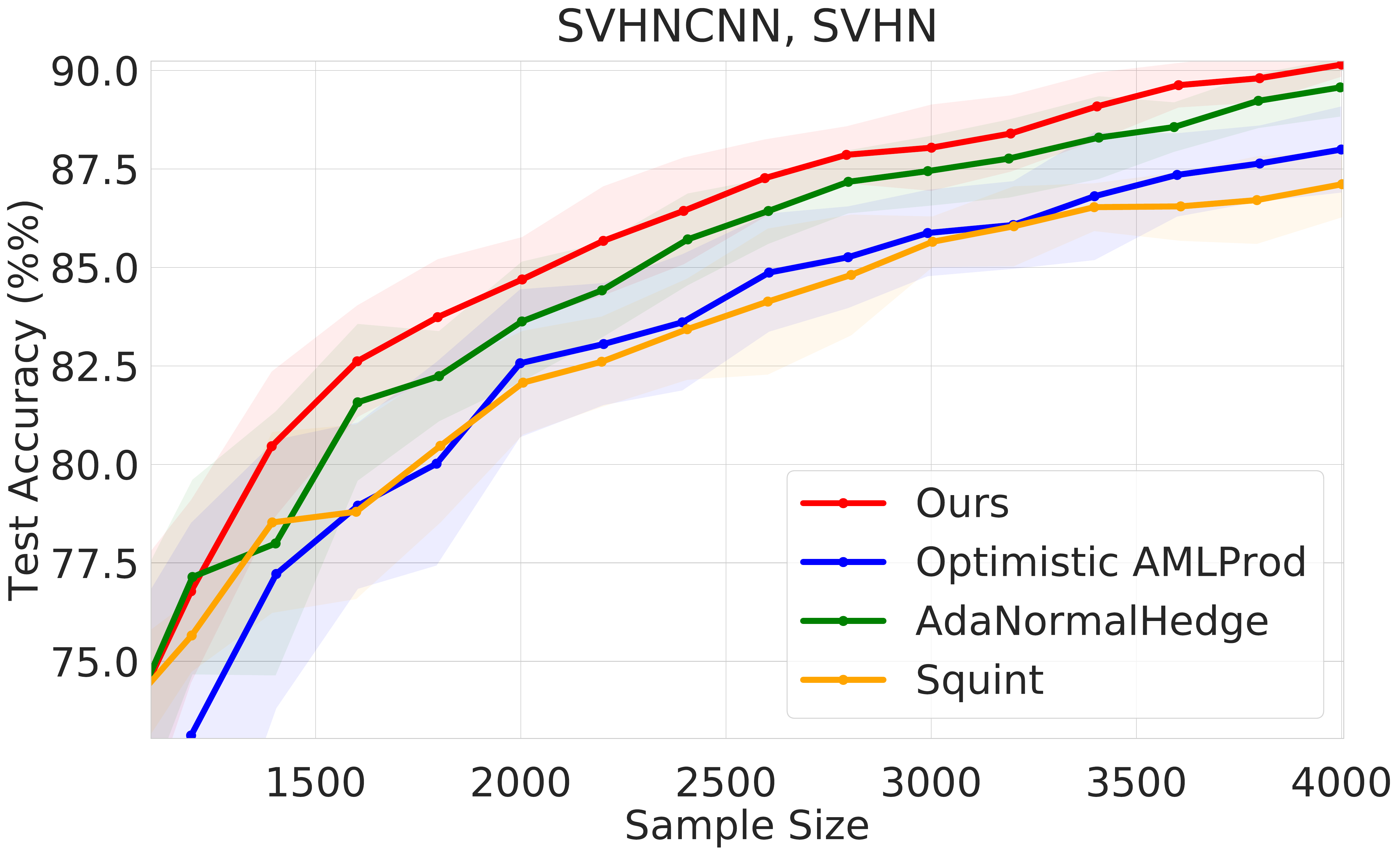}
  \end{minipage}%
  \hfill
    \begin{minipage}[b]{0.33\textwidth}
  \centering
 \includegraphics[width=1\textwidth]{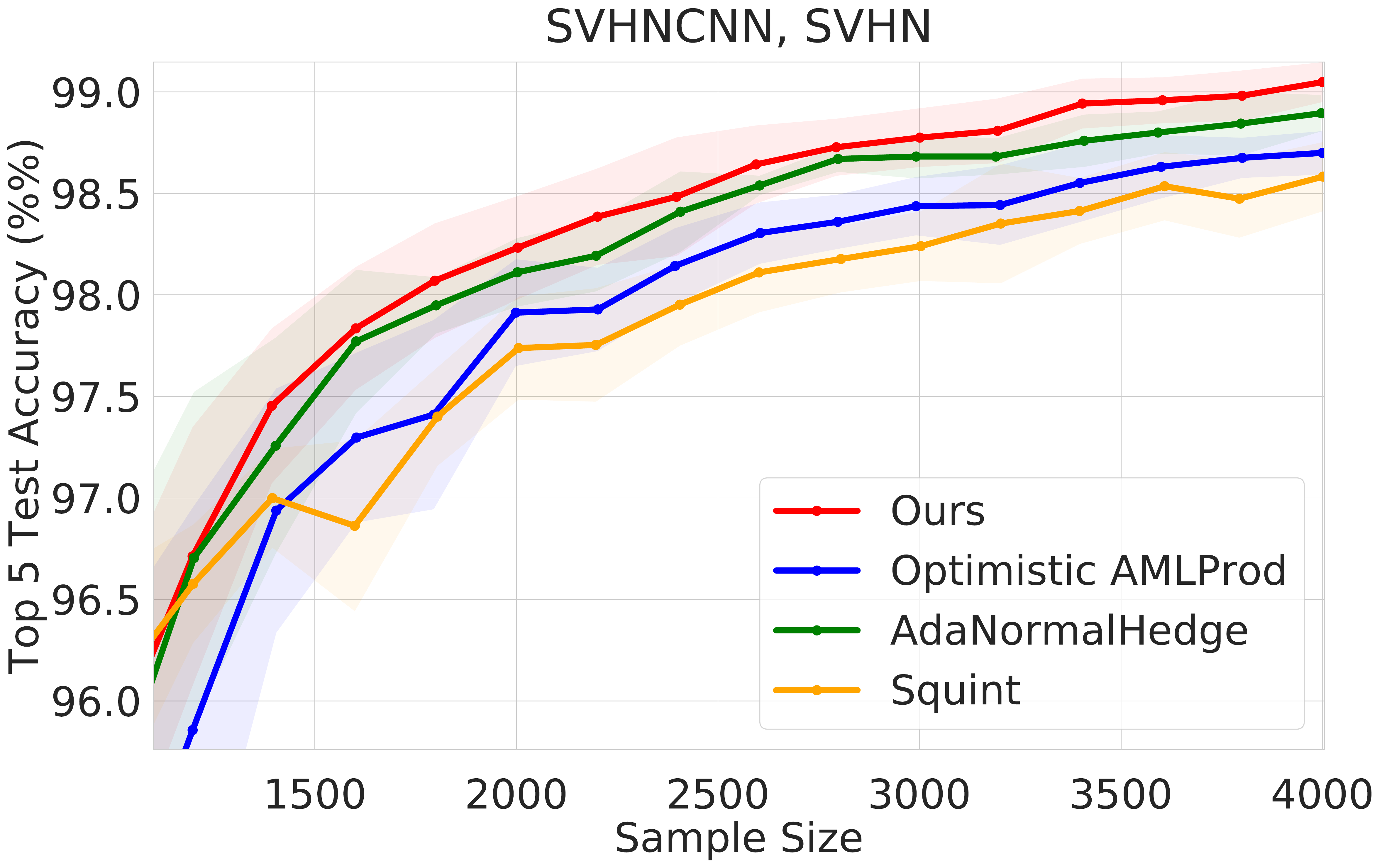}
  \end{minipage}
    \hfill
      \begin{minipage}[b]{0.33\textwidth}
  \centering
 \includegraphics[width=1\textwidth]{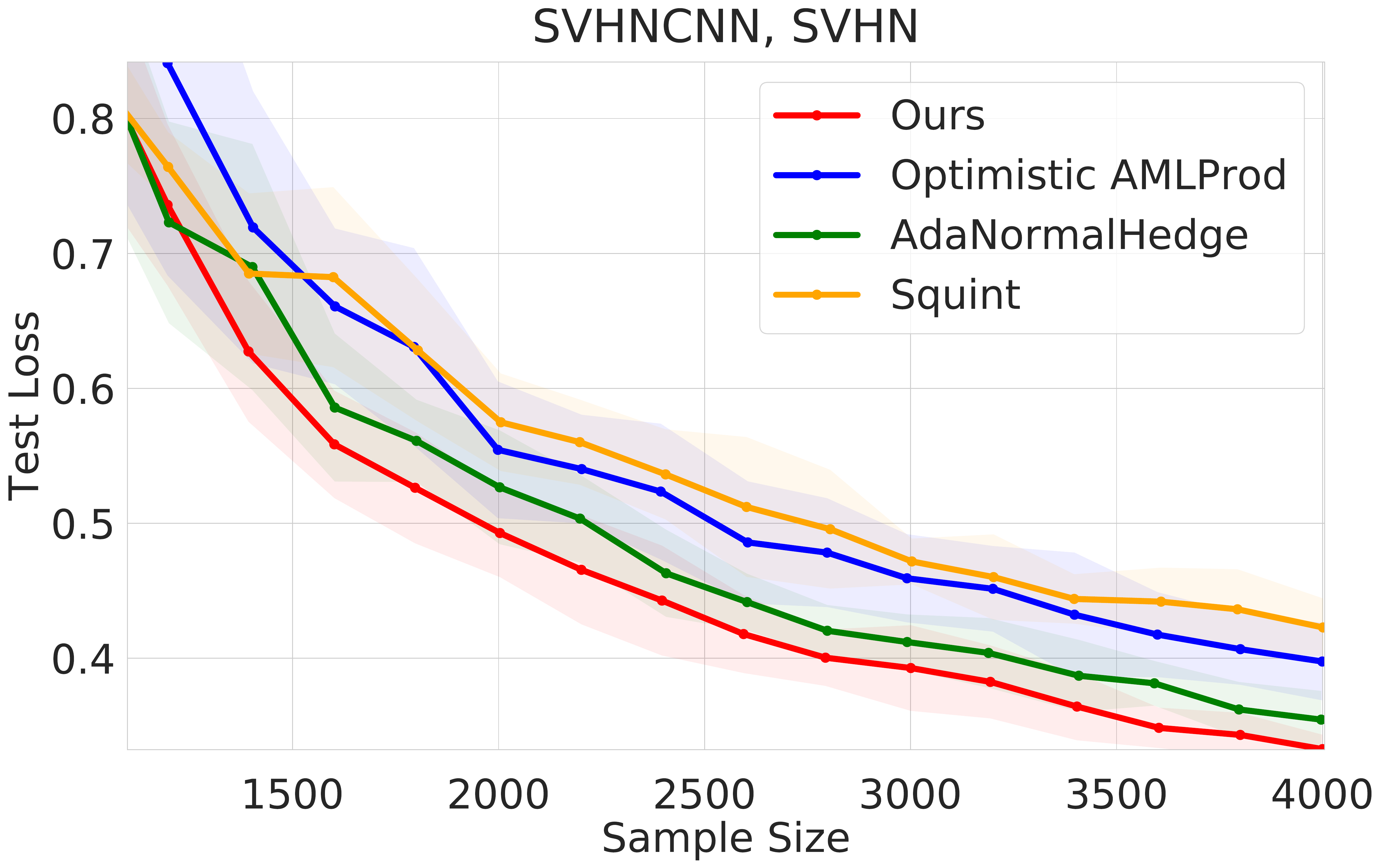}
  \end{minipage}
  \hfill
\vspace{-10px}

  \begin{minipage}[b]{0.33\textwidth}
  \centering
 \includegraphics[width=\textwidth]{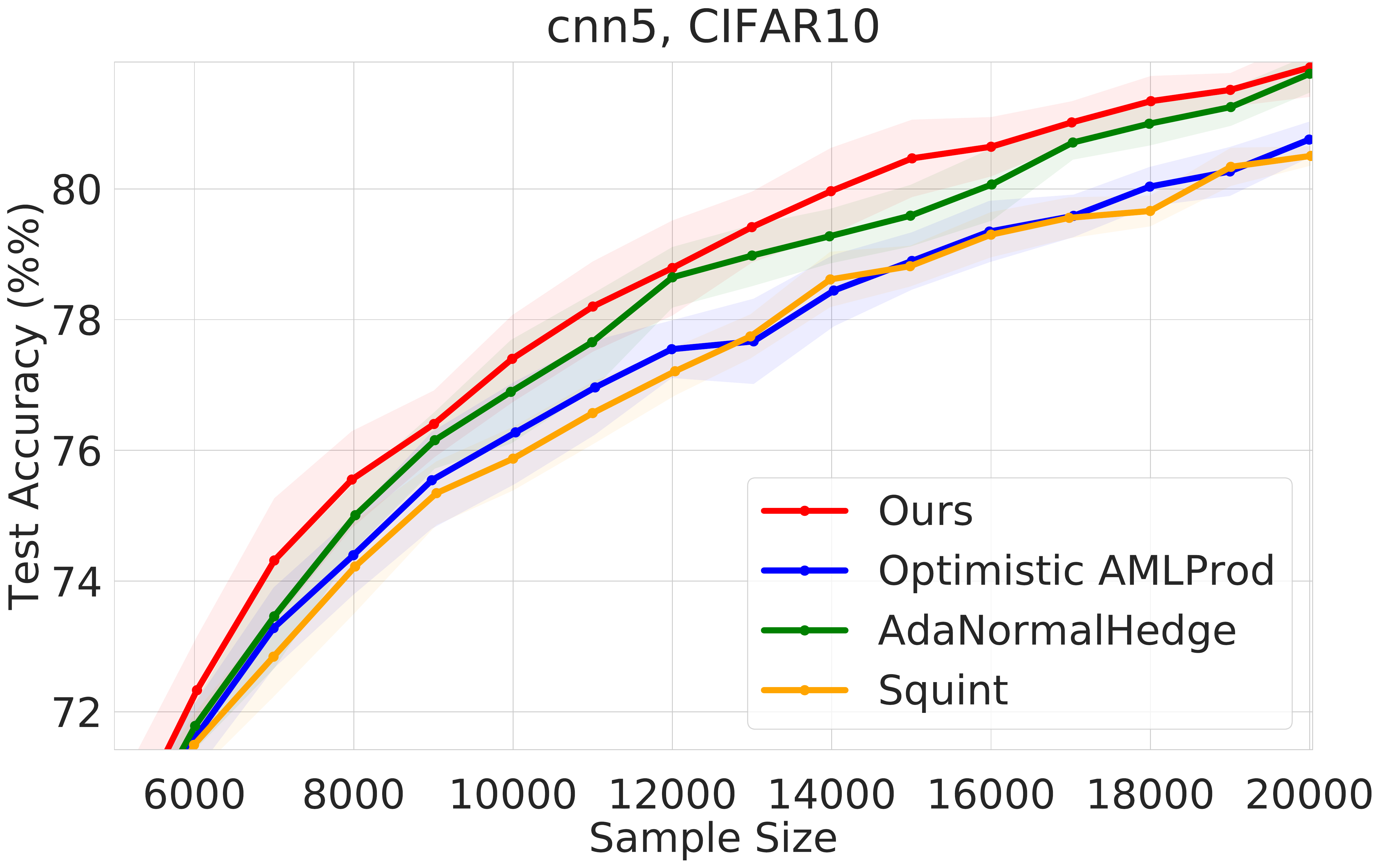}
  \end{minipage}%
  \hfill
    \begin{minipage}[b]{0.33\textwidth}
  \centering
 \includegraphics[width=\textwidth]{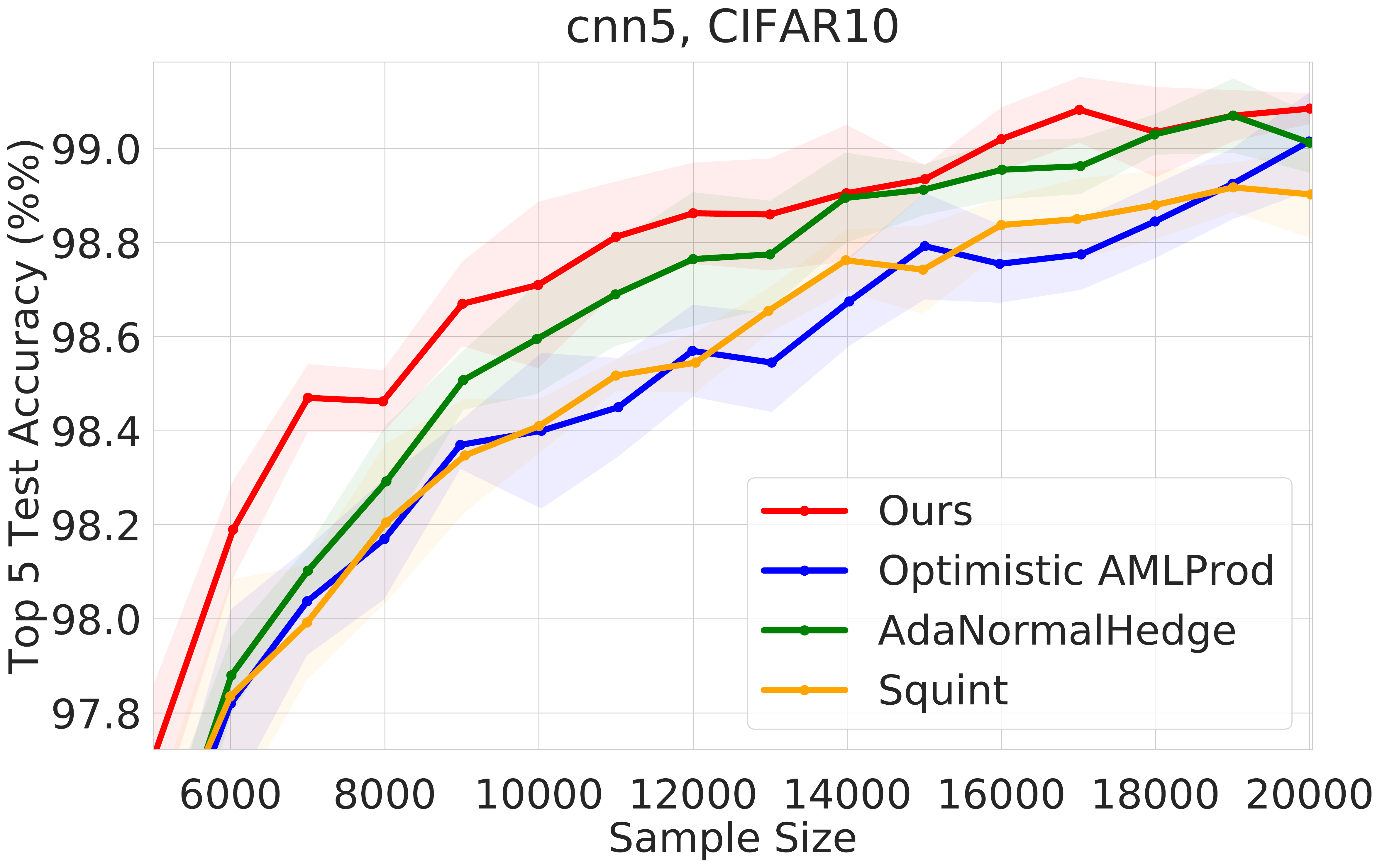}
  \end{minipage}
    \hfill
      \begin{minipage}[b]{0.33\textwidth}
  \centering
 \includegraphics[width=\textwidth]{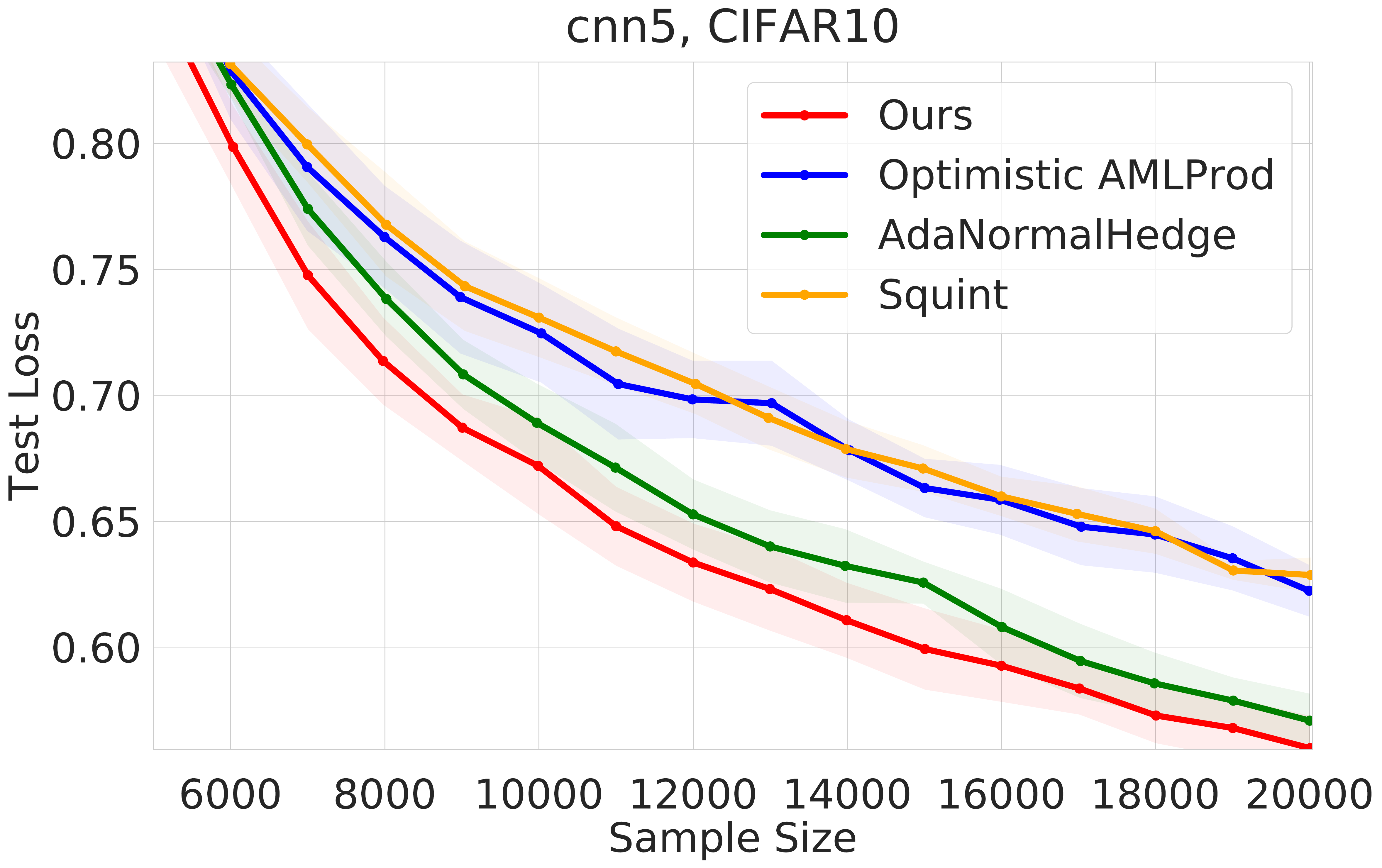}
  \end{minipage}
  \vspace{-15px}
\caption{Comparisons with competing algorithms for learning with prediction advice on the SVHN (first row) and CIFAR10 (second row) data sets. In both scenarios, \textsc{AdaProd}$^*$ outperforms the compared algorithms, and significantly improves on its predecessor, Optimistic AMLProd, on both data sets and all evaluated metrics.}
	\label{fig:comp-experts}
	\vspace{-2.5ex}
\end{figure*}

\subsection{Comparison to Existing Expert Algorithms}
\label{sec:comparison}
In this section, we consider the performance of $\textsc{AdaProd}^+$ relative to that of state-of-the-art algorithms for learning with prediction advice. In particular, compare our approach to \textsc{Optimistic AMLProd}~\cite{wei2017tracking}, \textsc{AdaNormalHedge(.TV)}~\cite{luo2015achieving}, and \textsc{Squint(.TV)}~\cite{koolen2015second}  on the SVHN and CIFAR10 data sets. Fig.~\ref{fig:comp-experts} depicts the results of our evaluations. As the figures show, our approach outperforms the compared approaches across both data sets in terms of all of the metrics considered. \textsc{AdaNormalHedge} comes closest to our method in terms of performance. Notably, the improved learning rate schedule (see Sec.~\ref{sec:analysis}) of \textsc{AdaProd}$^+$ compared to that of \textsc{Optimistic AMLProd} enables up to $3\%$ improvements on test error on, e.g.,  SVHN and $2\%$ on CIFAR10. 





\section{Conclusion}
\label{sec:conclusion}
In this paper, we introduced a low-regret active learning approach based on formulating the problem of data acquisition as  that of prediction with experts. Building on our insights on the existing research gap in active learning, we introduced an efficient algorithm with performance guarantees that is tailored to achieve low regret on predictable instances while remaining resilient to adversarial ones. Our empirical evaluations on large-scale real-world data sets and architectures substantiate the reliability of our approach in outperforming naive uniform sampling and show that it leads to consistent and significant improvements over existing work. Our analysis and evaluations suggest that $\textsc{AdaProd}^+$ can be applied off-the-shelf with existing informativeness measures to improve upon greedy selection, and likewise can scale with future advances in uncertainty or informativeness quantification. In this regard, we hope that this work can contribute to the advancement of reliably effective active learning approaches that can one day become an ordinary part of every practitioner's toolkit, just like Adam and SGD have for stochastic optimization.

\bibliography{refs}
\bibliographystyle{unsrt}

\clearpage
\newpage
\appendix

In this supplementary material, we provide the full proofs of our analytical results (Sec.~\ref{sec:supp-analysis}), implementation details of our full algorithm capable of batch sampling (Sec.~\ref{sec:supp-method}), details of experiments and additional evaluations (Sec.~\ref{sec:supp-add-eval}), and a discussion of limitations and future work (Sec.~\ref{sec:supp-discussion}).

\section{Analysis}
\label{sec:supp-analysis}
In this section, we present the full proofs and technical details of the claims made in Sec.~\ref{sec:analysis}. The outline of our analysis as follows. We first consider the 
\emph{base} \textsc{AdaProd}$^+$ algorithm (shown as Alg.~\ref{alg:base-adaprod}), which is nearly the same algorithm as \textsc{AdaProd}$^+$, with the exception that it is meant to be a general purpose algorithm for a setting with $K$ experts ($K$ is not necessarily equal to the number of points $n$). We show that this algorithm retains the original regret guarantees with respect to a stationary competitor of Adapt-ML-Prod.

We then consider the thought experiment where we use this standard version of our algorithm with the $K = n T$ sleeping experts reduction shown in~\cite{wei2017tracking,gaillard2014second} to obtain guarantees for adaptive regret. This leads us to the insight (as in~\cite{luo2015achieving,koolen2015second}) that we do not need to keep track of the full set of $K$ experts, and can instead keep track of a much smaller (but growing) set of experts in an efficient way without compromising the theoretical guarantees.

{
\hfill
\begin{minipage}{.95\linewidth}
\centering
\begin{algorithm}[H]
\caption{\textsc{Base AdaProd$^+$}}
\label{alg:base-adaprod}
\begin{spacing}{1.2}
\begin{algorithmic}[1]
\STATE For all $i \in [K]$, $C_{i,0} \gets 0$; \,  $\eta_{0,i} \gets \sqrt{\log(K)/2}$;\,  $w_{0,i} = 1$; \, $\hat r_{1, i} = 0$; \alglinelabel{lin:init}
\FOR{each round $t \in [T]$}
    \STATE $p_{t,i} \gets \eta_{t-1, i} w_{t-1, i} \exp(\eta_{t-1, i} \, \hat r_{t,i} )$   for each $i \in [K]$ \alglinelabel{lin:prob}
    \STATE $p_{t,i} \gets \nicefrac{p_{t,i}}{ \sum_{j \in [K]} p_{t,j}}$ for each $i \in [K]$ \COMMENT{Normalize} 
    \STATE \text{Adversary reveals $\ell_t$ and we suffer loss $\tilde \ell_t = \dotp{\ell_t}{p_t}$} \\
    \STATE For all $i \in [K]$, set $r_{t,i} \gets \tilde \ell_t - \ell_{t,i}$ 
    \STATE For all $i \in [K]$, set $C_{t,i} \gets C_{t-1,i} + (\hat r_{t,i} - r_{t,i})^2$
    \STATE Get prediction $\hat r_{t+1} \in [-1,1]^{K}$ for next round (see Sec.~\ref{subsec:adaprod})
    \STATE  For all $i \in [K]$, update the learning rate
    $$\eta_{t, i} \gets \min \left \{\eta_{t-1, i}, \frac{2}{3 (1 + \hat r_{t+1, i})}, \sqrt{\frac{\log(K)}{C_{t,i}}} \right \}$$ \alglinelabel{lin:learning-rate}
    \STATE For all $i \in [K]$, update the weights
    \begin{align*} w_{t,i} &\gets \left (w_{t-1,i} \exp \left(\eta_{t-1, i} r_{t,i} - \eta^2_{t-1, i} (r_{t,i} - \hat r_{t,i})^2 \right)   \right)^{\eta_{t, i} / \eta_{t-1,i}} 
    \end{align*} \alglinelabel{lin:weight-update}
\ENDFOR
\end{algorithmic}
\end{spacing}
\end{algorithm}
\end{minipage}
\hfill
}

\subsection{Recovering Optimistic Adapt-ML-Prod Guarantees for Alg.~\ref{alg:base-adaprod}}
We begin by observing that Alg.~\ref{alg:base-adaprod} builds on the standard Optimistic Adapt-ML-Prod algorithm~\cite{wei2017tracking} by using a different initialization of the variables (Line~\ref{lin:init}) and upper bound imposed on the learning rates (as in Alg.~\ref{alg:adaprod}, and analogously, in Line~\ref{lin:learning-rate} of Alg.~\ref{alg:base-adaprod}). Hence, the proof is has the same structure as~\cite{wei2017tracking,gaillard2014second}, and we prove all of the relevant claims (at times, in slightly different ways) below for clarity and completeness. We proceed with our key lemma about the properties of the learning rates.

\begin{lemma}[Properties of Learning Rates]
\label{lem:learning-rate}
Assume that the losses are bounded $\ell_{t} \in [0,1]^K$ and that the learning rates $\eta_{t,i}$ are set according to Line~\ref{lin:learning-rate} of Alg.~\ref{alg:base-adaprod} for all $t \in [T]$ and $i \in [K]$, i.e.,
$$
\eta_{t, i} \gets \min \left \{\eta_{t-1, i}, \frac{2}{3 (1 + \hat r_{t+1, i})}, \sqrt{\frac{\log(K)}{C_{t,i}}} \right \}.
$$
Then, all of the following hold for all $t\in [T]$ and $i \in [K]$:
\begin{enumerate}
    \item $
    \eta_{t,i} (r_{t+1,i}  - \hat r_{t+1,i}) - \eta_{t,i}^2 (r_{t+1,i}  - \hat r_{t+1,i})^2 \leq  \log \left (1 + \eta_{t,i} (r_{t+1,i}  - \hat r_{t+1,i}) \right),$ \label{eqn:prod}
    \item 
    $
    x \leq x^{\eta_{t,i}/\eta_{t+1,i}} + 1 -  \frac{\eta_{t+1,i}}{\eta_{t,i}} \quad \forall{x \ge 0},
    $ \label{eqn:power}
    \item $\frac{\eta_{t,i} - \eta_{t+1,i}}{\eta_{t,i}} \leq \log( \eta_{t,i} / \eta_{t+1,i})$ \label{eqn:log}.
\end{enumerate}

\end{lemma}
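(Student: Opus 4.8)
All three claims are elementary scalar inequalities, so the plan is to dispatch them in increasing order of subtlety, the only genuine work being the range condition underlying Part~\ref{eqn:prod}. Throughout I would use the fact that the learning rates are non-increasing in $t$ (immediate from the $\min$ defining $\eta_{t,i}$ in Line~\ref{lin:learning-rate}), which gives $\eta_{t+1,i} \le \eta_{t,i}$.

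\textbf{Part~\ref{eqn:log} (easiest).} I would simply invoke the standard inequality $\log u \le u - 1$, valid for all $u > 0$. Setting $u = \eta_{t+1,i}/\eta_{t,i} \in (0,1]$ and rearranging yields $\frac{\eta_{t,i}-\eta_{t+1,i}}{\eta_{t,i}} = 1 - u \le -\log u = \log(\eta_{t,i}/\eta_{t+1,i})$, which is exactly the claim.

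\textbf{Part~\ref{eqn:power}.} Write $\beta = \eta_{t,i}/\eta_{t+1,i} \ge 1$. I would apply weighted AM--GM (Young's inequality) with weights $1/\beta$ and $1 - 1/\beta$ to the numbers $x^\beta$ and $1$, obtaining $x = (x^\beta)^{1/\beta}\cdot 1^{\,1-1/\beta} \le \frac{1}{\beta}x^\beta + \bigl(1 - \tfrac1\beta\bigr)$ for every $x \ge 0$. Bounding $\frac1\beta x^\beta \le x^\beta$ (since $1/\beta \le 1$ and $x^\beta \ge 0$) then gives $x \le x^{\eta_{t,i}/\eta_{t+1,i}} + 1 - \eta_{t+1,i}/\eta_{t,i}$, as required.

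\textbf{Part~\ref{eqn:prod} (the crux).} Abbreviate $x = \eta_{t,i}(r_{t+1,i} - \hat r_{t+1,i})$; the claim is precisely the ``prod'' inequality $x - x^2 \le \log(1+x)$. My plan is first to pin down the range of $x$ and then to verify the inequality on that range. Since $r_{t+1,i},\hat r_{t+1,i} \in [-1,1]$, we have $r_{t+1,i} - \hat r_{t+1,i} \ge -(1+\hat r_{t+1,i})$; multiplying by $\eta_{t,i} \le \frac{2}{3(1+\hat r_{t+1,i})}$ (the modified learning-rate cap, which is exactly the ingredient that makes this work) gives $x \ge -\eta_{t,i}(1+\hat r_{t+1,i}) \ge -\tfrac23$, with the degenerate case $\hat r_{t+1,i}=-1$ forcing $x \ge 0$ directly. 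I would then set $h(x) = \log(1+x) - x + x^2$ and show $h \ge 0$ on $[-\tfrac23,\infty)$ via the sign of $h'(x) = \frac{x(1+2x)}{1+x}$: the function increases on $(-\tfrac23,-\tfrac12)$, decreases on $(-\tfrac12,0)$, and increases on $(0,\infty)$, so its minimum over $[-\tfrac23,\infty)$ is $h(0)=0$, while $h(-\tfrac23) = \log\tfrac13 + \tfrac{10}{9} > 0$ confirms the left endpoint causes no trouble.

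\textbf{Anticipated obstacle.} The real content sits entirely in the range bound $x \ge -\tfrac23$: the prod inequality genuinely fails for $x$ slightly smaller (the relevant root of $h$ sits near $-0.683$), so the argument is only viable because the cap $\frac{2}{3(1+\hat r_{t+1,i})}$ is calibrated so that the argument stays inside the valid region. The one place demanding care is therefore checking that this cap delivers $x \ge -\tfrac23$ in every case, including the boundary $\hat r_{t+1,i} = -1$; the remaining calculus (the monotonicity analysis of $h$ and the elementary inequalities in Parts~\ref{eqn:power} and~\ref{eqn:log}) is routine.
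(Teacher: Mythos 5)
Your proof is correct and takes essentially the same route as the paper's: the improved Prod inequality $x - x^2 \le \log(1+x)$ on $[-\tfrac{2}{3},\infty)$ combined with the range bound $x \ge -\tfrac{2}{3}$ obtained from the cap $\eta_{t,i} \le \frac{2}{3(1+\hat r_{t+1,i})}$ for Part~1, Young's inequality (weighted AM--GM) with exponent $\eta_{t,i}/\eta_{t+1,i}$ followed by dropping the factor $\eta_{t+1,i}/\eta_{t,i} \le 1$ for Part~2, and an elementary logarithm bound for Part~3, where your use of $\log u \le u - 1$ is interchangeable with the paper's mean-value-theorem argument. If anything, your sign analysis of $h'(x) = \frac{x(1+2x)}{1+x}$ together with the endpoint check $h(-\tfrac{2}{3}) = \log\tfrac{1}{3} + \tfrac{10}{9} > 0$ spells out what the paper only asserts in a footnote ``by inspection,'' and your explicit treatment of the boundary case $\hat r_{t+1,i} = -1$ is a detail the paper glosses over.
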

\begin{proof}
For the first claim,  observe that the range of admissible values in the original Prod inequality~\cite{cesa2006prediction}
$$
\forall{x \ge -1/2} \quad x - x^2 \leq \log(1 + x)
$$
can be improved\footnote{By inspection of the root of the function $g(x) = \log(1 + x) - x + x^2$ closest to $x = -1/2$, which we know exists since $g(-1/2) > 0$ while $g(-1) < 0$.} to $\forall{x \ge -2/3}$. Now let $x = \eta_{t,i} (r_{t+1,i}  - \hat r_{t+1,i})$, and observe that since $\ell_t \in [0,1]^K$, we have $r_{t+1,i} = \dotp{p_{t+1}}{\ell_{t+1}} - \ell_{t+1} \in [-1,1]$, and so
\begin{align*}
x & \ge \eta_{t,i} (-1 - \hat r_{t+1,i}) = - \eta_{t,i}(1 + \hat r_{t+1,i}) \\
&\ge - 2/3,
\end{align*}
where in the last inequality we used the upper bound on $\eta_{t,i} \leq 2/(3(1 + \hat r_{t+1,i})$ which holds by definition of the learning rates.

For the second claim, recall Young's inequality\footnote{This follows by taking logarithms and using the concavity of the logarithm function.} which states that for non-negative $a,b$, and $p\ge 1$, $$ab \leq a^p/p + b^{p/(p-1)} (1 - 1/p).$$ For our application, we set $a = x$, $b = 1$, and $p = \eta_{t,i}/\eta_{t+1,i}$. Observe that $p$ is indeed greater than 1 since the learning rates are non-increasing over time (i.e., $\eta_{t+1,i} \leq \eta_{t,i}$ for all $t$ and $i$) by definition. Applying Young's inequality, we obtain
\begin{align*}
x \leq x^{\eta_{t,i}/\eta_{t+1,i}} (\eta_{t+1,i}/\eta_{t,i}) + \frac{\eta_{t,i} - \eta_{t+1,i}}{\eta_{t,i}},
\end{align*}
and the claim follows by the fact that the learning rates are non-increasing.

For the final claim, observe that the derivative of $\log(x)$ is $1/x$, and so by the mean value theorem we know that there exists $c \in [\eta_{t+1,i}, \eta_{t,i}]$ such that 
$$
\frac{\log(\eta_{t,i}) - \log(\eta_{t+1,i})}{\eta_{t,i} - \eta_{t+1,i}} = \frac{1}{c}.
$$
Rearranging and using $c \leq \max\{\eta_{t,i}, \eta_{t+1,i}\} = \eta_{t,i}$, we obtain
$$
\log( \eta_{t,i} / \eta_{t+1,i}) = \frac{\eta_{t,i} - \eta_{t+1,i}}{c} \ge \frac{\eta_{t,i} - \eta_{t+1,i}}{\eta_{t,i}}.
$$
\end{proof}
Having established our helper lemma, we now proceed to bound the regret with respect to a single expert as in~\cite{wei2017tracking,gaillard2014second}. The main statement is given by the lemma below.

\begin{lemma}[\textsc{Base AdaProd$^+$} Static Regret Bound]
\label{lem:base-bound}
The static regret of Alg.~\ref{alg:base-adaprod} with respect to any expert $i \in [K]$, $\sum_{t} r_{t,i}$, is bounded by
$$
\Bigo \left( \log K + \log\log T + (\sqrt{\log K} + \log \log T) \sqrt{C_{T,i}} \right),
$$
where $C_{T,i} =  \sum \nolimits_{t \in [T]} (r_{t,i} - \hat r_{t,i})^2$.
\end{lemma}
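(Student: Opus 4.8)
The plan is to run the optimistic Adapt-ML-Prod potential argument of~\cite{gaillard2014second,wei2017tracking}, but to drive the three elementary facts collected in Lemma~\ref{lem:learning-rate}. I would introduce the potential $\Phi_t = \sum_{i \in [K]} \eta_{t,i} w_{t,i}$, a normalized sum of the weights, with $\Phi_0 = K\sqrt{\log(K)/2}$, and track the growth of $\log \Phi_t$ across rounds. The point of the mixing rule in Line~\ref{lin:prob}, namely $p_{t,i} \propto \eta_{t-1,i} w_{t-1,i} \exp(\eta_{t-1,i} \hat r_{t,i})$, is that it is matched to the potential so that the \emph{first-order} contribution telescopes: because $\tilde \ell_t = \dotp{\ell_t}{p_t}$ we have $\sum_i p_{t,i} r_{t,i} = 0$, and because $\hat r_{t}$ is defined through the binary-search fixed point $\alpha = \dotp{p_t}{\hat\ell_t}$ we likewise get $\sum_i p_{t,i}\hat r_{t,i} = 0$, so the linear-in-$(r_{t,i}-\hat r_{t,i})$ term vanishes after normalization.

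The single-round increment rests on part~\ref{eqn:prod} of Lemma~\ref{lem:learning-rate}. Substituting $x = \eta_{t-1,i}(r_{t,i} - \hat r_{t,i})$ converts the exponential weight update into the multiplicative Prod form $w_{t-1,i}\exp(\eta_{t-1,i}\hat r_{t,i})\bigl(1 + \eta_{t-1,i}(r_{t,i}-\hat r_{t,i})\bigr)$; the substitution is legitimate precisely because $x \ge -2/3$, which is exactly the widened admissibility range enforced by the cap $\eta_{t-1,i} \le 2/(3(1+\hat r_{t,i}))$ in Line~\ref{lin:learning-rate} (this is the one spot that departs from~\cite{wei2017tracking}). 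The second ingredient handles the decreasing learning rates: each weight is raised to the power $\eta_{t,i}/\eta_{t-1,i} \le 1$, which I linearize with Young's inequality (part~\ref{eqn:power}) and then charge the discrepancy $1 - \eta_{t,i}/\eta_{t-1,i}$ to $\log(\eta_{t-1,i}/\eta_{t,i})$ via part~\ref{eqn:log}. Summing the latter telescopes to $\sum_i \log(\eta_{0,i}/\eta_{T,i}) = \BigO(K \log T)$, which enters $\Phi_T$ \emph{additively}, so that $\log \Phi_T = \BigO\bigl(\log(K \log T)\bigr) = \BigO(\log K + \log\log T)$. This outer logarithm is exactly the mechanism that turns the $\log T$ learning-rate drift into the advertised $\log\log T$.

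Next I would lower-bound the surviving weight. Unrolling the recursion of Line~\ref{lin:weight-update}, the repeated exponents $\eta_{t,i}/\eta_{t-1,i}$ telescope multiplicatively and yield the clean identity $\log w_{T,i} = \eta_{T,i}\sum_t r_{t,i} - \eta_{T,i}\sum_t \eta_{t-1,i}(r_{t,i}-\hat r_{t,i})^2$. Relating $w_{T,i}$ to $\Phi_T$ and invoking the standard second-order summation estimate $\sum_t \eta_{t-1,i}(r_{t,i}-\hat r_{t,i})^2 = \BigO\bigl(\sqrt{\log K}\,\sqrt{C_{T,i}}\bigr)$ (the bound $\sum_t c_t / \sqrt{\sum_{s \le t} c_s} \le 2\sqrt{C_{T,i}}$ together with $\eta_{t-1,i} \le \sqrt{\log K / C_{t-1,i}}$), I obtain an inequality of the form $\eta_{T,i}\sum_t r_{t,i} \le \BigO(\log K + \log\log T) + \eta_{T,i}\,\BigO\bigl(\sqrt{\log K \, C_{T,i}}\bigr)$. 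Dividing through by $\eta_{T,i}$ and substituting $\eta_{T,i} = \min\{\eta_{0,i}, 2/(3(1+\hat r_{T+1,i})), \sqrt{\log K / C_{T,i}}\}$, with a short case split on which term of the minimum is active, produces the claimed bound: the binding case $\eta_{T,i} = \sqrt{\log K / C_{T,i}}$ yields the $(\sqrt{\log K} + \log\log T)\sqrt{C_{T,i}}$ term, while the two constant caps contribute the additive $\log K + \log\log T$.

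The main obstacle is the bookkeeping for the time-varying, per-expert learning rates. In the fixed-rate Prod analysis the potential is exactly non-increasing and the final optimization over $\eta$ is a one-line calculation; here I must charge every learning-rate decrement so that the cumulative cost stays inside the potential (and hence degrades only to $\log\log T$ after the outer logarithm), while simultaneously keeping the optimistic correction $-\eta_{t-1,i}^2(r_{t,i}-\hat r_{t,i})^2$ aligned so that the second-order sum collapses to $\sqrt{C_{T,i}}$ rather than to $C_{T,i}$. Verifying that the enlarged learning rates of Line~\ref{lin:learning-rate} still respect the widened Prod range is the delicate check that makes the tighter schedule admissible.
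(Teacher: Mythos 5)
Your overall route is the paper's own: the Prod potential argument driven by the three inequalities of Lemma~\ref{lem:learning-rate}, the unrolled identity $\log w_{T,i} = \eta_{T,i}\sum_{t \in [T]}\bigl(r_{t,i} - \eta_{t-1,i}(r_{t,i}-\hat r_{t,i})^2\bigr)$, the inductive lower bound $\eta_{T,i} \ge \min\{1/3, \sqrt{\log K / C_{T,i}}\}$ feeding a case split on the minimum, and a Lemma-14-style estimate from~\cite{gaillard2014second} for the second-order sum. But two concrete steps fail as written. The larger one is your choice of potential $\Phi_t = \sum_{i \in [K]} \eta_{t,i} w_{t,i}$: the paper tracks the \emph{unweighted} sum $W_t = \sum_{i \in [K]} w_{t,i}$, so the accumulated learning-rate drift $\Bigo(K \log T)$ sits inside a single outer logarithm and $\log W_T = \Bigo(\log K + \log\log T)$ directly dominates $\log w_{T,i}$. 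With your $\eta$-weighted potential, extracting the weight of expert $i$ costs
$$
\log w_{T,i} \le \log \Phi_T + \log(1/\eta_{T,i}),
$$
and $\log(1/\eta_{T,i})$ can be of order $\tfrac12 \log(C_{T,i}/\log K) = \Theta(\log T)$; after dividing through by $\eta_{T,i} = \Theta\bigl(\sqrt{\log K / C_{T,i}}\bigr)$ this contributes a term of order $\sqrt{C_{T,i}}\,\log T / \sqrt{\log K}$, which exceeds the advertised $(\sqrt{\log K} + \log\log T)\sqrt{C_{T,i}}$ whenever $T \gg K$. The additive $\log(1/\eta_{T,i})$ is no longer shielded by the outer logarithm, which is exactly the mechanism you correctly identified as producing the $\log\log T$; the fix is to keep the $\eta$'s only in the mixture defining $p_t$ (Line~\ref{lin:prob}) and out of the potential.

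The second gap is the first round. You assert that $x = \eta_{t-1,i}(r_{t,i} - \hat r_{t,i}) \ge -2/3$ holds \emph{precisely because} of the cap in Line~\ref{lin:learning-rate}, but $\eta_{0,i} = \sqrt{\log(K)/2}$ is fixed at initialization (Line~\ref{lin:init}) and is not subject to that cap: for $K \ge 3$ it exceeds $2/3$, so with $\hat r_{1,i} = 0$ and $r_{1,i}$ near $-1$, inequality~\eqref{eqn:prod} of Lemma~\ref{lem:learning-rate} is inapplicable at $t = 0$. The paper patches this single round with the crude bound $\exp(x - x^2) \le \exp(1/4)$ valid for all real $x$, yielding $W_1 \le e^{1/4} K$ at the cost of a constant; your argument needs the same patch. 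A lesser bookkeeping point in the same spirit: your second-order estimate pairs $c_{t,i} = (r_{t,i} - \hat r_{t,i})^2$ with $\eta_{t-1,i} \le \sqrt{\log K / C_{t-1,i}}$, whose denominator lags by one term relative to the standard inequality $\sum_t c_t / \sqrt{\sum_{s \le t} c_s} \le 2\sqrt{C_T}$ you invoke; the paper closes this off-by-one via the harmonic-mean bound on the minimum and an application of Lemma~14 of~\cite{gaillard2014second} to the scaled variables $c_{t,i}/4 \in [0,1]$.
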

\begin{proof}
Consider $W_t = \sum_{i \in [K]} w_{t,i}$ to be the sum of \emph{potentials} at round $t$. We will first show an upper bound on the potentials and then show that this sum is an upper bound on the regret of any expert (plus some additional terms). Combining the upper and lower bounds will lead to the statement of the lemma. To this end, we first show that the sum of potentials does not increase too much from round $t$ to $t+1$. To do so, we apply \eqref{eqn:power} from Lemma~\ref{lem:learning-rate} with $x = w_{t+1,i}$ to obtain for each $w_{t+1,i}$
\begin{align*}
    w_{t+1,i} &\leq w_{t+1,i}^{\eta_{t,i}/\eta_{t+1,i}} + \frac{\eta_{t,i} - \eta_{t+1,i}}{\eta_{t,i}}.
\end{align*}
Now consider the first term on the right hand side above and note that
\begin{align*}
    w_{t+1,i}^{\eta_{t,i}/\eta_{t+1,i}} &= w_{t,i} \exp \left(\eta_{t,i} r_{t+1,i} - \eta_{t,i}^2 (r_{t+1,i} - \hat r_{t+1,i})^2 \right) &\text{by definition; see Line~\ref{lin:weight-update}} \\
    &= w_{t,i} \exp(\eta_{t,i} \hat r_{t+1,i}) \exp \left(\eta_{t,i} (r_{t+1,i} - \hat r_{t+1,i}) - \eta_{t,i}^2 (r_{t+1,i} - \hat r_{t+1,i})^2 \right) &\text{adding and subtracting $\eta_{t,i}\hat r_{t+1,i}$} \\
    &\leq w_{t,i} \exp(\eta_{t,i} \hat r_{t+1,i}) \left(1 + \eta_{t,i} (r_{t+1,i} - \hat r_{t+1,i})\right) &\text{by \eqref{eqn:prod} of Lemma~\ref{lem:learning-rate}} \\
    &= w_{t,i} \eta_{t,i}  \exp(\eta_{t,i} \hat r_{t+1,i}) r_{t+1,i} + w_{t,i} \exp(\eta_{t,i} \hat r_{t+1,i}) \underbrace{(1 - \eta_{t,i} \hat r_{t+1,i})}_{\leq \exp(- \eta_{t,i} \hat r_{t+1,i})} &(1 + x \leq e^x \text{ for all real x})\\
    &\leq \underbrace{w_{t,i} \eta_{t,i}  \exp(\eta_{t,i} \hat r_{t+1,i}) }_{\propto p_{t+1,i}} r_{t+1,i} + w_{t,i}.
\end{align*}
As the brace above shows, the first part of the first expression on the right hand side is proportional to $p_{t+1,i}$ by construction (see Line~\ref{lin:prob} in Alg.~\ref{alg:base-adaprod}). Recalling that $r_{t+1,i} = \dotp{p_{t+1}}{\ell_{t+1}} - \ell_{t+1,i}$, we have by dividing and multiplying by the normalization constant,
$$
\sum_{i \in [K]} w_{t,i} \eta_{t,i}  \exp(\eta_{t,i} \hat r_{t+1,i}) r_{t+1,i} = \left(\sum_{i \in [K]} w_{t,i} \eta_{t,i}  \exp(\eta_{t,i} \hat r_{t+1,i}) \right) \sum_{i \in [K]} p_{t+1,i} r_{t+1,i} = 0,
$$
since $\sum_{i \in [K]} p_{t+1,i} r_{t+1,i} = 0$. This shows that $\sum_{i \in [K]} w_{t+1,i}^{\eta_{t,i}/\eta_{t+1,i}} \leq \sum_{i \in [K]} w_{t,i} = W_t$.

Putting it all together and applying \eqref{eqn:log} from Lemma~\ref{lem:learning-rate} to bound $\frac{\eta_{t,i} - \eta_{t+1,i}}{\eta_{t,i}}$, we obtain for the sum of potentials for $t \in [T]$:
\begin{align*}
    W_{t+1} &\leq W_{t} + \sum_{i \in [K]} \log(\eta_{t,i}/\eta_{t+1,i}).
\end{align*}
A subtle issue is that for $t=0$, we have $\eta_{0,i} = \sqrt{\log(K) / 2}$ for all $i \in [n]$, which means that we cannot apply \eqref{eqn:prod} of Lemma~\ref{lem:learning-rate}. So, we have to bound the change in potentials between $W_1$ and $W_0$. Fortunately, since this only occurs at the start, we can use the rough upper bound 
$
\exp(x - x^2) = \exp(x(1-x)) \leq \exp(1/4) \leq 1.285,
$
which holds for all $x \in \mathbb{R}$, to obtain for $t = 0$
\begin{align*}
 w_{t+1,i}^{\eta_{t,i}/\eta_{t+1,i}} &\leq w_{t,i} \exp(\eta_{t,i} \hat r_{t+1, i} )\exp(1/4) \\
 &= w_{t,i}(1 - \eta_{t,i} \hat r_{t+1, i} + \eta_{t,i} \hat r_{t+1, i}) \exp(\eta_{t,i} \hat r_{t+1, i} )\exp(1/4) \\
  &\leq w_{t,i}(\exp(- \eta_{t,i} \hat r_{t+1, i}) + \eta_{t,i} \hat r_{t+1, i}) \exp(\eta_{t,i} \hat r_{t+1, i} )\exp(1/4) \\
  &= \exp(1/4)w_{t,i} +  \hat r_{t+1, i} \underbrace{\exp(1/4) w_{t,i} \eta_{t,i}   \exp(\eta_{t,i} \hat r_{t+1,i}) }_{\propto p_{t+1,i}},
\end{align*}
where we used $1 - x \leq \exp(-x)$. Summing the last expression we obtained across all $i \in [K]$, we have for $t = 0$
$$
 \sum_{i \in [K]} w_{t+1,i}^{\eta_{t,i}/\eta_{t+1,i}} \leq \sum_{i \in [K]} \exp(1/4) w_{t,i},
$$
where we used the fact that $\sum_{i \in [K]} \hat r_{t+1, i} w_{t,i} \eta_{t,i}   \exp(\eta_{t,i} \hat r_{t+1,i}) = 0$ by definition of our predictions. Putting it all together, we obtain $W_1 \leq \exp(1/4) W_0 = \exp(1/4) K$ given that $W_0 = K$.

We can now unroll the recursion in light of the above to obtain
\begin{align*}
W_T &\leq K \exp(1/4) + \sum_{t \in [T]} \sum_{i \in [K]} \log(\eta_{t,i}/\eta_{t+1,i}) \\
&= K \exp(1/4) + \sum_{i \in [K]} \sum_{t \in [T]} \log(\eta_{t,i}/\eta_{t+1,i}) \\
&=  K \exp(1/4) + \sum_{i \in [K]} \log\left(\prod_{t + 1 \in [T]} \eta_{t,i}/\eta_{t+1,i}\right) \\
&= K \exp(1/4) + \sum_{i \in [K]} \log(\eta_{0, i} / \eta_{T,i}) \\
&\leq K \left(\exp(1/4) + \log\left (\max_{i \in [K]} \sqrt{ C_{T,i}}\right) \right) \\
&\leq K \left(\exp(1/4) + \log (4 T)/2 \right).
\end{align*}

Now, we establish a lower bound for $W_t$ in terms of the regret with respect to any expert $i \in [K]$. Taking the logarithm and using the fact that the potentials are always non-negative, we can show via a straightforward induction (as in~\cite{gaillard2014second}) that 
\begin{align*}
    \log(W_T) &\ge \log(w_{T,i}) \ge \eta_{T, i} \sum_{t \in [T]} (r_{t,i} - \eta_{t-1,i}(r_{t,i} - \hat r_{t,i})^2).
\end{align*}
Rearranging, and using the upper bound on $W_T$ from above, we obtain
\begin{equation}
    \label{eqn:regret-bound}
\sum_{t \in [T]} r_{t,i} \leq  \eta_{T, i}^{-1} \log \left(K (1 + \log (\max_{i \in [K]} \sqrt{1 + C_{T,i}} )\right) + \sum_{t \in [T]} \eta_{t-1,i} (r_{t,i} - \hat r_{t,i})^2.
\end{equation}
For the first term in \eqref{eqn:regret-bound}, consider the definition of $\eta_{T,i}$ and note that $\eta_{T,i} \ge \min\{ 1/3, \eta_{T-1, i}, \sqrt{\log(K) / (C_{T,i}})\}$ since $\hat r_{T+1,i} \leq 1$. Now to lower bound $\eta_{T,i}$, consider the claim that $\eta_{t,i} \ge \min \{1/3, \sqrt{\log(K) / (C_{T,i}})\}$. Note that this claim holds trivially for the base cases where $t=0$ and $t=1$ since the learning rates are initialized to $1$ and our optimistic predictions can be at most $1$. By induction, we see that if this claim holds at time step $t$, we have for time step $t+1$
\begin{align*}
\eta_{t+1,i} &\ge \min\{1/3, \eta_{t,i}, \sqrt{\log(K) / (C_{t+1,i})} \} \ge \min\{1/3, \eta_{t,i}, \sqrt{\log(K) / ( C_{T,i})} \} \\
&= \min \{ \eta_{t,i}, \min\{1/3, \sqrt{\log(K) / ( C_{T,i})} \} \}\\ &\ge \min \left \{\min\{1/3, \sqrt{\log(K) / (C_{T,i})} \}, \min\{1/3, \sqrt{\log(K) / (C_{T,i})}\} \right\} \\
&= \min\{1/3, \sqrt{\log(K) / ( C_{T,i})}\}.
\end{align*}
Hence, we obtain $\eta_{T,i} \ge \min \{1/3, C_{T,i}\}$, and this implies that (by the same reasoning as in~\cite{gaillard2014second}) that
$$
\eta_{T, i}^{-1} \log \left(K (1 + \log (\max_{i \in [K]} \sqrt{ C_{T,i}} )\right) \leq \Bigo \left( (\sqrt{\log K} + \log\log T) \sqrt{ C_{T,i}} + \log K\right).
$$

Now to bound the second term in \eqref{eqn:regret-bound}, $\sum_{t \in [T]} \eta_{t-1,i} (r_{t,i} - \hat r_{t,i})^2$, we deviate from the analysis in~\cite{wei2017tracking} in order to show that the improved learning schedule without the dampening term in the denominator suffices. To this end, we first upper bound $\eta_{t-1,i}$ as follows
\begin{align*}
\eta_{t-1,i} &\leq  \min \left \{\eta_{0, i}, \frac{2}{3 (1 + \hat r_{t, i})}, \sqrt{\frac{\log(K)}{C_{t-1,i}}} \right \}\\
&\leq \min \left \{ \eta_{0, i}, \sqrt{\frac{\log(K)}{C_{t-1,i}}} \right \} \\
&=\min \left \{ \eta_{0, i}, \eta_{0, i} \sqrt{\frac{2}{C_{t-1,i}}} \right \} 
\end{align*}
where first inequality follows from the fact that the learning rates are monotonically decreasing, the second inequality from the definition of min, the last equality by definition $\eta_{0,i} = \sqrt{\log(K)/2}$.

By the fact that the minimum of two positive numbers is less than its harmonic mean\footnote{$\min\{a, b\} = \min\{a^{-1}, b^{-1}\}^{-1} \leq (\frac{1}{2} (a^{-1} + b^{-1}))^{-1} = 2 / (1/a + 1/b)$}, we have 
\begin{align*}
    \eta_{t-1,i} &\leq \frac{2\sqrt{2} \eta_{0,i}}{\sqrt{2} + \sqrt{C_{t-1,i}}},
\end{align*}
and so
\begin{align*}
    (r_{t,i} - \hat r_{t,i})^2 \eta_{t-1,i} &\leq c_{t,i}  \frac{2\sqrt{2} \eta_{0,i}}{\sqrt{2} + \sqrt{C_{t-1,i}}} \\
    &= 8\sqrt{2} \eta_{0,i} \frac{(c_{t,i}/4)}{\sqrt{2} + 2 \sqrt{C_{t-1,i}/ 4} } \\
    &\leq 4 \sqrt{2} \eta_{0,i} \frac{(c_{t,i}/4)}{\sqrt{1/2 + C_{t-1,i}/ 4} },
\end{align*}
where we used the subadditivity of the square root function in the last step.

Summing over all $t \in [T]$ and applying Lemma~14 of~\cite{gaillard2014second} on the scaled variables $c_{t,i}/4 \in [0,1]$, we obtain
\begin{align*}
\sum_{t \in [T]} \eta_{t-1,i} (r_{t,i} - \hat r_{t,i})^2 &\leq 4 \sqrt{2} \eta_{0,i} \sum_{t \in [T]} \sqrt{ C_{T,i}} \\
&= 4 \sqrt{C_{T,i} \log K },
\end{align*}
where in the last equality we used the definition of $\eta_{i,0}$ and $C_{T,i} = \sum_{t \in [T]} (r_{t,i} - \hat r_{t,i})^2$ as before, and this completes the proof.

\end{proof}

\subsection{Adaptive Regret}
We now turn to establishing adaptive regret bounds via the sleeping experts reduction as in~\cite{wei2017tracking,luo2015achieving} using the reduction of~\cite{gaillard2014second}. The overarching goal is to establish an adaptive bound for the regret of \emph{every} time interval $[t_1, t_2], t_1, t_2 \in [T]$, which is a generalization of the static regret which corresponds to the regret over the interval $[1, T]$. To do so, in the setting of $n$ experts as in the main document, the main idea is to run the base algorithm (Alg.~\ref{alg:base-adaprod}) on $K = nT$ \emph{sleeping} experts instead\footnote{Note that this notion of sleeping experts is the same as the one we used for dealing with constructing a distribution over only the unlabeled data points remaining.}. These experts will be indexed by $(t, i)$ with $t \in [T]$ and $i \in [n]$. Moreover, at time step $t$, each expert $(s,i)$ is defined to be awake if $s \leq t, i \in [n]$ \textbf{and} $\II_{t,i} = 1$ (the point has not yet been sampled, see Sec.~\ref{sec:problem-definition}), and the remaining experts will be considered sleeping. This will generate a probability distribution $\bar p_{t,(s,i)}$ over the awake experts.
Using this distribution, at round $t$ we play
$$
p_{t,i} = \II_{t,i} \sum_{s \in [t]} \bar p_{t,(s,i)} / Z_t,
$$
where $Z_t = \sum_{j \in [K]} \II_{t,j} \sum_{s' \in [t]} \bar p_{t,(s',j)}$.

The main idea is to construct losses to give to the base algorithm so that that at any point $t \in [T]$, each expert $(s,i)$ suffers the interval regret from $s$ to $t$ (which is defined to be 0 if $s > t$), i.e., $\sum_{\tau=1}^t r_{\tau, (s,i)} = \sum_{\tau=s}^t r_{\tau, i}$. To do so, we build on the reduction of~\cite{wei2017tracking} to keep track of both the sleeping experts from the sense of achieving adaptive regret and also the traditional sleeping experts regret with respect to only those points that are not yet labeled (as in Sec.~\ref{sec:problem-definition}). The idea is to apply the base algorithm (Alg.~\ref{alg:base-adaprod}) with the modified loss vectors $\bar \ell_{t, (s,i)}$ for expert $(s,i)$ as the original loss if the expert is awake, i.e., $\bar \ell_{t, (s,i)} = \ell_{t,i}$ \textbf{if} $s \leq t$ (original reduction in~\cite{wei2017tracking}) \textbf{and} $\II_{t,i} = 1$ (the point has not yet been sampled), and $\bar \ell_{t,(s,i)} = \dotp{p_{t,i}}{\ell_{t}}$ otherwise. The prediction vector is defined similarly: $\bar r_{t, (s,i)} = \hat r_{t,i}$ if $s \leq t$, and $0$ otherwise. 

Note that this construction implies that the regret of the base algorithm with respect to the modified losses and predictions, i.e., $\bar r_{\tau, (s,i)} = \dotp{\bar p_{\tau,(s,i)}}{\bar \ell_{\tau,(s,i)}}$ is equivalent to $r_{\tau,i}$ for rounds $\tau > s$ where the expert is awake, and $0$ otherwise. Thus, 
$$
\sum_{\tau \in [t]} \bar r_{\tau, (s,i)} = \sum_{\tau = s}^t r_{\tau, i},
$$
which means that the regret of expert $(s,i)$ with respect to the base algorithm is precisely regret of the interval $[s,t]$. Applying Lemma~\ref{lem:base-bound} to this reduction above (with $K = nT$) immediately recovers the adaptive regret guarantee of Optimisic Adapt-ML-Prod.
\begin{lemma}[Adaptive Regret of \textsc{Base AdaProd$^+$}]
For any $t_1 \leq t_2$ and $i \in [n]$, invoking Alg.~\ref{alg:base-adaprod} with the sleeping experts reduction described above ensures that
$$
\sum_{t=t_1}^{t_2} r_{t,i} \leq \hat \Bigo \left( \log (K)  + \sqrt{C_{t_2,(t_1, i)} \log (K) } \right),
$$
where $C_{t_2,(t_1, i)} = \sum_{t = {t_1}}^{t_2} (r_{t,i} - \hat r_{t,i})^2$ and $\hat \Bigo$ suppresses $\log T$ factors.
\end{lemma}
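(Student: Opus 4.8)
The plan is to apply the static-regret analysis of Lemma~\ref{lem:base-bound} to the $K = nT$ sleeping-expert instance constructed above, exploiting the fact that every estimate in that proof is in fact an \emph{anytime} bound and may therefore be evaluated at the interval endpoint $t_2$ rather than only at the full horizon $T$. First I would record the defining property of the reduction: with the modified losses $\bar \ell_{\tau,(s,i)}$ and predictions $\bar r_{\tau,(s,i)}$, an asleep expert incurs exactly the mixture loss $\langle p_\tau, \ell_\tau\rangle$ and hence contributes zero instantaneous regret, so that $\sum_{\tau=1}^{t}\bar r_{\tau,(s,i)} = \sum_{\tau=s}^t r_{\tau,i}$ for every $t$. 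Taking $s=t_1$ and $t=t_2$, the cumulative regret of the base algorithm against the single expert $(t_1,i)$ up to round $t_2$ equals precisely the target $\sum_{\tau=t_1}^{t_2} r_{\tau,i}$, and its associated second-order term $\sum_{\tau=1}^{t_2}(\bar r_{\tau,(t_1,i)}-\bar r_{\tau,(t_1,i)})^2$ reduces to $C_{t_2,(t_1,i)}$ as defined in the statement, since the summands vanish for $\tau<t_1$.

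Next I would re-run the potential argument of Lemma~\ref{lem:base-bound} with $W_t = \sum_{(s,i)} w_{t,(s,i)}$, but stopped at $t_2$. The point that I expect to require care is that the per-round telescoping cancellation still goes through: the term proportional to $p_{\tau+1}$ must sum to zero, which now hinges on $\sum_{(s,i)} \bar p_{\tau+1,(s,i)}\, \bar r_{\tau+1,(s,i)} = 0$ under the sleeping-expert normalization, exactly mirroring the identity $\sum_i p_{\tau+1,i}\, r_{\tau+1,i}=0$ used in the non-sleeping case. Granting this, the anytime upper bound reads $W_{t_2} \le K(\exp(1/4) + \tfrac12\log(4t_2)) \le K(\exp(1/4)+\tfrac12\log(4T))$, while the lower bound $W_{t_2} \ge w_{t_2,(t_1,i)}$ together with the weight recursion of Line~\ref{lin:weight-update}, after rearranging, yields
\[
\sum_{\tau=t_1}^{t_2} r_{\tau,i} \le \eta_{t_2,(t_1,i)}^{-1}\log\!\big(K\,(1+\tfrac12\log(4T))\big) + \sum_{\tau=t_1}^{t_2}\eta_{\tau-1,(t_1,i)}\,(r_{\tau,i}-\hat r_{\tau,i})^2 .
\]

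Finally I would bound the two terms verbatim as in Lemma~\ref{lem:base-bound}: the inductive lower bound $\eta_{t_2,(t_1,i)} \ge \min\{1/3,\sqrt{\log(K)/C_{t_2,(t_1,i)}}\}$ controls the first term by $\Bigo\big((\sqrt{\log K}+\log\log T)\sqrt{C_{t_2,(t_1,i)}}+\log K\big)$, and the harmonic-mean estimate feeding into Lemma~14 of~\cite{gaillard2014second} controls the second by $4\sqrt{C_{t_2,(t_1,i)}\log K}$. Substituting $K=nT$ and folding the $\log T$ and $\log\log T$ factors into $\hat\Bigo$ then delivers $\sum_{t=t_1}^{t_2} r_{t,i} \le \hat\Bigo\big(\log K + \sqrt{C_{t_2,(t_1,i)}\log K}\big)$. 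The only genuinely non-routine step is verifying that the anytime potential bounds remain valid at the intermediate endpoint $t_2$ and that asleep experts contribute to neither the telescoping cancellation nor the second-order term; once this bookkeeping is settled, the result is a transcription of Lemma~\ref{lem:base-bound} with $T$ replaced by $t_2$ and the expert fixed to $(t_1,i)$.
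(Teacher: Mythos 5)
Your proposal is correct and follows essentially the same route as the paper, which likewise proves this lemma by applying the static bound of Lemma~\ref{lem:base-bound} to the $K=nT$ sleeping-experts reduction, using that asleep experts incur zero instantaneous regret so that the cumulative modified regret of expert $(t_1,i)$ equals the interval regret and the second-order term collapses to $C_{t_2,(t_1,i)}$. Your only addition is to make explicit the anytime validity of the potential argument at the intermediate endpoint $t_2$ (a point the paper treats as immediate), and aside from a harmless typo in the displayed second-order sum, where $(\bar r_{\tau,(t_1,i)}-\hat{\bar r}_{\tau,(t_1,i)})^2$ was intended, the bookkeeping is sound.
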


\subsection{\textsc{AdaProd}$^+$ and Proof of Lemma~\ref{lem:adaptive-regret}}
To put it all together, we relax to requirement of having to update and keep track of $K = NT$ experts and having to know $T$. To do so, observe that $\log(K) \leq \log (n T) \leq 2 \log(n)$ since $T \leq n / \min_{t \in [T]} b_t \leq n$, where $b_t \ge 1$ is the number of new points to label at active learning iteration $t$. This removes the requirement of having to know $T$ or the future batch sizes beforehand, meaning that we can set the numerator of $\eta_{t,(s,i)}$ to be $\sqrt{2 \log(n)}$ instead of $\sqrt{\log(K)}$ (as in~\ref{alg:adaprod} in Sec.~\ref{sec:method}). Next, observe that in the sleeping experts reduction above, we have
$$
p_{t,i} = \II_{t,i} \sum_{s \in [t]} \bar p_{t,(s,i)} / Z_t,
$$
where $Z_t = \sum_{j \in [K]} \II_{t,j} \sum_{s' \in [t]} \bar p_{t,(s',j)}$. But for $s \leq t$ and $j \in [n]$ satisfying $\II_{t,j} = 1$, by definition of $\bar p_{t,(s,j)}$ and the fact that expert $(s, j)$ is awake, we have $\bar p_{t,(s,j)} \propto \eta_{t-1,(s,j)} w_{t-1,(s,j)} \exp(\eta_{t-1,(s,j)} \hat r_{t,j})$, and so the normalization constant cancels from the numerator (from $\bar p_{t,(s,j)}$) and the denominator (from the $\bar p_{t,(s',j)}$ in $Z_t = \sum_{j \in [K]} \II_{t,j} \sum_{s' \in [t]} \bar p_{t,(s',j)}$), leaving us with
$$
p_{t,i} = \sum_{s \in [t]} \frac{\eta_{t-1,(s',j)} w_{t-1,(s',j)} \exp(\eta_{t-1,(s',j)} \hat r_{t,j})}{\gamma_t},
$$
where $\gamma_t = \sum_{j \in [K]} \sum_{s' \in [t]} \eta_{t-1,(s',j)} w_{t-1,(s',j)} \exp(\eta_{t-1,(s',j)} \hat r_{t,j})$. Note that this corresponds precisely to the probability distribution played by \textsc{AdaProd}$^+$. Further, since \textsc{AdaProd}$^+$ does not explicitly keep track of the experts that are asleep, and only updates the potentials $W_{t,(s,i)}$ of those experts that are awake, \textsc{AdaProd}$^+$ mimics the updates of the reduction described above\footnote{The only minor change is in the constant in the learning rate schedule of \textsc{AdaProd}$^+$ which has a $\sqrt{2\log(n)}$ term instead of $\sqrt{\log(nT)} \leq \sqrt{2\log(n)}$. This only affects the regret bounds by at most a factor of $\sqrt{2}$, and the reduction remains valid -- it would be analogous to running Alg.~\ref{alg:base-adaprod} on a set of $n^2 \ge nT$ experts instead.} involving passing of the modified losses to the base algorithm. Thus, we can conclude that \textsc{AdaProd}$^+$ leads to the same updates and generated probability distributions as the base algorithm for adaptive regret. This discussion immediately leads to the following lemma for the adaptive regret of our algorithm, very similar to the one established above except for $\log n$ replacing $\log T$ terms.
\adaprodadaptiveregret*

\subsection{Proof of Theorem~\ref{thm:main}}

\mainthm*
\begin{proof}
Fix $(i_t^*)_{t \in [T]}$ be an arbitrary competitor sequence. First observe that the expected regret over all the randomness as defined Sec.~\ref{sec:problem-definition} can be written as
\begin{align*}
\E[\mathcal{R}(i_{1:T}^*)] &= \E \left[ \sum_{t \in [T]} \E_{\xi_{t-1}} [r_{t,i_{t}^*}] \right] \\
&= \E_{\SS_{1:T-1}, \xi_{0:T-1}} \left[ \sum_{t \in [T]} r_{t,i_{t}^*}\right] 
\end{align*}
which follows by linearity of expectation and the fact that $p_t$ is independent of $\xi_{t-1}$ and the noises $\xi_{1:T}$ are independent random variables.

Similar to the approach of~\cite{wei2017tracking,besbes2014stochastic}, consider partitioning the time horizon $T$ into $N = \ceil{T/B}$ contiguous time blocks $\TT_1, \ldots, \TT_{N}$ of length $B$, with the possible exception of $\TT_N$ which has length at most $B$. For any time block $\TT$, let $i_{\TT}$ denote the best sample with respect to the expected regret, i.e.,
$$
i_{\TT} = \argmax_{i \in [n]} \E \left[ \sum_{t \in \TT} r_{t,i}\right],
$$
where the expectation is with respect to all sources of randomness. Continuing from above, Note that the expected dynamic regret in Sec.~\ref{sec:problem-definition} can be decomposed as follows:
\begin{align*}
\E[\mathcal{R}(i_{1:T}^*)] &= \E \left[ \sum_{b = 1}^N \sum_{t \in \TT_b} r_{t,i_t^*} - r_{t,i_{\TT_b}}\right] + \E \left[\sum_{b = 1}^N \sum_{t \in \TT_b} r_{t,\TT_b} \right] \\
&= \underbrace{\sum_{b = 1}^N \sum_{t \in \TT_b}  \E \left[ r_{t,i_t^*} - r_{t,i_{\TT_b}}\right]}_\mathrm{(A)} + \underbrace{\sum_{b = 1}^N \E \left[ \sum_{t \in \TT_b} r_{t,i_{\TT_b}} \right]}_{(B)},
\end{align*}
where the last equality follows by linearity of expectation.

To deal with the first term, consider an arbitrary time block $\TT$ and define the drift in expected regret 
$$
 \DD_{\TT} = \sum_{t \in [\TT]} \norm{ \E[r_{t} ] - \E[ r_{t-1}]}_\infty.
$$
For each $t \in \TT$ and $i_t^* \in [n]$ we know that there must exist $t_0 \in \TT$ such that $\E[ r_{t_0,i_\TT}] \ge \E[r_{t_0,i_t^*}]$. To see this, note that the negation of this statement would imply $\sum_{t \in \TT} \E[ r_{t,i_\TT}] \leq \sum_{t \in \TT} \E[ r_{t,i_t^*}]$ which contradicts the optimality of $i_\TT$. For any $t \in \TT$, we have
$$
\E[r_{t, i_t^*}] \leq \E[r_{t_0, i_t^*}] + \DD_\TT \leq \E[r_{t_0, i_\TT}] + \DD_\TT \leq \E[r_{t, i_\TT}] + 2 \DD_\TT,
$$
where the first and third inequalities are by the definition of $\DD_\TT$ and the second by the argument above over $t_0$. This implies that
$$
\sum_{t \in \TT}  \E \left[ r_{t,i_t^*} - r_{t,i_{\TT_b}}\right] \leq 2 B \DD_\TT.
$$
Summing over $N$ blocks, we obtain that
$$
\mathrm{(A)} \leq 2 B \DD_T = 2 (T / N) \DD_T,
$$
where $\DD_T = \sum_{t \in [T]} \norm{ \E[r_{t} ] - \E[ r_{t-1}]}_\infty$.

For the second term, we apply the adaptive regret bound of Lemma~\ref{lem:adaptive-regret} to each block $\TT_b$ ranging from time $t_{b}^1$ to $t_b^2$ to obtain
$$
\E \left[ \sum_{t \in \TT_b} r_{t,i_{\TT_b}} \right] \leq \hat \Bigo \left(  \E \left[\sqrt{C_{t_b^2, (t_b^1, i_{\TT_b})}}\right]  + \log n \right).
$$
Summing over all $N$ blocks we have
\begin{align*}
    \mathrm{(B)} &= \sum_{b = 1}^N \E \left[ \sum_{t \in \TT_b} r_{t,i_{\TT_b}} \right] \leq  \hat \Bigo \left(  \E \left[ \sum_{b=1}^N \sqrt{C_{t_b^2, (t_b^1, i_{\TT_b})} \log n}\right] + N \log n \right) \\
    &\leq   \hat \Bigo \left(  \E \left[ \sum_{b=1}^N \sqrt{\sum_{t \in [\TT_b]} \norm{r_{t} - \hat r_{t}}_\infty^2 \log n} \right] + N \log n \right) \\
    &\leq \hat \Bigo \left(  \E \left[ \sqrt{N \VV_T \log n }\right] + N \log n \right),
\end{align*}
where the second to last inequality follows by the definition of $C_{t_b^2, (t_b^1, i_{\TT_b})}$ and the last inequality is by Cauchy-Schwarz.

Putting both bounds together, we have that
$$
\E[\mathcal{R}(i_{1:T}^*)] = \min_{N \in [T]} \hat \Bigo \left(  \E \left[ \sqrt{N \VV_T \log n } + N \log n + (T/N) \DD_T \right] \right).
$$
All that remains is to optimize the bound with respect to the number of epochs $N$. If $\VV_T^2 \leq T \DD_T \log n$, we can pick $N = \sqrt{TD/ \log n}$ to obtain a bound of $\hat \Bigo(\sqrt{ T \DD_T \log n })$. On the other hand, if $\VV_T^2 > T \DD_T \log n$ we can let $N = \sqrt[3]{T^2 \DD_T^2 / (\VV_T \log n)}$ to obtain the bound $\hat \Bigo (\sqrt[3]{T \DD_T \VV_T \log n})$. Hence, in either case we have the upper bound 
$$
\hat\Bigo \left( \E \left [ \sqrt[3]{\VV_T \DD_T T \log n} + \sqrt{\DD_T T \log n} \right] \right) \leq \hat\Bigo \left(  \sqrt[3]{\E \left [\VV_T \right] \DD_T T \log n} + \sqrt{\DD_T T \log n} \right) 
$$
where the inequality is by Jensen's.

\end{proof}

\subsection{Proof of Theorem~\ref{thm:main-batch}}
\batchsamplingthm*
\begin{proof}
For any fixed sequence $(\SS_1^*, \ldots, \SS_T^*)$, we have
\begin{align*}
\E[\mathcal{R}(\SS_{1:T}^*)] &= \E \sum_{t=1}^T \sum_{j=1}^b \E \nolimits_{\xi_{t-1}}[r_{t,\SS_{t j}^*}] \\
&= \E \sum_{j = 1}^b \sum_{t = 1}^T \E \nolimits_{\xi_{t-1}}[r_{t,\SS_{t j}^*}] \\
&= \sum_{j = 1}^b \underbrace{ \E \left [\sum_{t = 1}^T (\dotp{p_t}{\ell_{t}(\xi_{t-1})} - \ell_{t, i}(\xi_{t-1})) \II_{t,i} \right]}_{\text{per point regret from Theorem~\ref{thm:main}}} \\
&\leq b \, \hat\Bigo \left(  \sqrt[3]{\E \left [\VV_T \right] \DD_T T \log n} + \sqrt{\DD_T T \log n} \right)
\end{align*}
where we used the fact that $\rho_t = b p_t$ and the definition of $r_t$ from Sec.~\ref{sec:batch-sampling}.
\end{proof}


\section{Implementation Details and Batch Sampling}
\label{sec:supp-method}

To sample $b$ points according to a probability distribution $p$ with $\sum_{i} p_i = b$, we use use the \textsc{DepRound} algorithm~\cite{uchiya2010algorithms} shown as Alg.~\ref{alg:dep-round}, which takes $\Bigo(n)$ time. 
{
\hfill
\begin{minipage}{.95\linewidth}
\centering
\begin{algorithm}[H]
\caption{\textsc{DepRound}}
\textbf{Inputs}: Subset size $m$, probabilities $p \in [0,1]^n$ such that $\sum_{i} p_i = b$ \\
\textbf{Output:} set of indices $\CC \subset [n]$ of size $b$
\label{alg:dep-round}
\begin{spacing}{1.2}
\begin{algorithmic}[1]
\WHILE{$\exists{i \in [n]}$ such that $0 < p_i < 1$}
    \STATE Pick $i, j \in [n]$ satisfying $i \neq j$, $0 < p_i < 1$, and $0 < p_j < 1$
    \STATE Set $\alpha = \min(1 - p_i, p_j)$ and $\beta = \min(p_i, 1-p_j)$
    \STATE Update $p_i$ and $p_j$ \\ $(p_i, p_j) = \begin{cases}
(p_i + \alpha, p_j - \alpha) \quad \text{with probability } \frac{\beta}{\alpha + \beta}, \\
(p_i - \beta, p_j + \beta) \quad \text{with probability } 1 - \frac{\beta}{\alpha + \beta}.
\end{cases}$
\ENDWHILE
\STATE $\CC \gets \{i \in [n] : p_i = 1 \}$ \\
\textbf{return } {$\CC$}
\end{algorithmic}
\end{spacing}
\end{algorithm}
\end{minipage}
\hfill
}


In all of our empirical evaluations, the original probabilities generated by $\textsc{AdaProd}^+$ were already less than $1/b$, so the capping procedure did not get invoked. We conjecture that this may be a natural consequence of the active learning setting, where we are attempting to incrementally build up a small set of labeled data among a very large pool of unlabeled ones, i.e., $b \ll n$. This description also aligns with the relatively small batch sizes widely used in active learning literature as benchmarks~\cite{gissin2019discriminative,ash2019deep,ren2020survey,sener2017active,muthakana2019uncertainty}.

The focus of our work is not on the full extension of Adapt-ML-Prod~\cite{wei2017tracking} to the batch setting, however, we summarize some of our ongoing and future work here for the interested reader. If we assume that the probabilities generated by $\textsc{AdaProd}^+$ satisfy $p_{t,i} \leq 1/b$, which is a mild assumption in the active learning setting as evidenced by our evaluations, we establish the bound as in Sec.~\ref{sec:batch-sampling} for the regret defined with respect to sampling a batch of $b$ points at each time step. In future work, we plan to  relax the assumption $p_{t,i} \leq 1/b$ by building on techniques from prior work, such as by exploiting the inequalities associated with the Information (KL divergence) Projection as in~\cite{warmuth2008randomized} or capping the weight potential $w_{i,t}$ as in~\cite{uchiya2010algorithms} as soon as weights get too large (rather than modifying the probabilities). 


\clearpage
\section{Experimental Setup \& Additional Evaluations}
\label{sec:supp-add-eval}
In this section we (i)  describe the experimental setup and detail hyper-parameters used for our experiments and (ii) provide additional evaluations and comparisons to supplement the results presented in the manuscript. The full code is included in the supplementary folder\footnote{Our codebase builds on the publicly available codebase of~\cite{lucas,liebenwein2019provable,baykal2018data}.}.

\subsection{Setup}
\begin{table*}[htb!]
    \centering
    \begin{tabular}{c|cccc}
        & FashionCNN & SVHNCNN & Resnet18 & CNN5 (width=128) \\ \hline
        loss          & cross-entropy  & cross-entropy
                        & cross-entropy  & cross-entropy \\
        optimizer     & Adam            & Adam 
                        & SGD            & Adam  \\
        epochs        & 60            & 60
                        & 80            & 60  \\
        epochs incremental        & 15            & 15
                        & N/A            & 15  \\
        batch size    & 128            & 128
                        & 256             & 128   \\
        learning rate (lr)            & 0.001           & 0.001
                        & 0.1            & 0.001  \\
        lr decay      & 0.1@(50) & 0.1@(50)
                        & 0.1@(30, 60)   & 0.1@(50)\\
        lr decay incremental      & 0.1@(10) & 0.1@(10)
                        & N/A   & 0.1@(10)\\
        momentum      & N/A            & N/A
                        & 0.9            & N/A  \\
        Nesterov      & N/A       & N/A    
                        & No   & N/A \\
        weight decay  & 0        & 0
                        & 1.0e-4         & 0   
    \end{tabular}
    \vspace{2ex}
    \caption{We report the hyperparameters used during training the convolutional architectures listed above corresponding to our evaluations on FashionMNIST, SVHN, CIFAR10, and ImageNet.  except for the ones indicated in the lower part of the table. The notation $\gamma$@$(n_1, n_2, \ldots)$ denotes the learning rate schedule where the learning rate is multiplied by the factor $\gamma$ at epochs $n_1, n_2, \ldots$ (this corresponds to MultiStepLR in PyTorch).}
    \label{tab:hyper}
\end{table*}

Table~\ref{tab:hyper} depicts the hyperparameters used for training the network architectures used in our experiments. Given an active learning configuration $(\textsc{Option}, n_\text{start}, b, n_\text{end})$, these parameters describe the training process for each choice of $\textsc{Option}$ as follows:
\paragraph{\textsc{Incremental}}: we start the active learning process by acquiring and labeling $n_\text{start}$ points chosen uniformly at random from the $n$ unlabeled data points, and we train with the corresponding number of epochs and learning rate schedule listed in Table~\ref{tab:hyper} under rows \emph{epochs} and \emph{lr decay}, respectively, to obtain $\theta_1$. We then proceed as in Alg.~\ref{alg:active-learning} to iteratively acquire $b$ new labeled points based on the \textsc{Acquire} function and \emph{incrementally train} a model starting from the model from the previous iteration, $\theta_{t-1}$. This training is done with respect to the number of corresponding epochs and learning rate schedule shown in Table~\ref{tab:hyper} under \emph{epochs incremental} and \emph{lr decay incremental}, respectively. 
\paragraph{\textsc{Scratch}}: the only difference relative to the \textsc{Incremental} setting is that rather than training the model starting from $\theta_{t-1}$, we train a model from a randomly initialized network at each active learning iteration with respect to the training parameters under \emph{epochs} and \emph{lr decay} in Table~\ref{tab:hyper}.

\paragraph{Architectures}
We used the following convolutional networks on the specified data sets.
\begin{enumerate}
    \item \emph{FashionCNN~\cite{fashioncnn}} (for FashionMNIST): a network with $2$ convolutional layers with batch normalization and max pooling, $3$ fully connected layers, and one dropout layer with $p = 0.25$ in~\cite{fashioncnn}. This architecture achieves over $93\%$ accuracy when trained with the whole data set.
    \item \emph{SVHNCNN~\cite{svhnmodel}} (for SVHN): a small scale convolutional model very similar to FashionCNN except there is no dropout layer.
    \item \emph{Resnet18~\cite{he2016deep}} (for ImageNet): an 18 layer residual network with batch normalization.
    \item \emph{CNN5~\cite{nakkiran2019deep}} (for CIFAR10): a 5-layer convolutional neural network with 4 convolutional layers with batch normalization. We used the width=128 setting in the context of~\cite{nakkiran2019deep}.
\end{enumerate}

\begin{figure*}[htb!]
  \centering
  \hfill
  \begin{minipage}[t]{0.33\textwidth}
  \centering
 \includegraphics[width=1\textwidth]{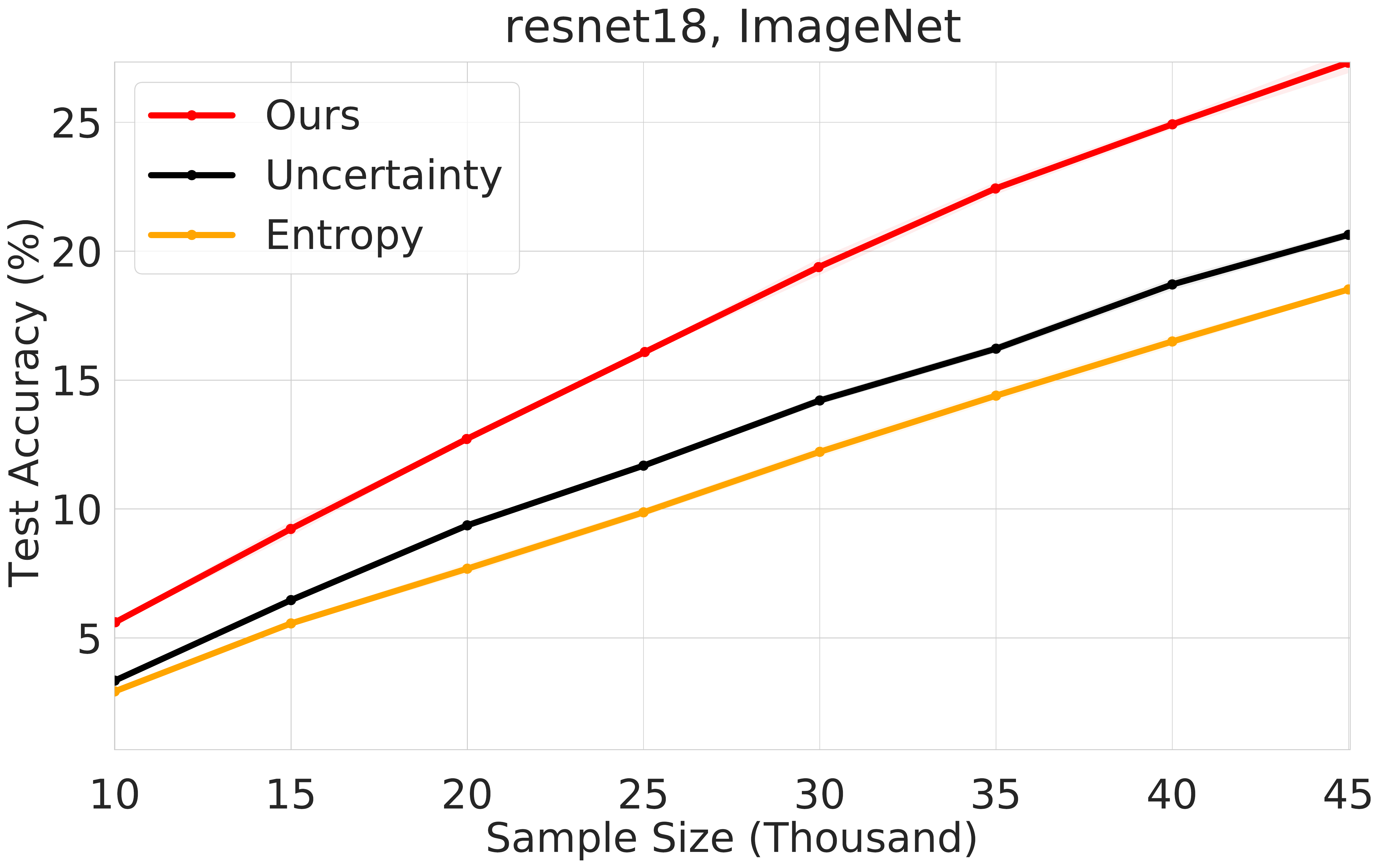}
 \end{minipage}%
  \hfill
  \begin{minipage}[t]{0.33\textwidth}
  \centering
\includegraphics[width=1\textwidth]{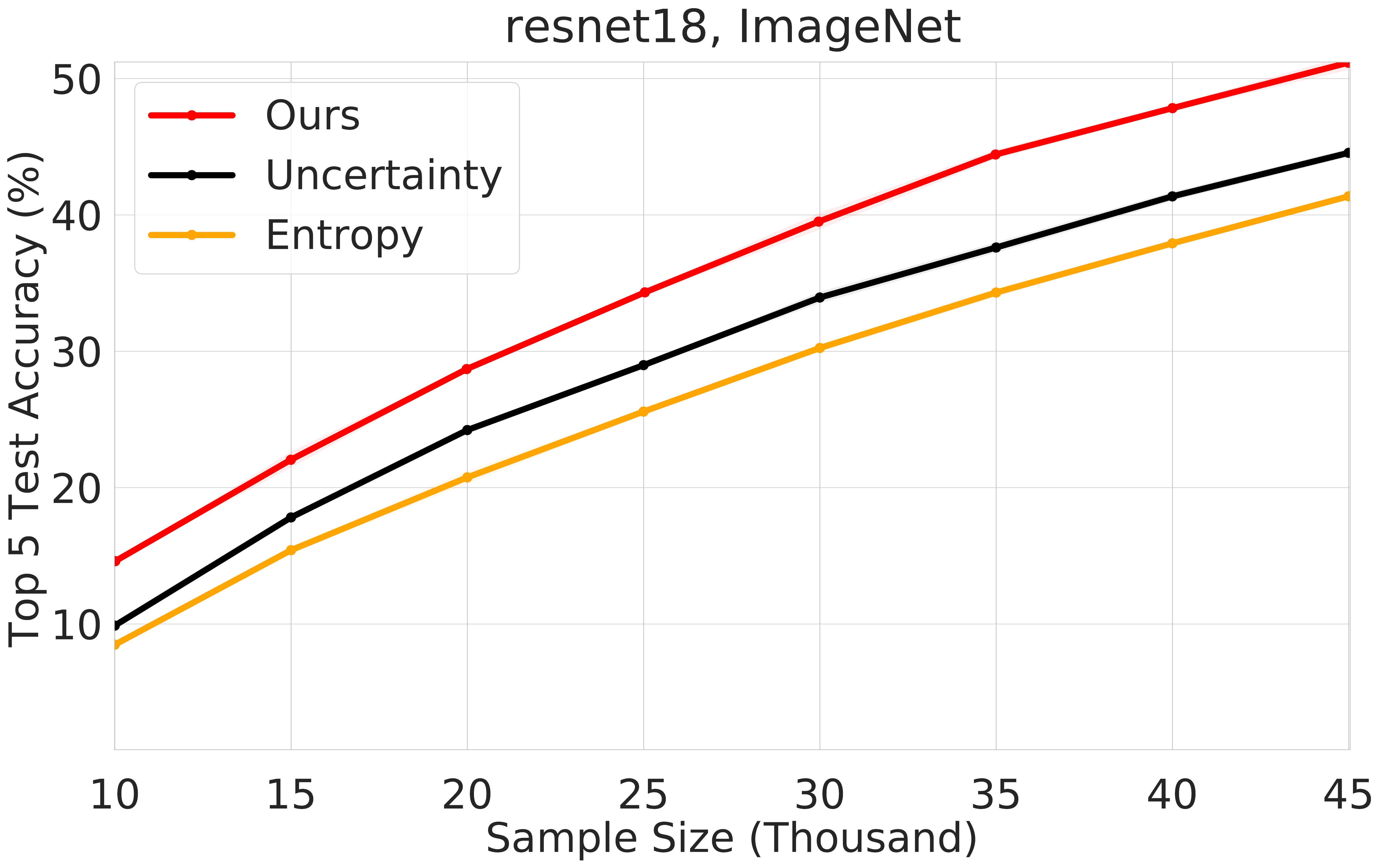}
  \end{minipage}%
  \hfill
    \begin{minipage}[t]{0.33\textwidth}
  \centering
 \includegraphics[width=1\textwidth]{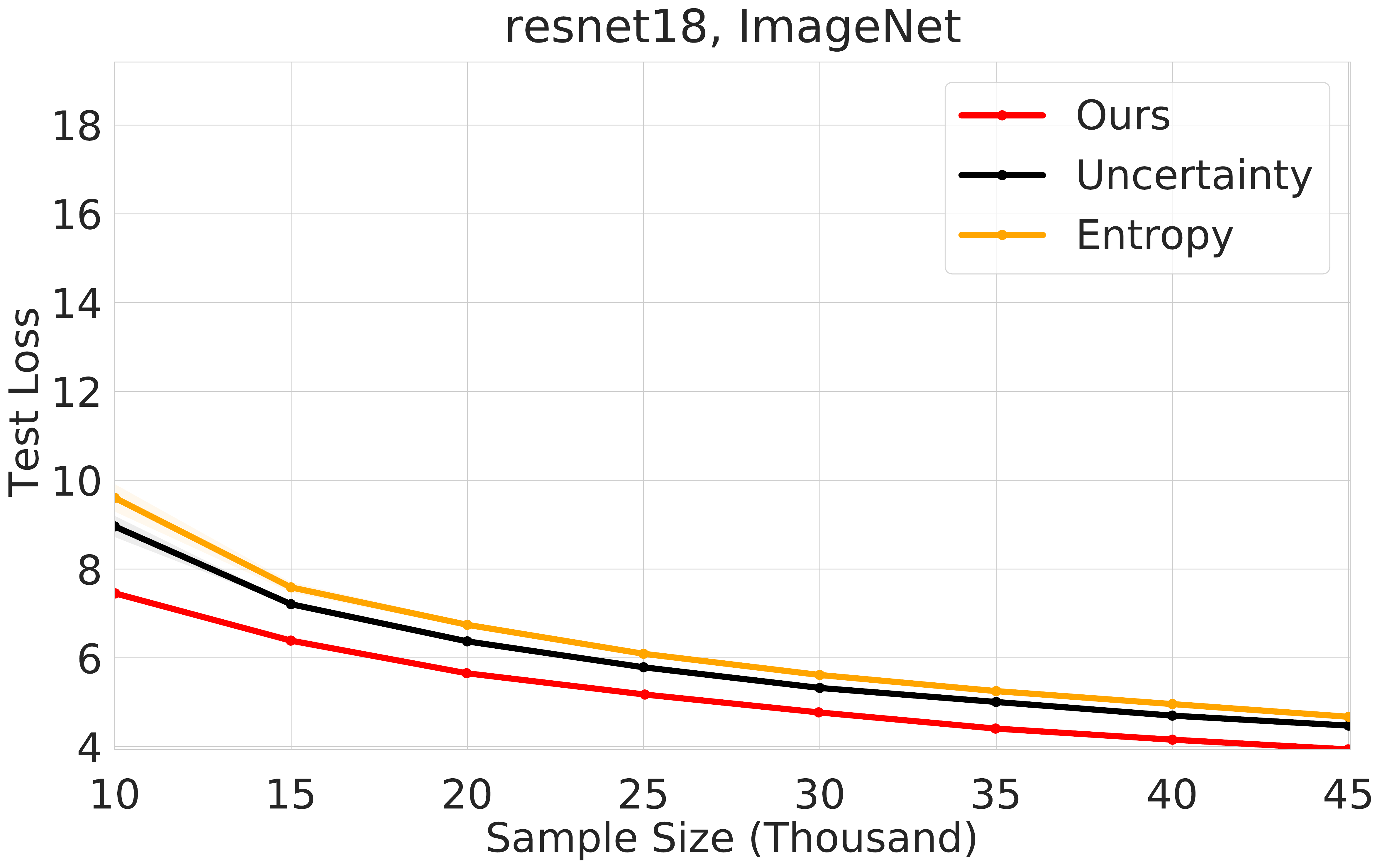}
  \end{minipage}%

   \hfill
  \begin{minipage}[t]{0.33\textwidth}
  \centering
 \includegraphics[width=1\textwidth]{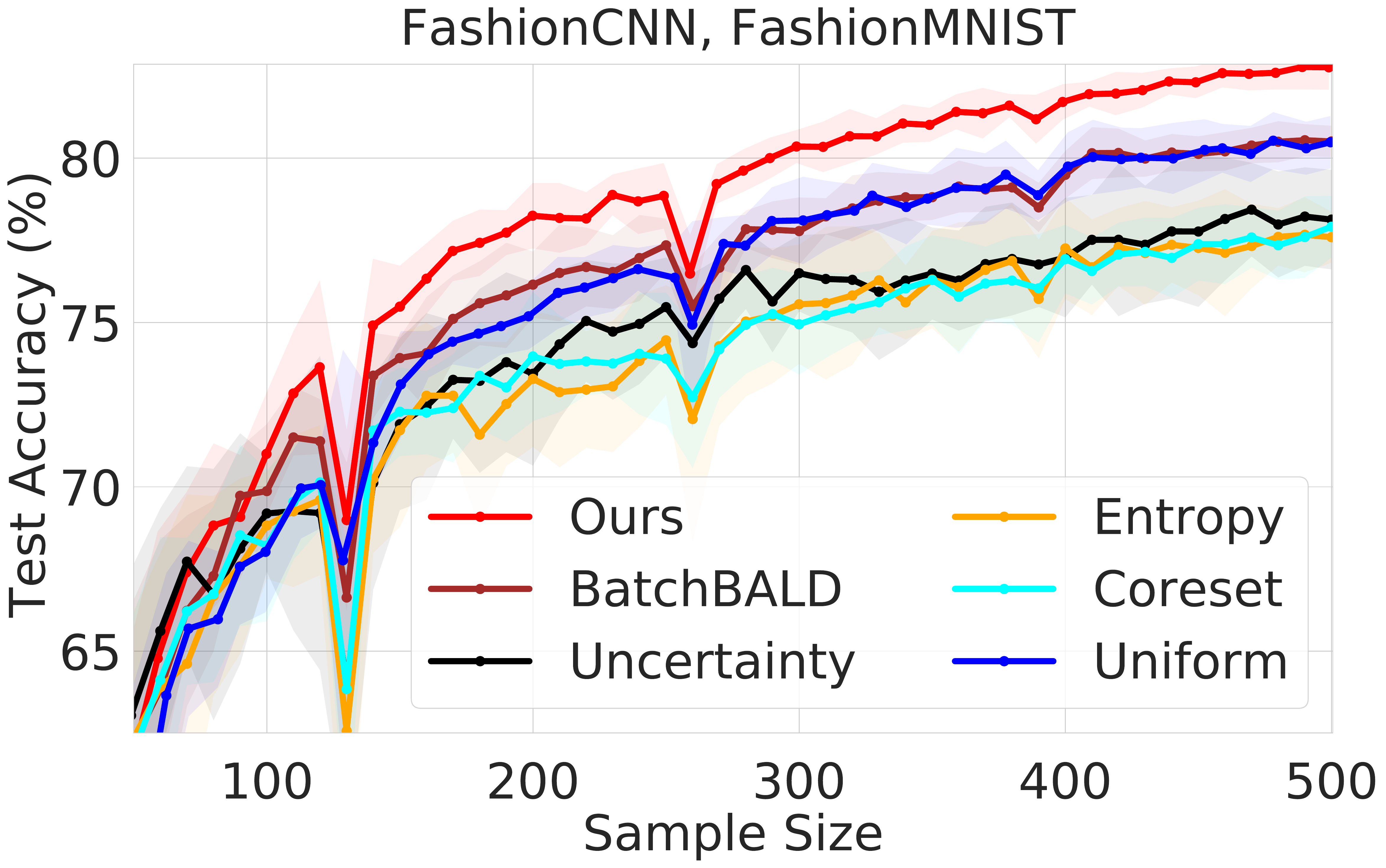}
 \subcaption{Top-1 test accuracy}
 \end{minipage}%
   \hfill
  \begin{minipage}[t]{0.33\textwidth}
  \centering
 \includegraphics[width=1\textwidth]{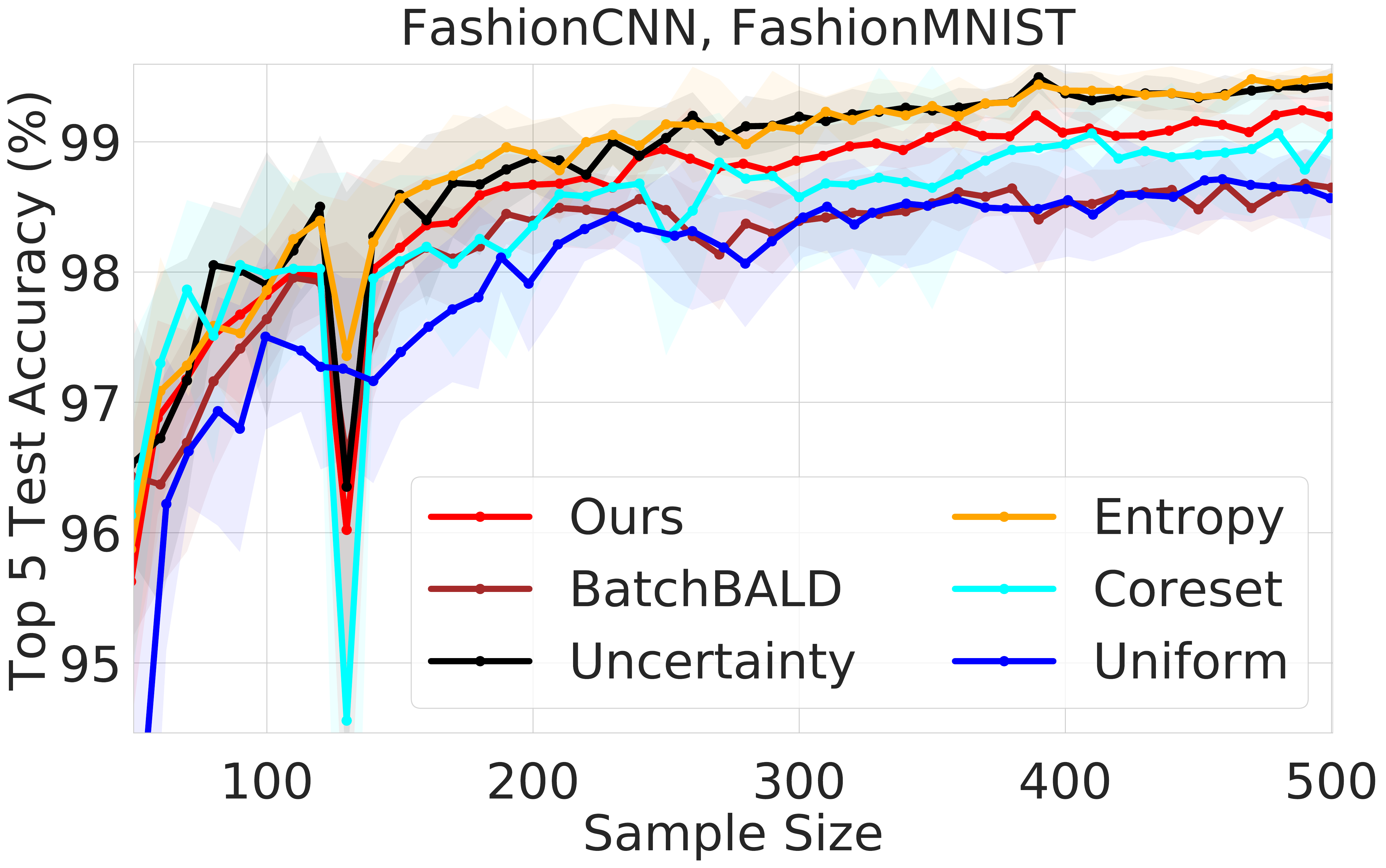}
 \subcaption{Top-5 test accuracy}
 \end{minipage}%
    \hfill
  \begin{minipage}[t]{0.33\textwidth}
  \centering
 \includegraphics[width=1\textwidth]{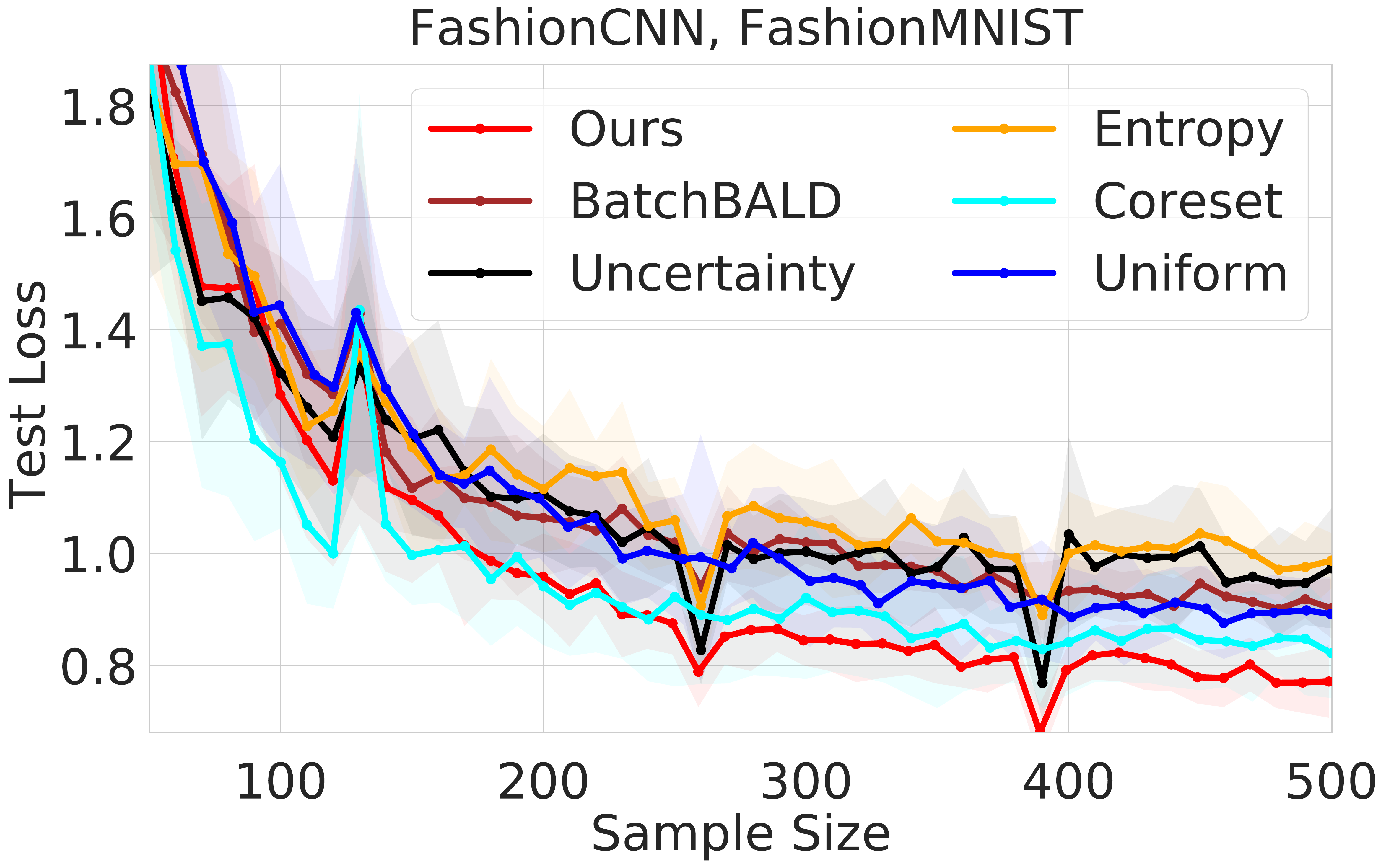}
 \subcaption{Top-5 test accuracy}
 \end{minipage}%
\vspace{1px}

  
\caption{Results for the data-starved configuration $(\textsc{Scratch}, 5k, 5k, 45k)$ on ImageNet (first row) and $(\textsc{Scratch}, 50, 10, 500)$ on FashionMNIST (second row). Shown from left to right are the results with respect to test accuracy, top-5 test accuracy, and test loss. Shaded region corresponds to values within one standard deviation of the mean.
}
	\label{fig:additional-results}
\end{figure*}

\paragraph{Settings for experiments in Sec.~\ref{sec:results}}
Prior to presenting additional results and evaluations in the next subsections, we specify the experiment configurations used for the experiments shown in the main document (Sec.~\ref{sec:results}). For the corresponding experiments in Fig.~\ref{fig:vision}, we evaluated on the configuration $(\textsc{Scratch}, 10k, 20k, 110k)$ for ImageNet, $(\textsc{Scratch}, 500, 200, 4000)$ for SVHN, $(\textsc{Scratch}, 3k, 1k, 15k)$ for CIFAR10, and $(\textsc{Scratch}, 100, 300, 3000)$ for FashionMNIST. For the evaluations in Fig.~\ref{fig:boosting}, we used $(\textsc{Scratch}, 128, 96, 200)$ and $(\textsc{Scratch}, 128, 64, 2000)$ for FashionMNIST and SVHN, respectively. The models were trained with standard data normalization with respect to the mean and standard deviation of the entire training set. For ImageNet, we used random cropping to $224\times224$ and random horizontal flips for data augmentation; for the remaining data sets, we used random cropping to $32\times32$ ($28\times28$ for FashionMNIST) with $4$ pixels of padding and random horizontal flips.

All presented results were averaged over 10 trials with the exception of those for ImageNet\footnote{We were not able to run Coreset or BatchBALD on ImageNet due to resource constraints and the high computation requirements of these algorithms.}, where we averaged over 3 trials due to the observed low variance in our results. We used the uncertainty loss metric as defined in Sec.~\ref{sec:problem-definition} for all of the experiments presented in this work -- with the exception of results related to boosting prior approaches (Fig.~\ref{fig:boosting}). The initial set of points and the sequence of random network initializations (one per sample size for the \textsc{Scratch} option) were fixed across all algorithms to ensure fairness. 


\subsection{Setting for Experiments in Sec.~\ref{sec:comparison}}
\label{sec:supp-eval-adaprod}

In this subsection, we describe the setting for the evaluations in Sec.~\ref{sec:comparison}, where we compared the performance of $\textsc{AdaProd}^+$ to modern algorithms for learning with prediction advice. Since our approach is intended to compete with time-varying competitors (see Sec.~\ref{sec:supp-analysis}), we compare it to existing methods that ensure low regret with respect to time-varying competitors (via adaptive regret). In particular, we compare our approach to the following algorithms:
\begin{enumerate}
    \item \textbf{Optimistic AMLProd}~\cite{wei2017tracking}: we implement the (stronger) variant of Optimistic Adapt-ML-Prod that ensures dynamic regret (outlined at the end of Sec.~3.3 in~\cite{wei2017tracking}). This algorithm uses the sleeping experts reduction of~\cite{gaillard2014second} and consequently, requires initially creating $\tilde n = n T$ sleeping experts and  updating them with similar updates as in our algorithm (except the cost of the $t^\text{th}$ update is $\tilde \Bigo( n T)$ rather than $\tilde \Bigo( N_t t)$ as in ours). Besides the computational costs, we emphasize that the only true functional difference between our algorithm and Optimistic AMLProd lies in the thresholding of the learning rates (Line 10 in Alg.~\ref{alg:adaprod}). In our approach, we impose the upper bound $\min\{\eta_{t-1,i}, 2/(3(1 + \hat r_{t+1,i})) \}$ for $\eta_{t,i}$ for any $t \in [T]$, whereas~\cite{wei2017tracking} imposes the (smaller) bound of $1/4$.
    
    \item \textbf{AdaNormalHedge(.TV)}~\cite{luo2015achieving}: we implement the time-varying version of AdaNormalHedge, AdaNormalHedge.TV as described in Sec.~5.1 of~\cite{luo2015achieving}. The only slight modification we make in our setting where we already have a sleeping experts problem is to incorporate the indicator $\II_{t,i}$ in our predictions (as suggested by~\cite{luo2015achieving} in their sleeping experts variant). In other words, we predict\footnote{We also implemented and evaluated the method with uniform prior over time intervals, i.e., $p_{t,i} \propto \II_{t,i} \sum_{\tau = 1}^t w(R_{[\tau, t-1], i}, C_{[\tau,t-1]})$ (without the prior $\frac{1}{\tau^2}$), but found that it performed worse than with the prior in practice. The same statement holds for the Squint algorithm.} $p_{t,i} \propto \II_{t,i} \sum_{\tau = 1}^t \frac{1}{\tau^2} w(R_{[\tau, t-1], i}, C_{[\tau,t-1]})$ rather than the original $p_{t,i} \propto \sum_{\tau = 1}^t \frac{1}{\tau^2} w(R_{[\tau, t-1], i}, C_{[\tau,t-1]})$, where $R_{[t_1, t_1],i} = \sum_{t =t_1 }^{t_2} r_{t, i}$ and $C_{[t_1, t_1],i} = \sum_{t =t_1 }^{t_2} \abs{r_{t, i}}$ (note that the definition of $C$ is different than ours).
    
        \item \textbf{Squint(.TV)}~\cite{koolen2015second}: Squint is a parameter-free algorithm like AdaNormalHedge in that it can also be extended to priors over an initially unknown number of experts. Hence, we use the same idea as in AdaNormalHedge.TV (also see~\cite{luolecture}) and apply the extension of the Squint algorithm for adaptive regret.
\end{enumerate}
We used the $(\textsc{Scratch}, 500, 200, 400)$ and $(\textsc{Scratch}, 4000, 1000, 2000)$ configurations for the evaluations on the SVHN and CIFAR10 datasets, respectively.

\subsection{Results on Data-Starved Settings}
Figure~\ref{fig:additional-results} shows the results of our additional evaluations on ImageNet and FashionMNIST in the \emph{data-starved} setting where we begin with a very small (relatively) set of data points and can only query the labels of a  small set of points at each time step. For both data sets, our approach outperforms competing ones in the various metrics considered -- yielding up to $4\%$ increase in test accuracy compared to the second-best performing method.

\begin{figure*}[htb!]
  \centering
  \hfill
  \begin{minipage}[t]{0.48\textwidth}
  \centering
 \includegraphics[width=1\textwidth]{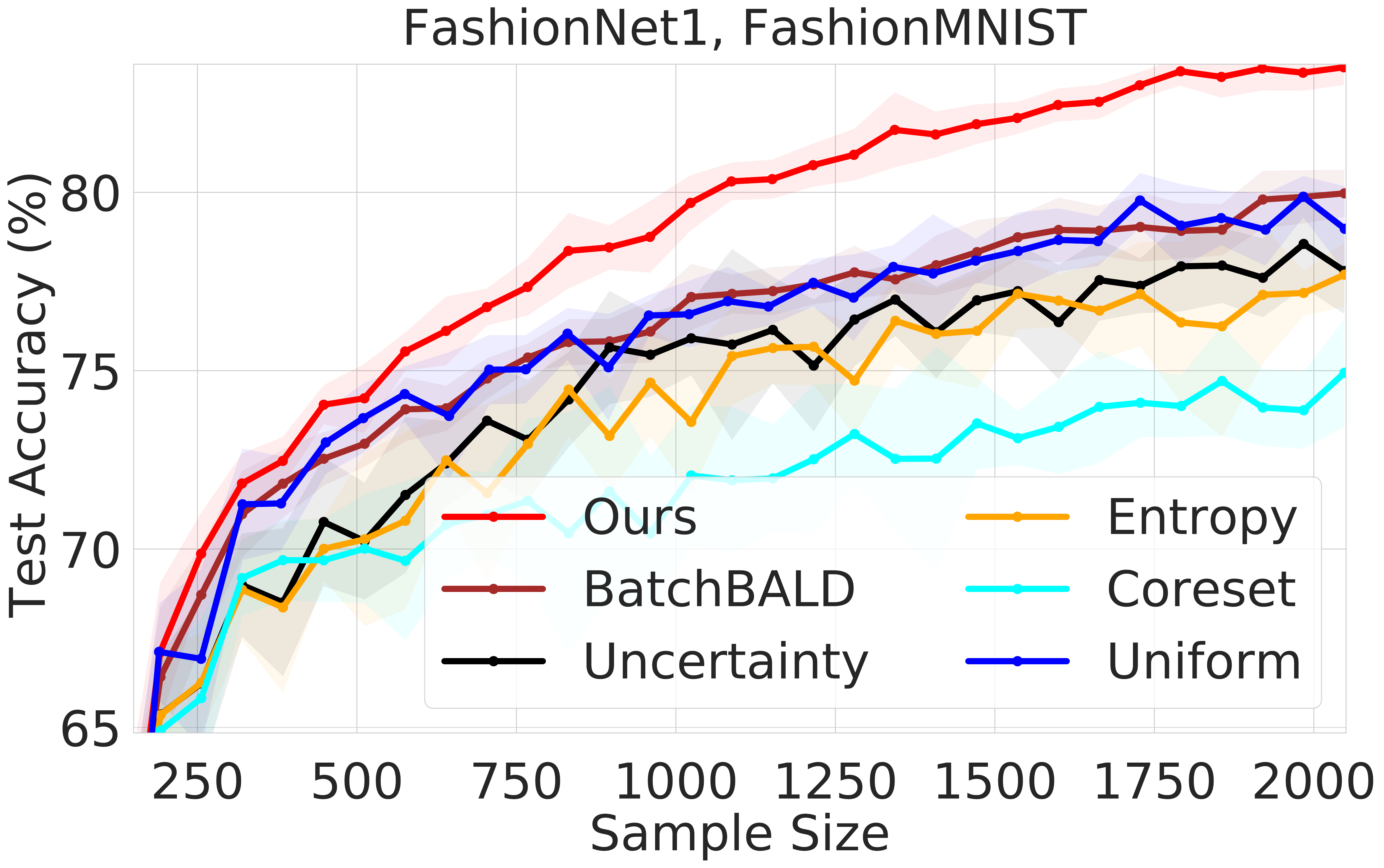} 
 \end{minipage}%
   \hfill
          \begin{minipage}[t]{0.48\textwidth}
  \centering
\includegraphics[width=1\textwidth]{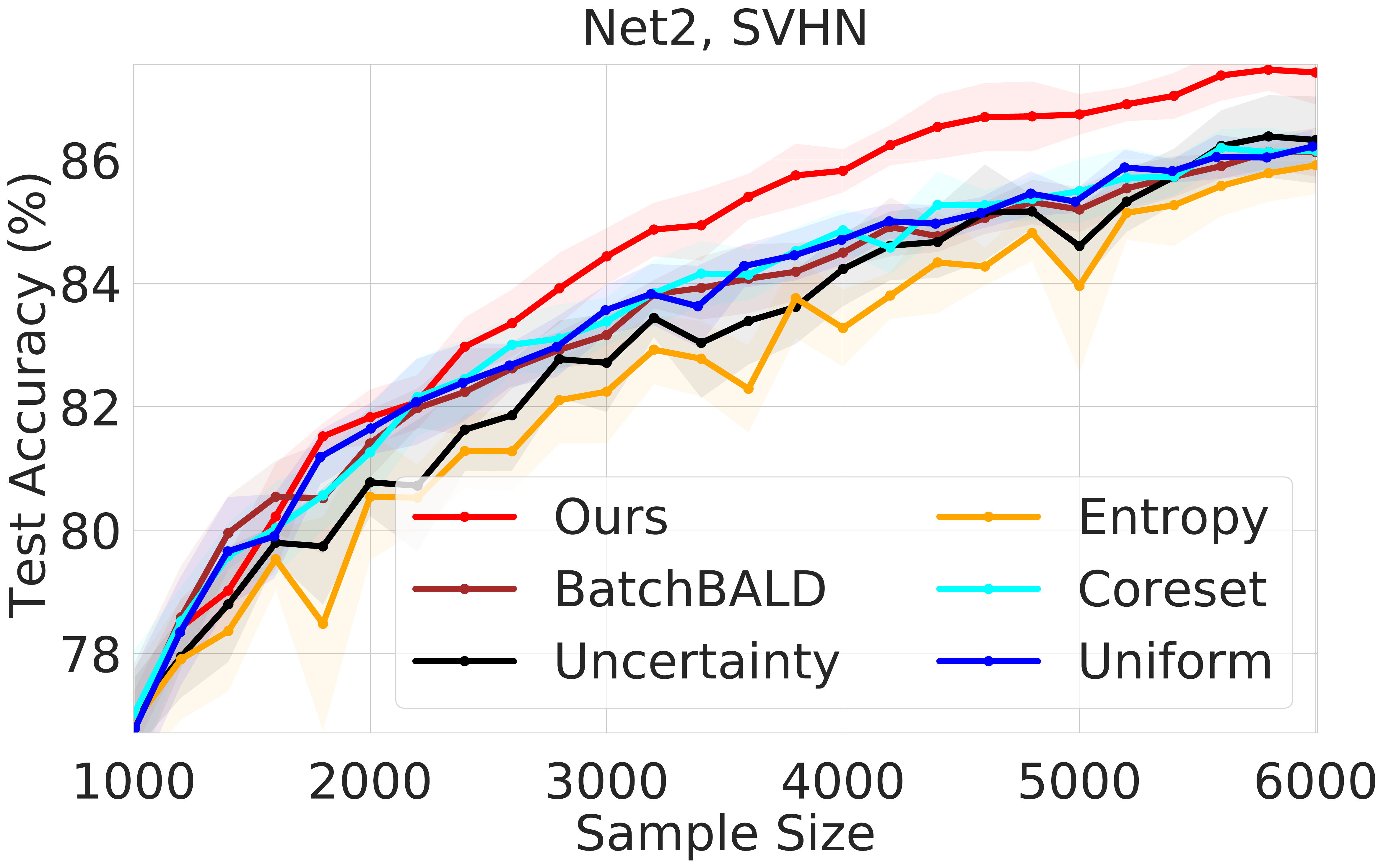}
 \end{minipage}%
   
   \hfill
     \begin{minipage}[t]{0.48\textwidth}
  \centering
\includegraphics[width=1\textwidth]{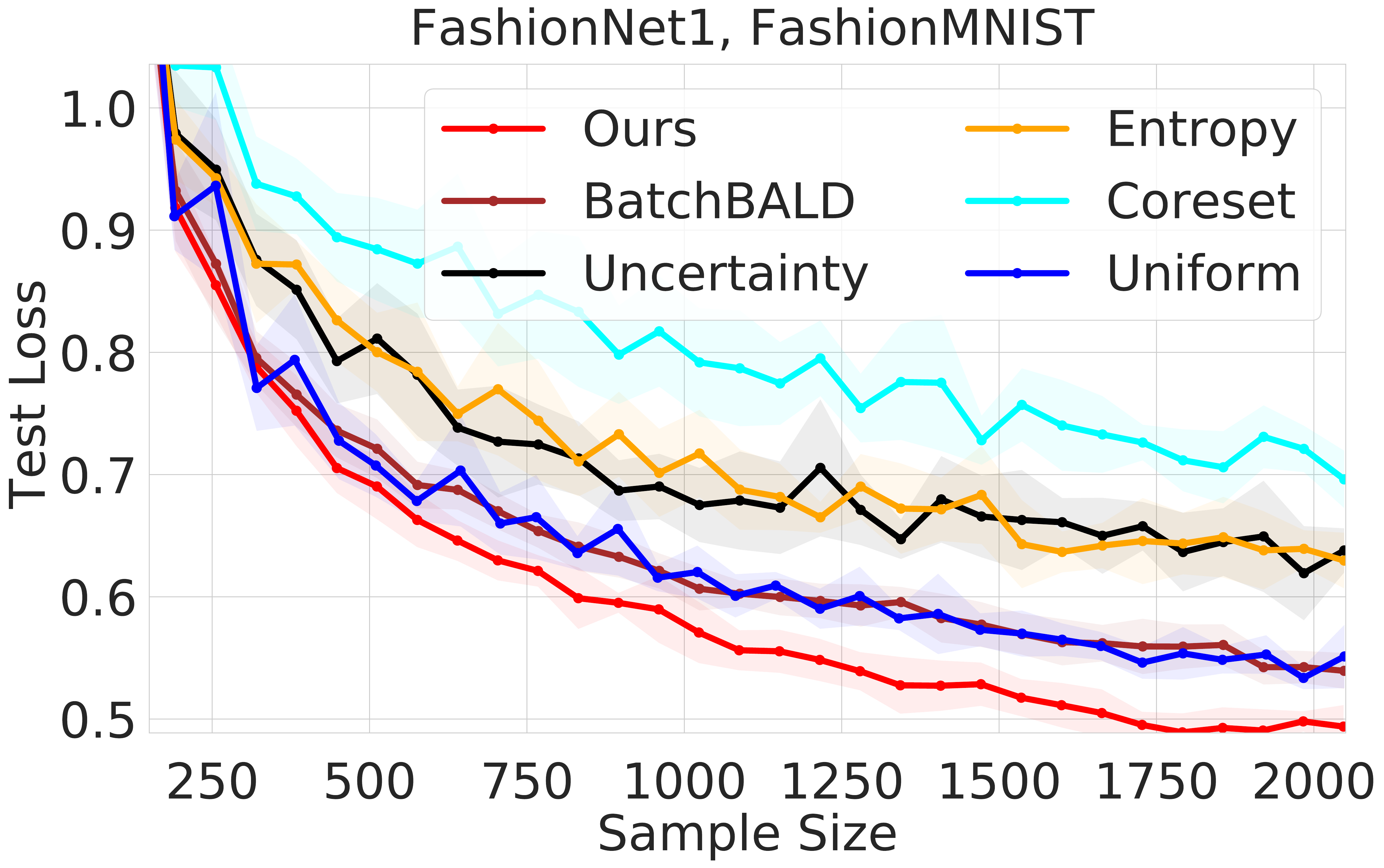}
\subcaption{FashionMNIST (\textsc{Scratch}, 128, 64, 2000)}
 \end{minipage}%
 \hfill
       \begin{minipage}[t]{0.48\textwidth}
  \centering
\includegraphics[width=1\textwidth]{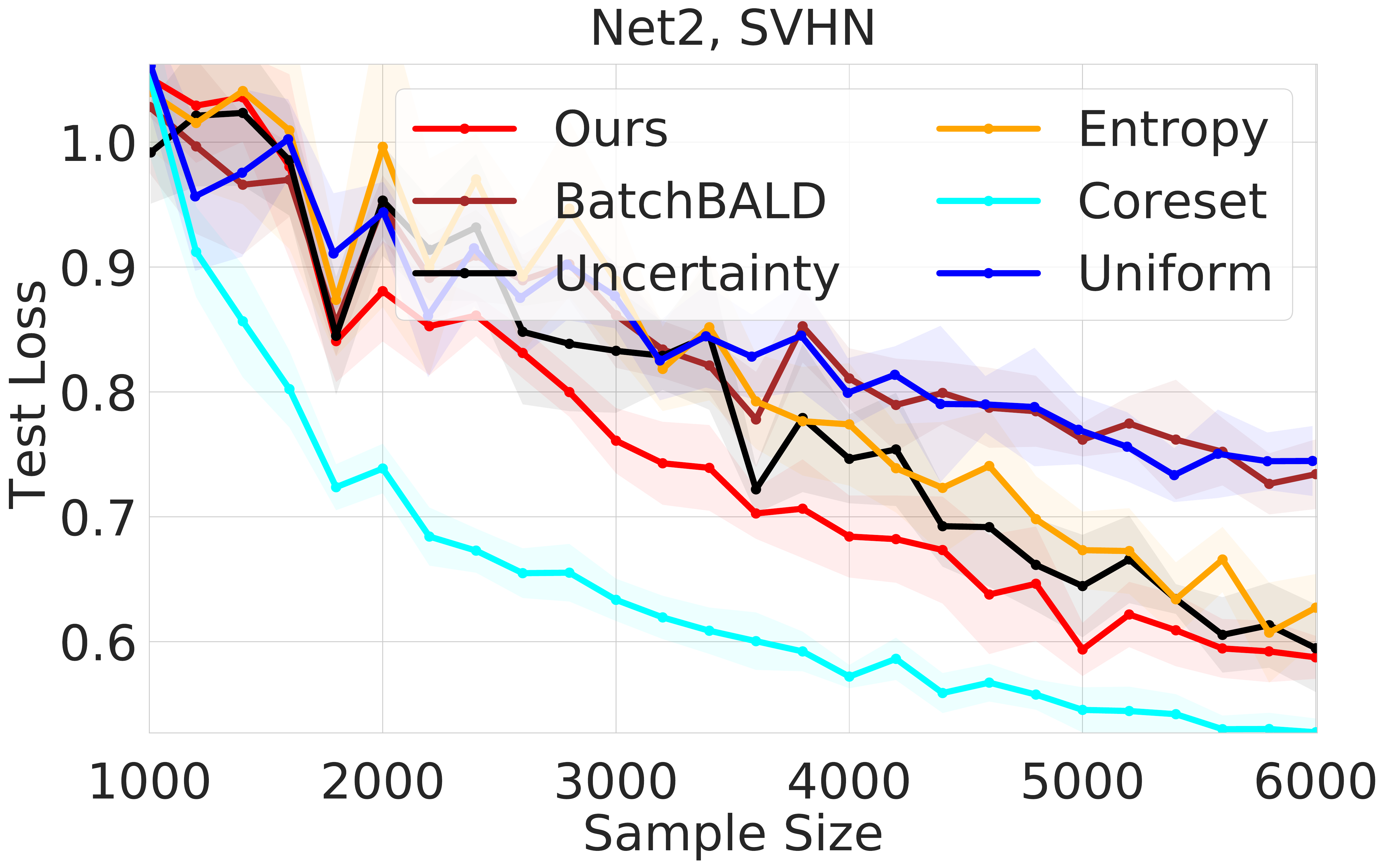}
\subcaption{SVHN (\textsc{Scratch}, 500, 200, 6000)}
 \end{minipage}%
\caption{Evaluations on the FashionNet and SVHNNet~\cite{ash2019deep,ash} architectures, which are different convolutional networks than those used in the main body of the paper (Sec.~\ref{sec:results}). Despite this architecture shift, our approach remains the overall top-performer on the evaluated data sets, even exceeding the relative performance of our approach on the previously used architectures.
}
	\label{fig:varying-archs}
\end{figure*}

\subsection{Shifting Architectures}
In this section, we consider the performance on FashionMNIST and SVHN when we change the network architectures from those used in the main body of the paper (Sec.~\ref{sec:results}). In particular, we conduct experiments on the \emph{FashionNet} and \emph{SVHNNet} architectures\footnote{Publicly available implementation and details of the architectures~\cite{ash2019deep,ash}: \url{https://github.com/JordanAsh/badge/blob/master/model.py} .}, convolutional neural networks that were used for benchmark evaluations in recent active learning work~\cite{ash2019deep,ash}. Our goal is to evaluate whether the performance of our algorithm degrades significantly when we vary the model we use for active learning.

Fig.~\ref{fig:varying-archs} depicts the results of our evaluations using the same training hyperparameters as FashionCNN for FashionNet, and similarly, those for SVHNCNN for SVHNNet (see Table~\ref{tab:hyper}). For both architectures, our algorithm uniformly outperforms the competing approaches in virtually all sample sizes and scenarios; our approach achieves up to 5\% and 2\% higher test accuracy than the second best-performing method on FashionMNIST and SVHN, respectively. The sole exception is the SVHN test loss, where we come second to \textsc{Coreset} -- which performs surprisingly well on the test loss despite having uniformly lower test accuracy than \textsc{Ours} on SVHN (top right, Fig.~\ref{fig:varying-archs}).  Interestingly, the relative performance of our algorithm is even better on the alternate architectures than on the models used in the main body (compare Fig.~\ref{fig:varying-archs}to Fig.~\ref{fig:vision} of Sec.~\ref{sec:results}), where we performed only modestly better than competing approaches in comparison.

\begin{figure*}[htb!]
  
\begin{minipage}[t]{0.49\textwidth}
  \centering
\includegraphics[width=1\textwidth]{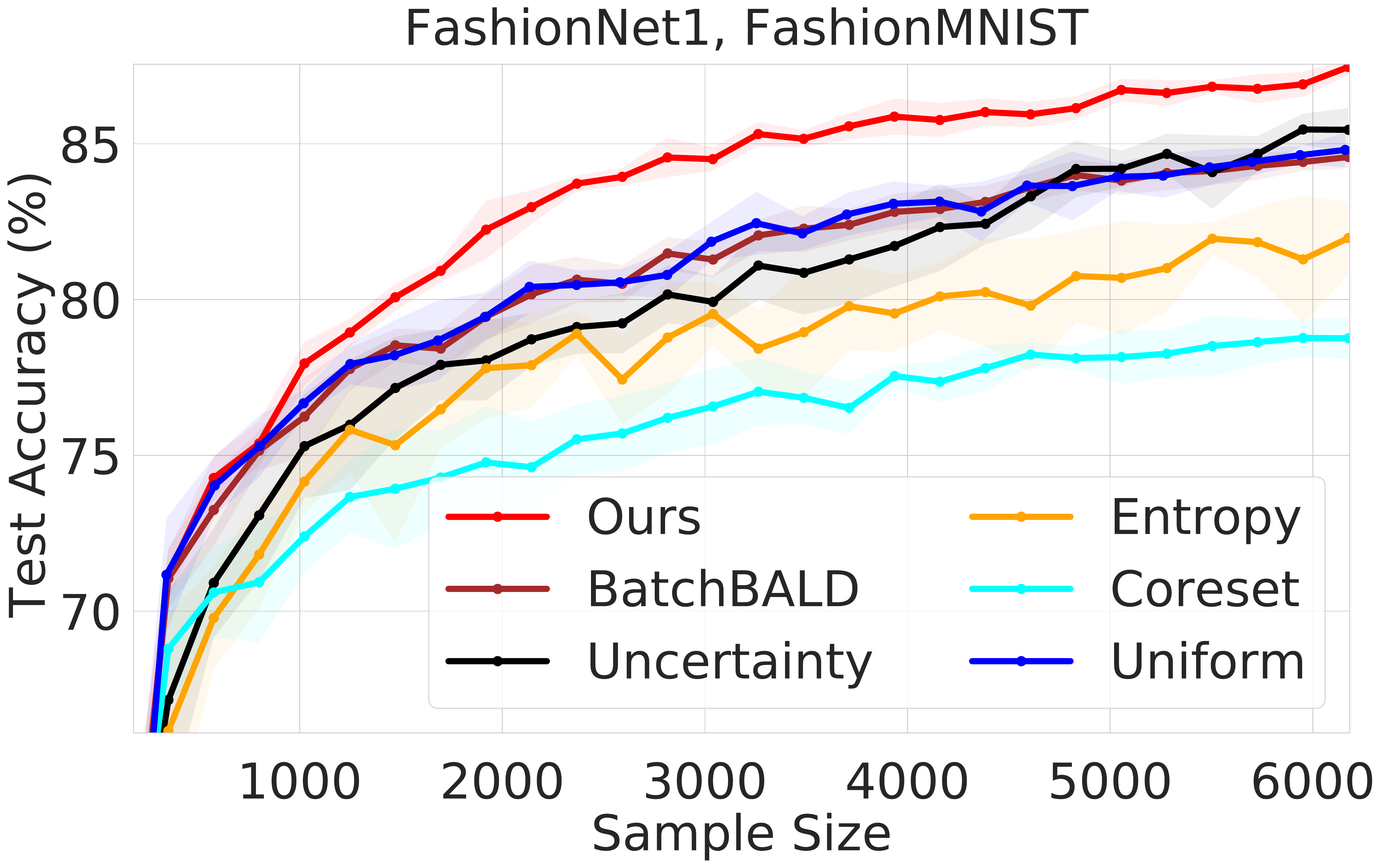}
 \subcaption{Test accuracy (\textsc{Scratch}, 128, 224, 6000)}
 \end{minipage}%
   \hfill
\begin{minipage}[t]{0.49\textwidth}
  \centering
\includegraphics[width=1\textwidth]{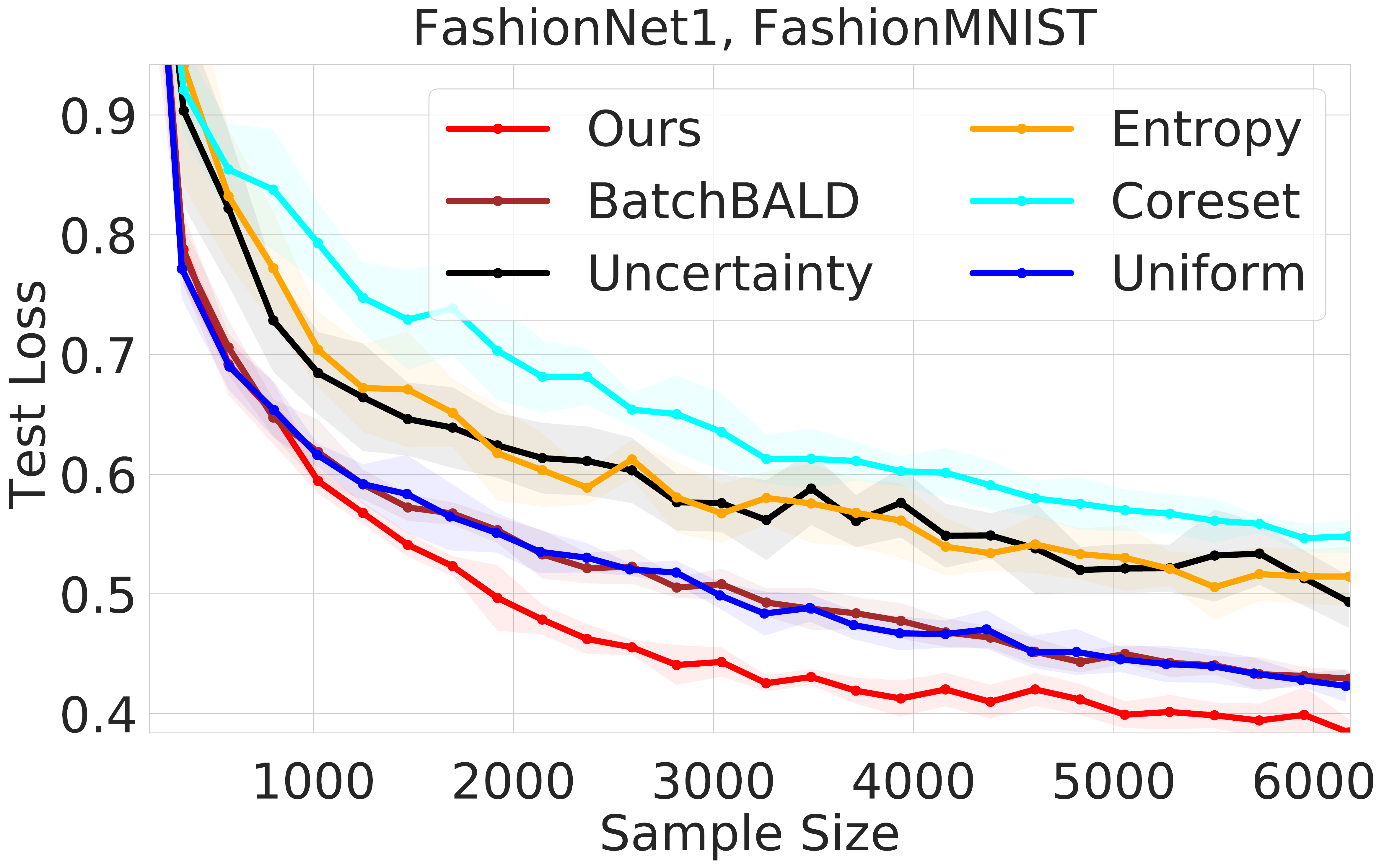}
 \subcaption{Test loss (\textsc{Scratch}, 128, 224, 6000)}
 \end{minipage}%
 \vspace{1px}
    
          \begin{minipage}[t]{0.49\textwidth}
  \centering
\includegraphics[width=1\textwidth]{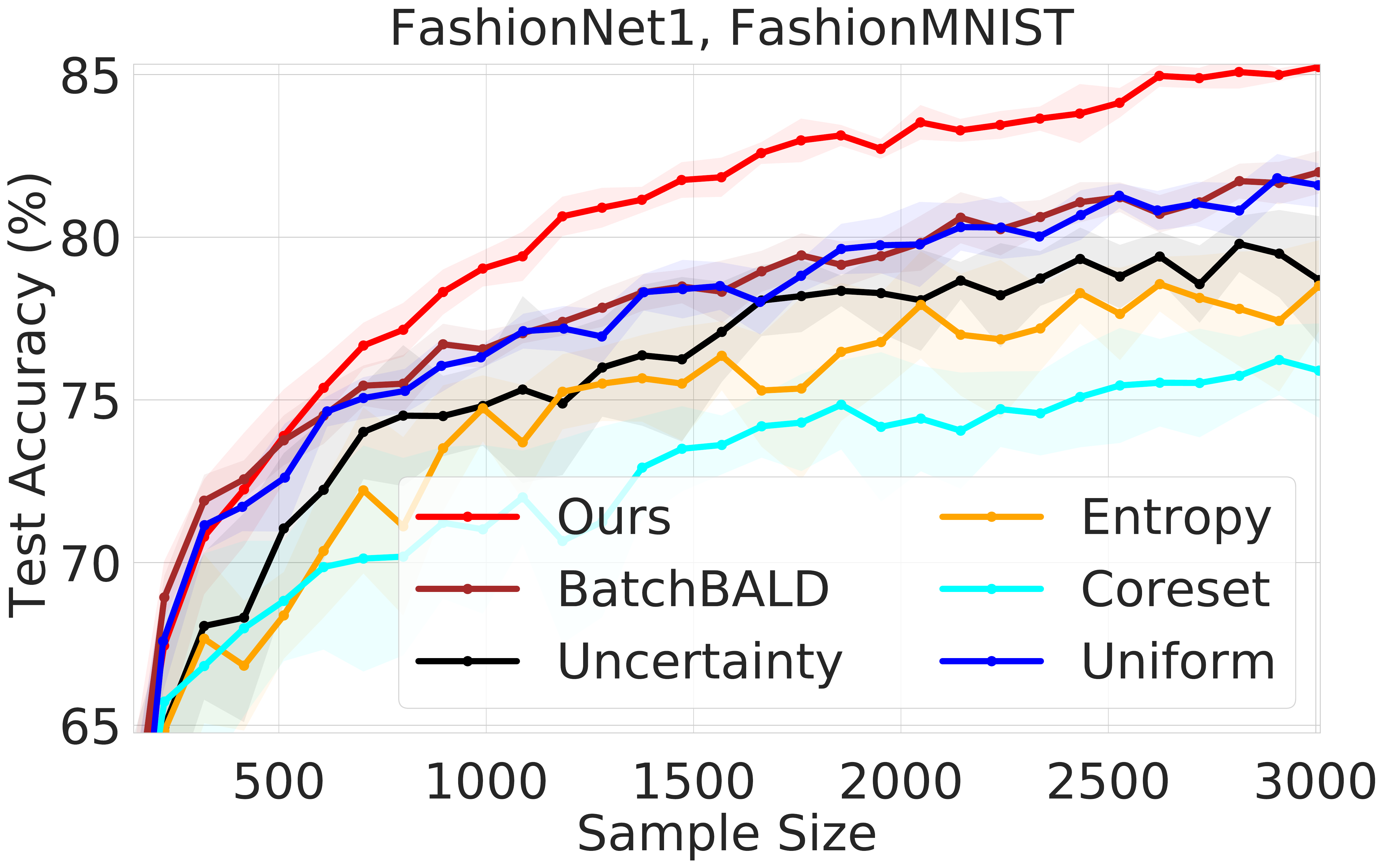}
 \subcaption{Test accuracy (\textsc{Scratch}, 128, 96, 3000)}
 \end{minipage}%
  \hfill
  \begin{minipage}[t]{0.49\textwidth}
  \centering
 \includegraphics[width=1\textwidth]{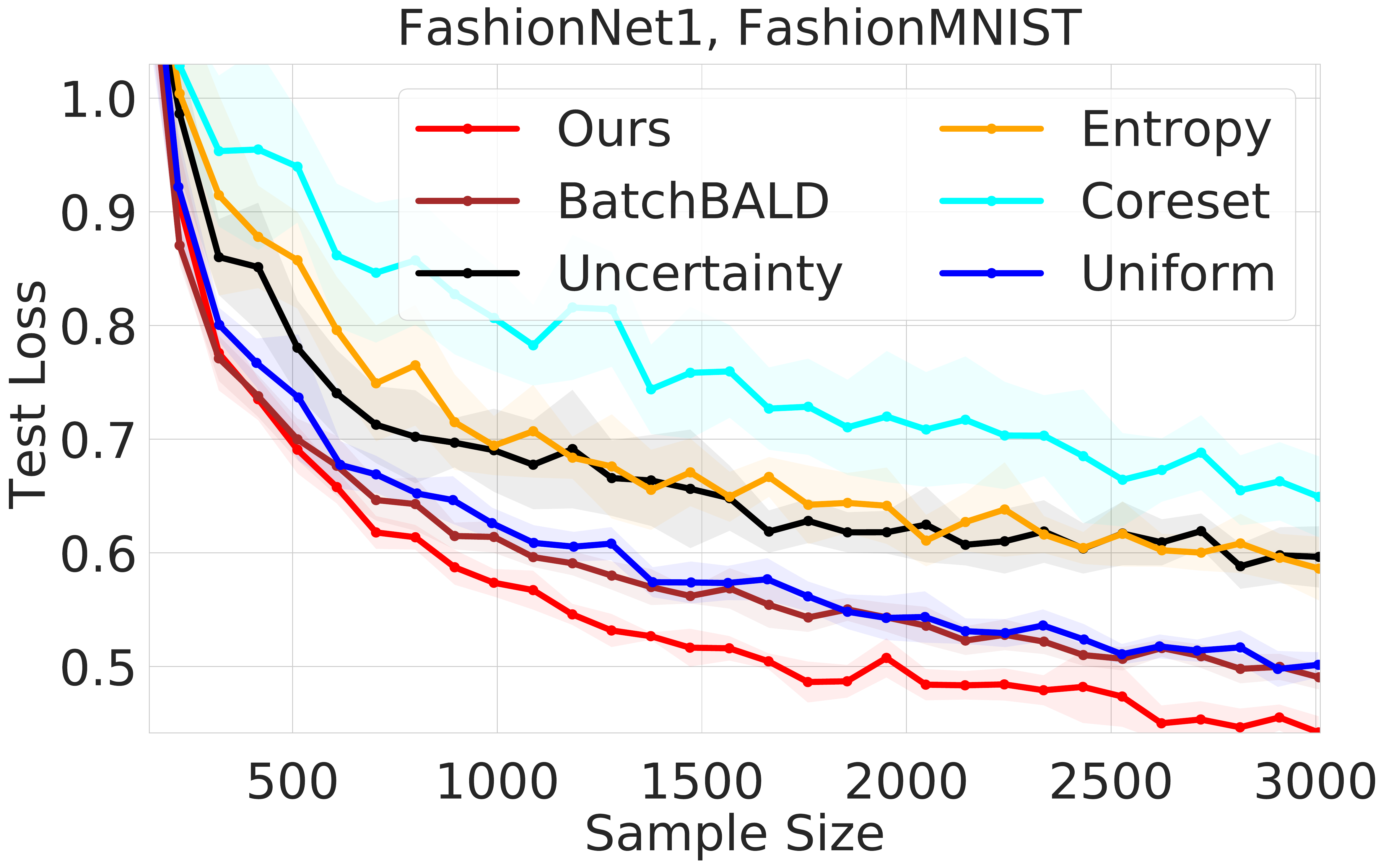}
 \subcaption{Test loss (\textsc{Scratch}, 128, 96, 3000)}
 \end{minipage}%
 \vspace{1px}

     \begin{minipage}[t]{0.49\textwidth}
  \centering
\includegraphics[width=1\textwidth]{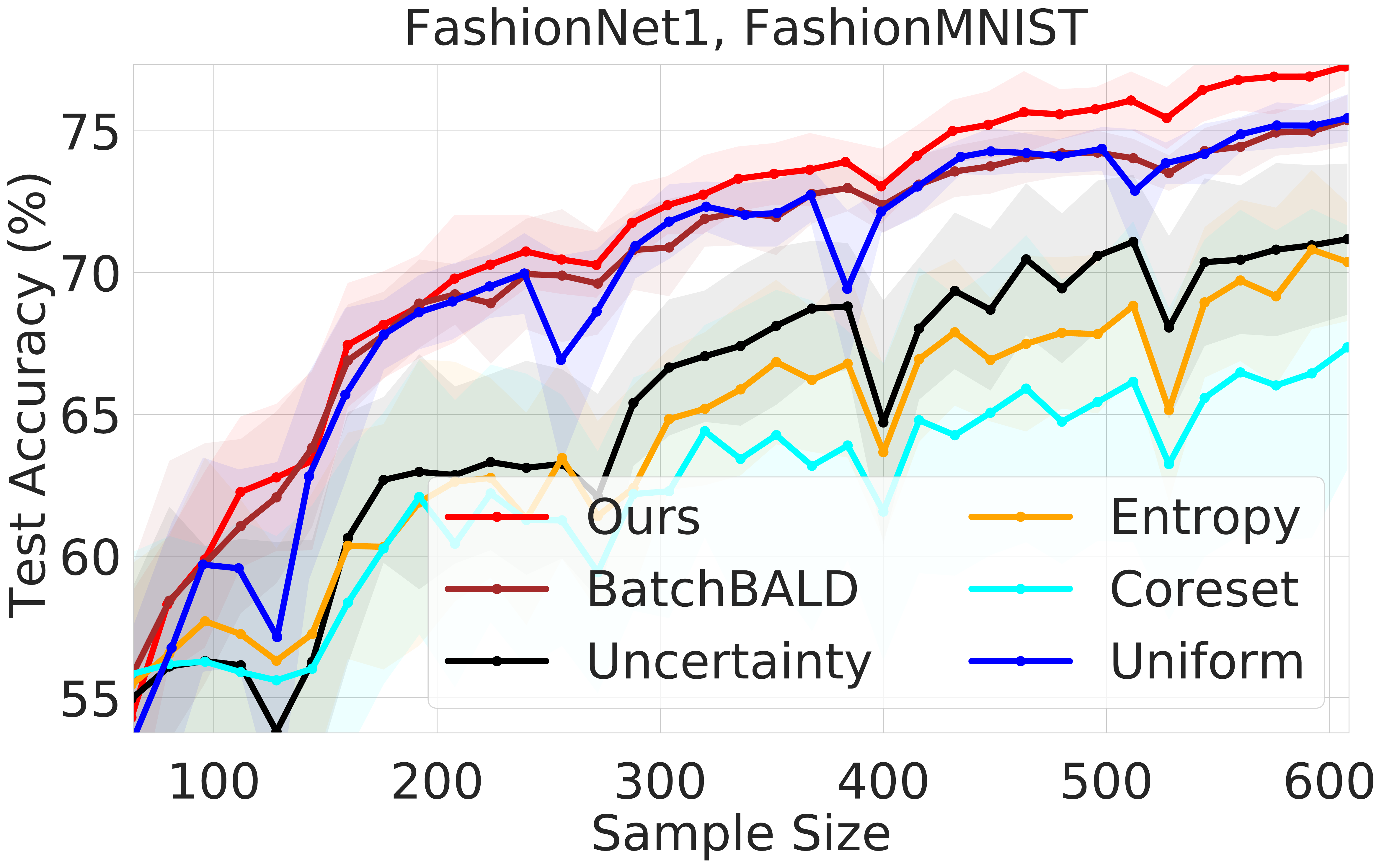}
\subcaption{Test accuracy (\textsc{Scratch}, 64, 16, 600)}
 \end{minipage}%
   \hfill
     \begin{minipage}[t]{0.49\textwidth}
  \centering
\includegraphics[width=1\textwidth]{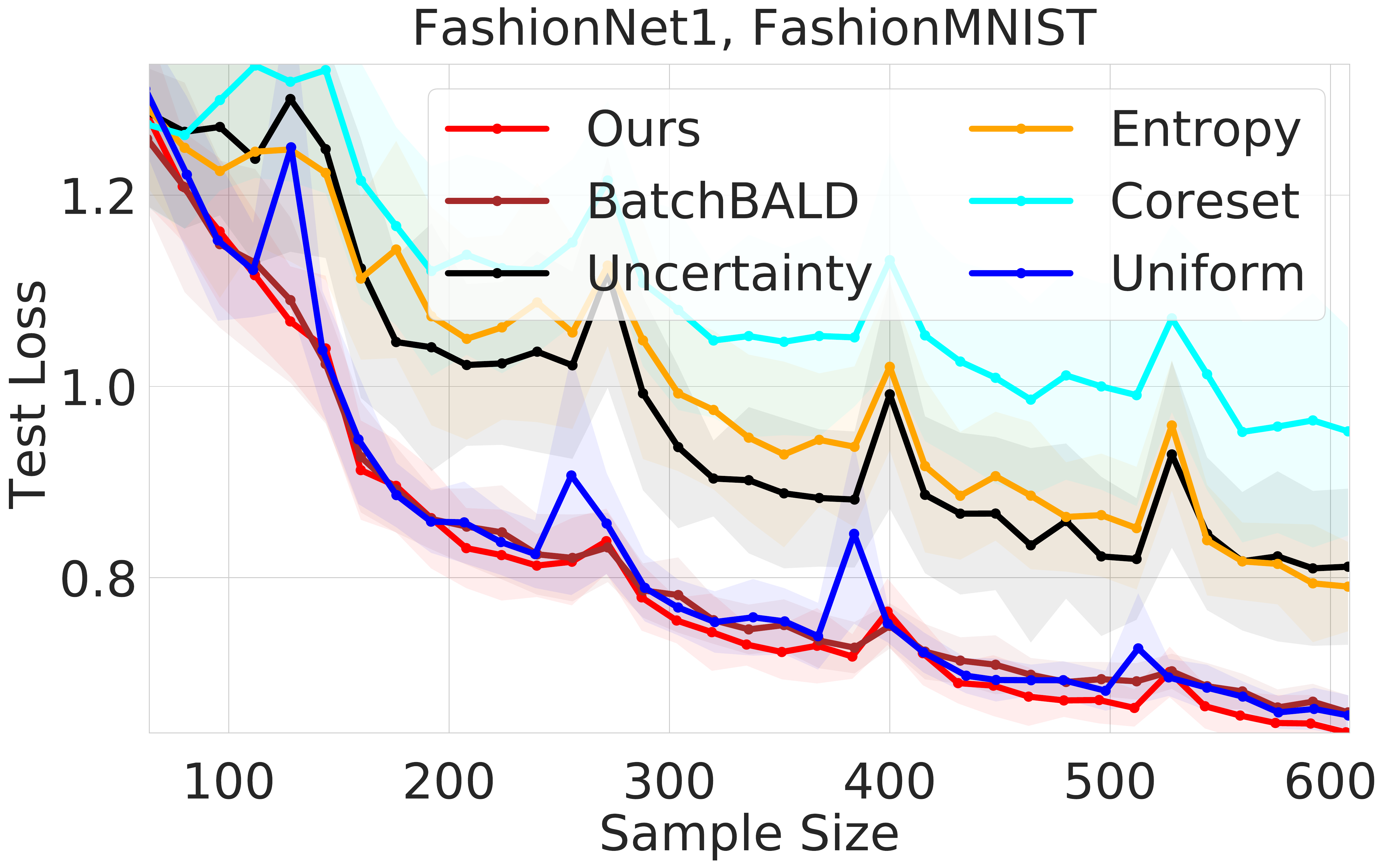}
\subcaption{Test loss (\textsc{Scratch}, 64, 16, 600)}
 \end{minipage}%

\caption{Evaluations with varying active learning configurations using the alternate FashionNet model trained on the FashionMNIST dataset. 
}
	\label{fig:robustness-supp}
\end{figure*}

\subsection{Robustness Evaluations on Shifted Architecture}

Having shown that the resiliency of our approach for both data sets for the configuration shown in Fig.~\ref{fig:varying-archs}, we next investigate whether we can also remain robust to varying active learning configurations on alternate architectures. To this end, we fix the FashionMNIST dataset, the FashionNet architecture, and the \textsc{Scratch} option and consider varying the batch sizes and the initial and final number of labeled points. Most distinctly, we evaluated sample (active learning batch) sizes of $16$, $96$, and $224$ points for varying sample budgets.

We present the results of our evaluations in Fig.~\ref{fig:robustness-supp}, where each row corresponds to a differing configuration. For the first row of results corresponding to a batch size of $224$, we see that we significantly (i.e., up to $3.5\%$ increased test accuracy) outperform all compared methods for all sample sizes with respect to both test accuracy and loss. 
The same can be said for the second row of results corresponding to a batch size of $96$, where we observe consistent improvements over prior work. For the smallest batch size $16$ (last row of Fig.~\ref{fig:robustness-supp}) and sampling budget ($600$), \textsc{Ours} still bests the compared methods, but the relative improvement is more modest (up to $\approx 1.5\%$ improvement in test accuracy) than it was for larger batch sizes. We conjecture that this is due to the fact that the sampling budget ($600$) is significantly lower than in the first two scenarios (up to $6000$); in this data-starved regime, even a small set of uniformly sampled points from FashionMNIST is likely to help training since the points in the small set of selected points will most likely be sufficiently distinct from one another.

\section{Discussion of Limitations \& Future Work}
\label{sec:supp-discussion}
In this paper we introduced $\textsc{AdaProd}^+$, an optimistic algorithm for prediction with expert advice that was tailored to the active learning. Our comparisons showed that $\textsc{AdaProd}^+$ fares better than \textsc{Greedy} and competing algorithms for learning with prediction advice. Nevertheless, from an online learning lens, $\textsc{AdaProd}^+$ can itself be improved so that it can be more widely applicable to active learning. For one, we currently require the losses to be bounded to the interval $[0,1]$. This can be achieved by scaling the losses by their upper bound $\ell_\text{max}$ (as we did for the \textsc{Entropy} metric), however, this quantity $\ell_\text{max}$ may not be available beforehand for all loss metrics. Ideally, we would want a scale-free algorithm that works with any loss to maximize the applicability of our approach. 


In a similar vein, in future work we plan to extend the applicability of our framework to clustering-based active-learning, e.g., \textsc{Coreset}~\cite{sener2017active} and \textsc{Badge}~\cite{ash2019deep}, where it is more difficult to quantify what the loss should be for a given clustering. One idea could be to define the loss of an unlabeled point to be proportional to its distance -- with respect to some metric -- to the center of the cluster that the point belongs to (e.g., $\approx 0$ loss for points near a center). However, it is not clear that the points near the cluster center should be prioritized over others as we may want to prioritize cluster outliers too. It is also not clear what the distance metric should be, as the Euclidean distance in the clustering space may be ill-suited. In future work, we would like to explore these avenues and formulate losses capable of appropriately reflecting each point's importance with respect to a given clustering.


In light of the discussion above, we hope that this work can contribute to the development of better active learning algorithms that build on $\textsc{AdaProd}^+$ and the techniques presented here.

\end{document}